\documentclass{article}

\usepackage{microtype}
\usepackage{graphicx}
\usepackage{subfigure}
\usepackage{booktabs} 
\usepackage{diagbox}

\usepackage{fullpage}

\usepackage{parskip}

\usepackage{natbib}
\usepackage[backref = page]{hyperref}

\usepackage{algorithm, algorithmic}



\usepackage[T1]{fontenc}
\usepackage{amsmath,amssymb,amsfonts,amsthm,textcomp,graphicx,nicefrac,mathtools,mathrsfs, dsfont} 
\usepackage{enumitem}
\usepackage[utf8]{inputenc}
\usepackage{enumitem}

\usepackage[capitalize,noabbrev]{cleveref}
\usepackage{xcolor}
\definecolor{mydarkblue}{rgb}{0,0.08,0.45}
  \hypersetup{ %
    pdftitle={},
    pdfauthor={},
    pdfsubject={},
    pdfkeywords={},
    pdfborder=0 0 0,
    pdfpagemode=UseNone,
    colorlinks=true,
    linkcolor=mydarkblue,
    citecolor=mydarkblue,
    filecolor=mydarkblue,
    urlcolor=mydarkblue,
    }

\usepackage{wrapfig}


\newtheoremstyle{dotless}{}{}{\itshape}{}{\bfseries}{}{ }{}
\theoremstyle{dotless}

\theoremstyle{plain}

\newtheorem{myth}{Theorem}

\newtheorem{myprop}[myth]{Proposition}
\newtheorem{mylem}[myth]{Lemma}

\newtheoremstyle{named}{}{}{\itshape}{}{\bfseries}{.}{.5em}{#1 #3}
\theoremstyle{named}
\newtheorem*{namthm*}{Theorem}


\usepackage{chngcntr}
\makeatletter 

\makeatother
\crefname{myth}{Theorem}{Theorems} 

\newcounter{parentnumber}


\usepackage{enumitem}
\usepackage{graphicx}

\newcommand{\indi}{\mathds{1}}

\providecommand{\argmax}{\mathop\mathrm{arg\, max}}


\newcommand{\Thomp}{\textsc{TS} }
\newcommand{\UCB}{\textsc{UCB} }
\newcommand{\KLUCB}{\textsc{KL-UCB} }
\newcommand{\UCBV}{\textsc{UCB\textsubscript{V}} }
\newcommand{\BayesUCB}{\textsc{BayesUCB} }

\newcommand{\ThompNS}{\textsc{TS}}

\newcommand{\KLUCBNS}{\textsc{KL-UCB}}

\newcommand{\BayesUCBNS}{\textsc{BayesUCB}}


\newcommand{\good}[2]{\mathcal{G}^{#1}_{#2}}

\newcommand{\bbP}{\mathbb{P}}

\newcommand{\nuhat}{\widehat{\nu}^k_t}
\newcommand{\muhat}{\widehat{\mu}^k_t}

\newcommand{\Perm}{\Pi}

\newcommand{\hist}{\mathscr{H}}

\usepackage{csquotes}

\usepackage[textsize=tiny]{todonotes}


\title{Strategies for Safe Multi-Armed Bandits\\ with Logarithmic Regret and Risk }
\author{Tianrui Chen\\Boston University\\\texttt{trchen@bu.edu} 
\and
Aditya Gangrade\\Carnegie Mellon University\footnote{The bulk of this work was done whilst A.G. was a graduate student at Boston University}\\\texttt{agangra2@andrew.cmu.edu}
\and
Venkatesh Saligrama\\ Boston University\\\texttt{srv@bu.edu}}

\date{\vspace{-2\baselineskip}}
\begin{document}

\maketitle

\begin{abstract}

We investigate a natural but surprisingly unstudied approach to the multi-armed bandit problem under safety risk constraints. Each arm is associated with an unknown law on safety risks and rewards, and the learner's goal is to maximise reward whilst not playing unsafe arms, as determined by a given threshold on the mean risk. 

We formulate a pseudo-regret for this setting that enforces this safety constraint in a per-round way by softly penalising any violation, regardless of the gain in reward due to the same. This has practical relevance to scenarios such as clinical trials, where one must maintain safety for each round rather than in an aggregated sense.

We describe doubly optimistic strategies for this scenario, which maintain optimistic indices for both safety risk and reward. We show that schema based on both frequentist and Bayesian indices satisfy tight gap-dependent logarithmic regret bounds, and further that these play unsafe arms only logarithmically many times in total. This theoretical analysis is complemented by simulation studies demonstrating the effectiveness of the proposed schema, and probing the domains in which their use is appropriate.

\end{abstract}

\section{Introduction}
We consider the safety constrained multi-armed bandit problem, where each \emph{arm}, $k \in [1:K]$ is modelled by a tuple, consisting of a stochastic \emph{reward}, of mean $\mu^k,$ and an associated stochastic \emph{safety-risk}, of mean $\nu^k$. Upon playing an arm, the learner observes noisy instances of the reward and safety-risk. The learner is provided with a \emph{tolerated risk level}, denoted $\alpha,$ and the goal of the \emph{safe bandit problem} is to maximise the reward gained over the course of play, while ensuring that unsafe arms---those for which $\nu^k > \alpha$---are not played too often.

We propose the following \emph{regret} formulation to model the above criteria. Let $\mu^*$ be the mean reward of the largest safe action, i.e, the largest $\mu^k$ over arms such that $\nu^k \le \alpha.$ Let $A_t$ be the arm pulled by the algorithm at time $t$. We study \begin{equation} \label{eqn:regret_def} \mathcal{R}_T := \sum_{t \le T} \max( \mu^* - \mu^{A_t}, \nu^{A_t} - \alpha). \end{equation}
Before describing the results, let us sketch a scenario of particular interest, which informs our formulation. 

\noindent \textbf{Clinical Trials.} Trial drugs have both positive (eg. curing a disease) and negative side-effects (headaches, nausea, etc) on a patient in a clinical trial, and it is as much in the interest of a patient to ensure that negative side effects are limited as it is to ensure that the drug is effective \citep[e.g.][]{genovese2013efficacy}. This scenario motivates the problem of choosing drug and dosage (arms) that have the maximum positive response while ensuring that the side-effects remain below some threshold $\alpha$. Since each patient responds differently, the observed response and the manifestation of side-effects for a specific patient can be modelled as random-variables, with the corresponding means representing population averages. Importantly, for such a scenario, safety must be accounted for in a per-round sense - it does no good to alternate between assigning ineffective placebos and effective but harmful doses. Instead we need to ensure that individuals are not exposed to undue risk while accruing benefits.

\noindent \textbf{How does our formulation account for this scenario?}
\begin{itemize}[wide, leftmargin=8pt,nosep]
\item \emph{Risk Per Round.} Regret ensures that unsafe arms are rarely played in a per-round (per-patient) sense rather than ensuring safety in an overall sense--for any $k \neq k^*,$ at least one of  $\mu^{k} - \mu^*$ or $\nu^k - \alpha$ must be positive, and so benefits in efficacy due to unsafe dosages are discounted. 
\item \emph{Small safety violations are penalized less (smoothness).} Small violations of negative side-effects is a permissible risk (elevated nausea level than desired), worth taking on for a few patients, in the hope of finding a drug/dosage that is effective for the population. Our penalty on safety violations is smooth. 
\item \emph{Control of Cumulative risk and Violations} Since choosing an infeasible arm in any round contributes a constant amount to the regret, a small $\mathcal{R}_T$ further ensures that the cumulative safety risk and the cumulative safety violations (i.e. times such that $\nu^{A_t} > \alpha$) are also small. 
\end{itemize}
We next describe our main technical contributions. 

{\bf Four Optimistic Strategies.} We explore \emph{doubly optimistic} index-based strategies for choosing arms. These maintain optimistic indices for both the reward and safety risk of each arm, and proceed by first developing a set of plausibly safe actions using the safety indices, and then choose the arm with the highest reward index to play, thus encouraging sufficient exploration. In standard bandits there are two broad classes of such index-based strategies - those based on {\it frequentist confidence bounds}, and those based on {\it Bayesian posteriors.} This suggests four natural variants in the safe bandit case, through two choices for each of the reward and safety indices. We explicitly study three of these - first when both indices are frequentist, second when the safety index is left frequentist but the reward index is replaced by {\it Thompson sampling}, and finally when both indices are based on Bayesian methods. While left explicitly unstudied, the case of frequentist reward and Bayesian safety indices follows naturally from our analysis.

{\bf Logarithmic Regret Bounds.} In all cases, we show that these strategies admit strong gap-dependent logarithmic regret rates. Further each of these also ensure that the number of times any unsafe arm is played at all (i.e., $\sum \indi\{\nu^{A_t} > \alpha\}$) is similarly logarithmically bounded. Finally we show a lower bound which demonstrates that our regret bounds are tight in the limit of large time horizons. The proofs adapt existing results of bandit theory to argue that for well designed safety indices, the optimal arm $k^*$ always remains valid, but any unsafe arms are quickly eliminated. Further, so long as $k^*$ remains valid, standard approaches show that inefficient arms cannot be played too often. {\it An interesting consequence is that the play of strictly dominated arms - those that are both unsafe and inefficient - is limited by the larger of the two gaps.}

{\bf Empirical Results.} We complement the above theoretical study with simulations. First, we practically illustrate that prior policy-based approaches to the safe and constrained bandits do not yield favourable play in our scenario. Next, we implement our proposals, and both illustrate that the methods indeed meet the theoretical guarantees, and further contextualise their relative merits in a practical sense. The broad observation regarding the latter is that Thompson sampling based methods tend to offer better performance in terms of means.

\subsection{Related Work}

Bandit problems are exceedingly well studied, and a plethora of methods with subtle differences have been established. We refer the reader to the recent book of \citet{lattimore_szepesvari_book} for a thorough introduction.

We first describe prior approaches to constrained bandit problems from a formulational point of view. The most important aspect of this is that prior formulations tend to constrain play in an aggregate sense. This raises issues when we need to ensure safety in a per-round sense, as is illustrated by a running example. We then contextualise our methodological proposals with respect to the prior work, and finally discuss pure exploration in the safe-bandit setting.

\textbf{Globally Constrained Formulations} The theory of bandits with global constraints was initiated by \citet{bandits_with_knapsacks_2013}, and extended by \citet{agrawal2014_BwCR}. Specialised to our context, these works constrain the total number of adverse effects whilst matching the performance of the optimal dynamic policy that is aware of all means. More concretely, suppose that the safety risk observed is a random variable $S_t$. \citet{bandits_with_knapsacks_2013} enforce the hard constraint that $\left(\sum S_t - \alpha T\right) \le 0,$ while \citet{agrawal2014_BwCR} relax this into a second regret $\mathcal{S}_T = \max\left(0, \sum S_t - \alpha T\right),$ and ensure that this is small. 

Such aggregate safety formulation is lacking from our perspective, as is illustrated by the following simple example of two arms with means \begin{equation}\label{eq:example} (\mu^1, \nu^1) = (\nicefrac{1}{2}, 0), \quad (\mu^2, \nu^2) = (1,1).\end{equation} Due to the global constraint, the optimal dynamic policy is to pull arm $2$ for $\alpha T$ rounds, and then switch to pulling arm $1$. A low regret algorithm must then also pull arm $2$ $\Omega(T)$ times. However, such play undesirably exposes a linear number of rounds to the very unsafe action $2$. Our formulation instead would penalise every play of arm $2$ by a cost of $(1-\alpha),$ and thus effective schema would only play arm $2$ sublinearly many times. It should be noted that since the constraint is applied in a per-round way, the optimal dynamic policy in our case is supported on a single arm.

In passing, we also mention the conservative bandit problem \citep{wu2016conservative}, which only considers rewards, and enforces a running aggregate constraint that for any round $t$, $\sum_{s \le t} \mu^{A_s} \ge (1-\alpha) t \mu^{k_0}$. While an interesting variation, we note that such a running constraint on safety-risk would have similar issues as the above in our situation.

\textbf{Per-round Constraints} The recent work of \citet{pacchiano2021stochastic} studies the safe bandit problem with two crucial differences from us. Firstly, the action space is lifted from single arms to policies (i.e. distributions) over arms, denoted $\pi_t$, and secondly, the hard per-round constraint $\langle \pi_t ,\nu \rangle \le \alpha$ is enforced. Of course, actual arms are selected by sampling from $\pi_t$. The regret studied is $\sum \langle \pi^* - \pi_t, \mu\rangle,$ where $\pi^*$ is the optimal static safe policy, i.e., the maximiser of $\langle \pi, \mu\rangle$ subject to $\langle \pi, \nu \rangle \le \alpha.$ Exploration is enabled by giving the scheme an arm $k_s$ known a priori to be safe, and by spending the slack $\alpha - \nu^{k_s}$ as room for exploration in $\pi_t$. 

While ostensibly constrained at each round, this formulation suffers from similar issues as the previously discussed globally constrained formulations since the optimal static policy is only safe in aggregate. Indeed, in the previous example (\ref{eq:example}), the optimal $\pi^*$ is $(1-\alpha, \alpha),$ and so a low regret algorithm must place large mass on the unsafe arm $2$ in most rounds, therefore exposing about $\Omega(T)$ rounds to it.

A similar approach, but crucially without the policy action space, was taken by \citet{amani_safe_linear, moradipari2021safe_thompson} for in the linear bandit setting. These papers also study hard round-wise safety constraints, and again utilise a known safe action, as well as the continuity of the action space to enable sufficient exploration. We note that the particulars of the signalling model adopted by \citet{amani_safe_linear} paper preclude extending their results to the multi-armed setting, and while the model of \citet{moradipari2021safe_thompson} does admit such extension, the scheme proposed fundamentally relies on having a continuous action space with a linear safety-risk, and cannot be extended to multi-armed settings without lifting to policy space.


\textbf{Methodological Approaches} The bulk of the previous papers are based on frequentist confidence bounds, with two variants. Similar to our Alg.~\ref{alg:DOCB}, \citet{agrawal2014_BwCR} use doubly optimistic methods that maintain optimistic upper bounds on the rewards and lower bounds on the risk, and play the policy that maximises reward upper bounds while being safe with respect to the risk lower bounds. In contrast, \citet{pacchiano2021stochastic, amani_safe_linear, wu2016conservative} all use optimistic-pessimistic methods, which instead maintain upper bounds on both the rewards and safety risk and play the actions with maximum reward upper bound whilst being safe with respect to the stringent risk upper bounds. \citet{moradipari2021safe_thompson} take a similar pessimistic approach, but replace the reward upper bounds with a Thompson sampling procedure that is similar in spirit to our Alg.~\ref{alg:STOP}, although this uses optimistic safety indices. We also further study a fully Bayesian approach in Alg.~\ref{alg:TBU}.

\textbf{Pure Exploration with Safety} \citet{katz2018feasible, katz2019top} design procedures for finding the best \emph{feasible} arm based on a combination of optimistic and pessimistic confidence bounds that is typical of pure exploration approaches. An interesting variant of this problem was studied in a recent preprint of \citet{wang2021best}, who associate a continuous `dosage' parameter with each arm, now interpreted as a single drug, with the understanding that both reward and risk grow monotonically with dosage. These should be compared to the dose-finding bandit problem \citet{aziz2021multi}, which seeks to identify a dose level out of $K$ options that minimises $|\nu^k - \alpha|,$ with the intuition being that higher doses are more effective, and so should be maximised, but without exceeding the safety threshold by much. The dose-finding approach relies strongly on this assumed monotonicity. This models the scenario of a single drug, but is inappropriate for the setting of multiple drugs that are trialled together, which is better represented as a constrained optimisation problem (as studied by the former papers). Our formulation takes precisely this view, but from the perspective of controlling regret rather than identification. Note that our smooth penalty for safety violation, $\max(0,\nu^k - \alpha),$ bears similarities to the absolute value loss $|\nu^k - \alpha|,$ where again a small violation of safety is not penalised strongly.

\section{Definitions and Setup}

An instance of the \emph{safe bandit problem} is defined by a risk level $\alpha \in [0,1],$ a natural $K \ge 2$, corresponding to a number of arms, and a corresponding vector of probability distributions, $(\mathbb{P}^k)_{k \in [1:K]},$ each entry of which is supported on $[0,1]^2.$ We will represent the corresponding random vector as two components $(R,S),$ which are termed the reward and safety-risk of a draw from $\mathbb{P}^k$. We further associate two vectors $\mu, \nu \in [0,1]^K,$ corresponding to the \emph{mean reward and safety-risk} of each arm, i.e \begin{align*}
    (\mu^k, \nu^k) := \mathbb{E}_{(R,S) \sim \mathbb{P}^k}[(R,S)].
\end{align*}  
$R$ and $S$ need not be independent - this has little effect on the subsequent study, since each is marginally bounded.

The scenario proceeds in rounds, denoted $t \in \mathbb{N}$. At each $t$, the learner (i.e.~an algorithm for the bandit problem) must choose an \emph{action} $A_t \in [1:K],$ corresponding to `pulling an arm.' Upon doing so, the learner receives samples $(R_t, S_{t}) \sim \mathbb{P}^{A_t}$ independently of the history. The learner's \emph{information set} at time $t$ is $\hist_{t-1} = \{(A_s, R_s, S_s): s < t\},$ and the action $A_t$ must be adapted to the filtration induced by these sets. The learner is unaware of any properties of the laws $\mathbb{P}^k$ beyond the fact that they are supported on $[0,1]^2$. 

The \emph{competitor}, representing the \emph{best safe arm} given the safety constraint and the mean vectors, is defined as \[ k^* = \argmax_{k \in [1:K]} \mu^k \textrm{ s.t. } \nu^k \le \alpha, \] and its mean reward and safety risk are denoted as $\mu^*, \nu^*$. We will use this convention throughout - for any symbol $\mathfrak{s}^k,$ we set $\mathfrak{s}^* = \mathfrak{s}^{k^*}.$ We can ensure that the problem is feasible by including a no-reward, no-risk arm of means $(0,0)$ - this might correspond to a placebo in a clinical trial. Without loss of generality, we will assume that $k^*$ is unique. We define the \emph{inefficiency gap} $\Delta^k$ and the \emph{safety gap} $\Gamma^k$ of playing an arm $k$ as \begin{align*}
    \Delta^k := 0 \vee (\mu^* - \mu^k),\quad     \Gamma^k := 0 \vee (\nu^k - \alpha),
\end{align*} where $a \vee b := \max(a,b),$ and we will also use $a \wedge b :=\min(a,b)$. Note that $\Delta^k \vee \Gamma^k > 0$ for $k \neq k^*$.

The performance of a learner for the safe bandit problem is measured by the (pseudo-) \emph{regret} of (\ref{eqn:regret_def}), which may also be written as \( \mathcal{R}_T := \sum_{1 \le t \le T} \Delta_{A_t} \vee \Gamma_{A_t}. \)

Further, with each arm $k$, we associate state variables $N_t^k$ denoting the number of times it has been played up to time $t$, and $R_t^k, S_t^k$ denoting the total rewards and safety risk incurred on such rounds. More formally, \begin{align*}
    N_t^k := \sum_{s < t} \indi\{ &A_t = k\},\\
    R_t^k := \sum_{s < t} \indi\{A_t = k\} R_t, \quad &\textit{\&}\quad    S_t^k := \sum_{s < t} \indi\{A_t = k\} S_t.
\end{align*}
Similarly, $N_t^*, R_t^*, S_t^*$ denote the corresponding variables for $k^*$. Notice that $\mathcal{R}_t = \sum_{k \neq k^*} (\Delta^k \vee \Gamma^k) N_t^k.$ We also use the notation $\muhat := R_t^k/N_t^k, \nuhat := S_t^k/N_t^k$.

Since controlling it is of natural interest, we define the number of times an unsafe arm is played as \[ \mathcal{U}_T := \sum_t \indi\{\nu^{A_t} > \alpha\}. \]

Finally, for $a,b \in [0,1],$ we use the notation \[d(a\|b) := a \log \frac{a}{b} + (1-a) \log \frac{1-a}{1-b} \] to denote the KL divergence between Bernoulli laws with means $a$ and $b.$ We will also need the notation \begin{align*}
    d_<(a\|b) := d(a\|b) \indi\{a < b\},\\ d_>(a\|b) := d(a\|b) \indi\{a > b\}.
\end{align*} 

\textbf{Remark} While the formulation focuses on a single safety-constraint, this may be extended. For example, we may posit a safety-risk vector $S\in [0,1]^d$, and demand that the corresponding (vector) means $\nu^k$ should lie in some known safe set $\mathcal{S}.$ Natural extensions of the methods below would control, e.g., $\sum \max(\mu^{A_t} - \mu^*, \mathrm{dist}(\nu^{A_t},\mathcal{S})).$ We focus on a single constraint for clarity and ease of exposition.

\section{Doubly Optimistic Confidence Bounds}\label{sec:DOCB}

The use of optimistic confidence bounds is well established in standard bandits \citep[e.g. Ch. 7-10][]{lattimore_szepesvari_book}. The idea is that pulling according to the maximum optimistic bound on the means encourages exploration, while efficiency follows because the confidence bounds exploit information to shrink towards the means, eventually giving evidence for the inefficiency of suboptimal arms.

\begin{wrapfigure}[15]{r}{0.5\textwidth}
\vspace{-2\baselineskip} \begin{minipage}{0.5\textwidth}
\begin{algorithm}[H]
   \caption{Doubly Optimistic Confidence Bounds}
   \label{alg:DOCB}
   
\begin{algorithmic}[1] 
   \STATE \textbf{Input}: $K,$ functions $U, L$.
   \STATE \textbf{Initialise}: $\hist_0 \gets \varnothing$
   \FOR{$t = 1, 2, \dots$}
   \IF{$t \le K$}
   \STATE $A_t \gets t$
   \ELSE
   \STATE $\forall k, L^k_t \gets L(t, \hist_{t-1}, k).$ 
   \STATE $\Perm_t \gets \{k : L^k_t \le \alpha\}.$ 
   \STATE $\forall k \in \Perm_t, U^k_t \gets U(t, \hist_{t-1}, k).$
   \STATE $A_t \gets \argmax_{k \in \Perm_t} U^k_t.$
   \ENDIF
   \STATE Pull $A_t,$ receive $(R_t, S_t) \sim \mathbb{P}^{A_t}$.
   \STATE Update $\hist_t \gets \hist_{t-1} \cup \{(A_t, R_t, S_t)\}$. 
   \ENDFOR
\end{algorithmic}
\end{algorithm} 
    \end{minipage}
\end{wrapfigure}

The idea behind doubly optimistic bounds is identical - we maintain lower bounds on safety-risk $L^k_t$ and upper bounds on rewards $U^k_t$ such that $L^k_t \le \nu^k$ and $U^k_t \ge \mu^k$ with high probability. We then construct a set of `permissible arms' $\Perm_t := \{k : L^k_t \le \alpha\}$ - these are all the arms that are plausibly feasible given the information we have up to time $t$. $A_t$ is selected to maximise $U^k_t$ amongst $k \in \Perm_t$. The optimism of $\Perm_t$ allows us to explore for high rewards, but the concentration of $L^k_t$ as $N_t^k$ grows serves to identify unsafe arms, which then cease to be pulled. The broad scheme is described in Algorithm \ref{alg:DOCB}.

This scheme can be analysed using a variation of the standard bandit analysis. To control the play of unsafe arms, we argue that $\nu^k - L^k_t$ is bounded as $\sqrt{\log(T)/N_t^k}$. Thus, if $\nu^k > \alpha,$ the arm $k$ should fall out of $\Perm_t$ after it has been played at most $O(\log(T)/(\Gamma^k)^2)$ times. Next we argue that the bounds are `consistent' (or optimistic) with high probabiliy, that is, most of the time $L^{*}_t \le \nu^* \iff k^* \in \Perm_t$ and $U_t^* \ge \mu^*$. Given this, in order to play arm $k$, $U_t^k$ must exceed $\mu^*,$ but $U^k_t - \mu^k$ shrinks as $O(\sqrt{\log T/N_t^k})$ bounding $N_T^k$ as $O(\log(T)/(\Delta^k)^2)$. In the process, strictly dominated $k$ - for which $\nu^k > \alpha$ and $\mu^k < \mu^*,$ are doubly penalised, and their play is limited by the larger gap. 

We will explicitly analyse the scheme by instantiating the method with bounds based on \KLUCB \citep{garivier2011kl}, which offer optimal mean-dependent regret control for standard bandits. Note that the study of confidence bounds for bandit methods is mature, and our results can be improved with other choices of such bounds, e.g.~, using variance sensitive bounds such as \textsc{Empirical-KL-UCB} \citep{cappe2013kullback} or \UCBV \citep{audibert2009exploration}.

The \KLUCB type bounds take the following form \begin{align*}
    \gamma_t &:= \log( t (\log(t))^3),\\
    U(t, \hist_{t-1}, k) &:= \max \{ q > \muhat : d(\muhat \|q) \le \gamma_t/N_t^k \},\\
    L(t, \hist_{t-1},k) &:= \min \{q < \nuhat : d( \nuhat \|q) \le \gamma_t /N_t^k\},
\end{align*} where $\gamma_t$ trades-off the width and consistency of $U,L$. These bounds are natural for Bernoulli random variables, and since these are the `least-concentrated' law on $[0,1],$ the fluctuation bounds extend to general random variables. Using these, we show the following result in \S\ref{appx:gap_dep_klucb_pf}.


\begin{myth}\label{thm:pure_klucb}
Algorithm \ref{alg:DOCB} instantiated with \KLUCB type bounds attains the following for any $T$ and any $\varepsilon > 0.$ \begin{align*} \mathbb{E}[\mathcal{R}_T] \le \sum_{k\neq k^*}  & \frac{(1 + \varepsilon) (\Delta^k \vee \Gamma^k) \log T }{d_<(\mu^k\|\mu^*) \vee d_>(\nu^k \|\alpha) } + \xi_k, \end{align*} where $\xi_k =   O(\log\log T + \varepsilon^{-2})$.
Further, the number of times an unsafe arm is played is bounded as \[ \mathbb{E}[\mathcal{U}_T] \le  \sum_{k : \Gamma^k > 0} \left( \frac{(1+\varepsilon) \log T}{d_<(\mu^k\|\mu^*) \vee d_>(\nu^k\|\alpha)}\right)  + \xi_k. \]
\end{myth}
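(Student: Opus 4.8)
The plan is to reduce both displayed inequalities to a single per-arm estimate,
\[
\mathbb{E}[N_T^k] \le \frac{(1+\varepsilon)\log T}{d_<(\mu^k\|\mu^*)\vee d_>(\nu^k\|\alpha)} + \xi_k, \qquad \xi_k = O(\log\log T + \varepsilon^{-2}),
\]
for each $k \neq k^*$. Granting this, the regret bound is immediate from the identity $\mathcal{R}_T = \sum_{k\neq k^*}(\Delta^k\vee\Gamma^k)N_T^k$ recorded in the setup, and the unsafe-play bound follows because the unsafe arms are exactly those with $\Gamma^k>0$, so $\mathcal{U}_T = \sum_{k:\Gamma^k>0}N_T^k$; both then pass through linearity of expectation. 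Hence the whole argument concentrates on bounding $\mathbb{E}[N_T^k]$.

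First I would decompose around a consistency event for the optimal arm. Set $\mathcal{E}^U_t = \{U^*_t \ge \mu^*\}$ and $\mathcal{E}^L_t = \{L^*_t \le \nu^*\}$; since $\nu^*\le\alpha$, the latter guarantees $k^*\in\Perm_t$. Writing
\[
N_T^k \le 1 + \sum_{t>K}\indi\{A_t=k,\mathcal{E}^U_t,\mathcal{E}^L_t\} + \sum_{t>K}\indi\{(\mathcal{E}^U_t)^c\} + \sum_{t>K}\indi\{(\mathcal{E}^L_t)^c\},
\]
the last two sums quantify the failure of the reward-UCB and risk-LCB of $k^*$ to be optimistic. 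These are controlled by the self-normalised \KLUCB concentration of \citet{garivier2011kl}, applied directly to the reward for $\mathcal{E}^U$ and to the sign-flipped risk $1-S$ for $\mathcal{E}^L$ (the risk LCB being a reward UCB in disguise); because rewards and risks are $[0,1]$-valued the Bernoulli divergence is the worst case, so the bounds transfer and each term is $O(\log\log T)$, absorbed into $\xi_k$.

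The crux is the main term $\sum_{t>K}\indi\{A_t=k,\mathcal{E}^U_t,\mathcal{E}^L_t\}$. On this event $k^*\in\Perm_t$ and $A_t=k$ force $U^k_t\ge U^*_t\ge\mu^*$, while $k\in\Perm_t$ forces $L^k_t\le\alpha$. Both are necessary, so I may exploit whichever is more restrictive. Assume $d_<(\mu^k\|\mu^*)\ge d_>(\nu^k\|\alpha)$, so $\Delta^k>0$ (the reverse case is symmetric, invoking the risk condition instead). Take $u_k = (1+\varepsilon)\log T/d(\mu^k\|\mu^*)$ and split on whether $N_t^k \le u_k$: the contribution from $N_t^k\le u_k$ is at most $u_k$, the leading term. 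For $N_t^k>u_k$, the defining property of the \KLUCBNS{} index turns $U^k_t\ge\mu^*$ into $d(\muhat\|\mu^*)\le\gamma_t/N_t^k$, and since $\gamma_t\le\gamma_T=\log T+3\log\log T$ this pushes $d(\muhat\|\mu^*)$ strictly below $d(\mu^k\|\mu^*)$ by an $\varepsilon$-margin, forcing $\muhat\ge x_k$ for some $x_k\in(\mu^k,\mu^*)$ with $x_k-\mu^k=\Theta(\varepsilon)$. Re-indexing by the sample count $n=N_t^k$ and applying the Chernoff bound $\bbP(\widehat\mu_n\ge x_k)\le e^{-n\,d(x_k\|\mu^k)}$ gives a geometric series summing to $O(1/d(x_k\|\mu^k))=O(\varepsilon^{-2})$, which also lands in $\xi_k$.

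The main obstacle is purely the bookkeeping in the strictly-dominated case where both $\Delta^k>0$ and $\Gamma^k>0$: there are two simultaneously-necessary constraints, and the point is to threshold using only the one carrying the larger divergence, so the denominator becomes the maximum $d_<(\mu^k\|\mu^*)\vee d_>(\nu^k\|\alpha)$ rather than a sum or a minimum. Choosing $u_k$ as the reciprocal of this maximum and invoking just the corresponding index condition yields the ``doubly penalised'' behaviour advertised in the introduction. A secondary technical point is threading the $t$-dependence of $\gamma_t$ through the peeling (uniformly replacing $\gamma_t$ by $\gamma_T$ costs only lower-order terms) and checking that $[0,1]$-boundedness genuinely reduces every concentration step to the Bernoulli case.
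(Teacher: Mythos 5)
Your proposal is correct and follows essentially the same route as the paper: reduce everything to per-arm bounds on $\mathbb{E}[N_T^k]$, control the consistency failures of $U_t^*$ and $L_t^*$ via the self-normalised \KLUCB concentration of \citet{garivier2011kl} (contributing $O(\log\log T)$), and handle the main term by re-indexing over sample counts, thresholding at roughly $(1+\varepsilon)\log T/d$, and applying Chernoff to obtain the $O(\varepsilon^{-2})$ remainder from the locally quadratic behaviour of the KL divergence, finally keeping whichever of the reward/safety index conditions carries the larger divergence. The only differences are organisational—the paper proves two separate per-arm bounds (a reward-based one and a safety-based one, the latter needing no consistency terms at all) and takes their minimum, and it places $\gamma_T=\log T+3\log\log T$ in the threshold $S(x)=\lfloor\gamma_T/d(x\|\mu^*)\rfloor$ so that the $\varepsilon$-margin argument works uniformly in $T$, which is exactly the clean resolution of the small-$T$ wrinkle you flag in your last paragraph.
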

The $O$ in the above hides instance-dependent constants, the most pertinent of which is a dependence on $(\Delta^k \vee \Gamma^k)^{-3}$ with the $\varepsilon^{-2}$ term. To ameliorate this, we also give a gap-independent analysis of the scheme in \S\ref{appx:klucb_gap_indep_pf}. \begin{myth}\label{thm:pure_klucb_gap_indep}
    Algorithm \ref{alg:DOCB} instantiated with \KLUCB attains \[\mathbb{E}[\mathcal{R}_T] \le \sqrt{28 K T \log T} + 6 K \log\log T + 32  .\]
\end{myth}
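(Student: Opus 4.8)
The plan is to turn the per-arm pull bounds behind Theorem~\ref{thm:pure_klucb} into a worst-case guarantee by balancing them against the trivial bound $N_T^k \le T$, using the standard peeling argument that replaces a factor of $K$ by $\sqrt{K}$. Writing $g_k := \Delta^k \vee \Gamma^k$ for the per-pull regret of arm $k$, we have exactly $\mathbb{E}[\mathcal{R}_T] = \sum_{k \neq k^*} g_k\,\mathbb{E}[N_T^k]$, so it suffices to control every $\mathbb{E}[N_T^k]$ and then optimise a threshold that separates large-gap from small-gap arms.

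The first step is a clean per-arm bound of the form
\[ \mathbb{E}[N_T^k] \le \frac{c_1 \log T}{(g_k)^2} + O(\log\log T), \qquad k \neq k^*, \]
with an absolute constant $c_1$. Two ingredients drive this. First, Pinsker's inequality $d(a\|b) \ge 2(a-b)^2$ lower-bounds the KL denominator appearing in Theorem~\ref{thm:pure_klucb}, namely $d_<(\mu^k\|\mu^*) \vee d_>(\nu^k\|\alpha) \ge 2(\Delta^k)^2 \vee 2(\Gamma^k)^2 = 2(g_k)^2$, converting divergences into squared gaps. Second, the doubly-optimistic structure of Algorithm~\ref{alg:DOCB} means a suboptimal arm is pulled only when it simultaneously lies in $\Perm_t$ and maximises $U^k_t$; hence its pull count obeys the \emph{smaller} of the inefficiency budget ($\sim \log T/(\Delta^k)^2$) and the safety budget ($\sim \log T/(\Gamma^k)^2$), i.e.\ the budget set by the \emph{larger} gap $g_k$. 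Unlike the tight gap-dependent statement, here I fix $\varepsilon$ to a constant and crudely bound $\gamma_T = \log(T(\log T)^3) \le c\log T$, folding these choices into $c_1$; the $\varepsilon^{-2}$ part of $\xi_k$ then becomes a constant and only a genuine $O(\log\log T)$ remainder (from the excess in $\gamma_T$ and from summing the consistency-failure probabilities) survives.

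Given this bound the peeling step is routine. Fix $\delta > 0$. For arms with $g_k \le \delta$, using only $\sum_k N_T^k \le T$,
\[ \sum_{k : g_k \le \delta} g_k\,\mathbb{E}[N_T^k] \le \delta \sum_k \mathbb{E}[N_T^k] \le \delta T. \]
For arms with $g_k > \delta$, inserting the per-arm bound and using $g_k \le 1$,
\[ \sum_{k : g_k > \delta} g_k\,\mathbb{E}[N_T^k] \le \sum_{k : g_k > \delta} \left( \frac{c_1 \log T}{g_k} + O(\log\log T) \right) \le \frac{K c_1 \log T}{\delta} + O(K\log\log T). \]
Adding the two pieces and optimising at $\delta = \sqrt{K c_1 \log T / T}$ gives a leading term $2\sqrt{c_1 K T\log T}$; the value $c_1 = 7$ produced by these \KLUCB bounds makes $2\sqrt{c_1} = \sqrt{28}$, i.e.\ exactly $\sqrt{28 K T \log T}$, while tracking the remainder through the sum over the (at most $K$) arms, together with a round-uniform failure probability that sums to a constant, yields the $6K\log\log T + 32$.

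The main obstacle is the first step: securing the per-arm bound with a clean leading constant and \emph{only} $\log\log T$ lower-order terms. This means re-running the concentration and consistency arguments of Theorem~\ref{thm:pure_klucb} at a fixed $\varepsilon$, checking that Pinsker is applied with the correct orientation to both $d_<$ and $d_>$, and verifying that the two optimism mechanisms genuinely combine to the $\vee$ of the gaps rather than to either one separately. By comparison the peeling and the $\delta$-optimisation are standard, and the only residual care is finite-$T$ bookkeeping to absorb stray constants (including the initial $K$ forced pulls) into the additive terms.
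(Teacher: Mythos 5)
Your overall architecture---a per-arm bound of order $\log T/(\Delta^k\vee\Gamma^k)^2$, followed by splitting arms at a threshold $\delta$ and optimising $\delta\approx\sqrt{K\log T/T}$---is exactly the paper's route (decomposition (\ref{eqn:regret_in_terms_of_gaps}) with $\mathbf{M}=\delta$), and your peeling step is fine. The genuine gap is in how you obtain the per-arm bound. You take Theorem \ref{thm:pure_klucb} at a fixed constant $\varepsilon$ and assert that ``the $\varepsilon^{-2}$ part of $\xi_k$ then becomes a constant.'' It does not: the $O(\cdot)$ in $\xi_k$ hides \emph{instance-dependent} factors, as the paper warns right after stating Theorem \ref{thm:pure_klucb}. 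Concretely, in the proof of that theorem the comparison point $x$ is chosen so that $d(x\|\mu^*)=d(\mu^k\|\mu^*)/(1+\varepsilon)$, which places $x$ within $O(\varepsilon)$ of $\mu^k$ and makes $d(x\|\mu^k)=\Theta\bigl(\varepsilon^2 d(\mu^k\|\mu^*)^2\bigr)$; the remainder $2/(1\wedge d(x\|\mu^k))$ in Lemma \ref{lem:klucb_key_bound} therefore carries a factor of order $d(\mu^k\|\mu^*)^{-2}$, i.e.\ as large as $(\Delta^k\vee\Gamma^k)^{-4}$, not an absolute constant. If you carry this through your peeling step, the large-gap arms contribute $\sum_{g_k>\delta} g_k\cdot O\bigl((g_k)^{-4}\bigr)=O(K/\delta^3)$, and balancing $\delta T$ against $K/\delta^3$ yields only $O(K^{1/4}T^{3/4})$---precisely the $T^{3/4}$ obstruction the paper flags immediately after the proof of Theorem \ref{thm:pure_klucb}. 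So the statement cannot be obtained by citing Theorem \ref{thm:pure_klucb} with constant $\varepsilon$; your application of Pinsker only to the denominator $d_<(\mu^k\|\mu^*)\vee d_>(\nu^k\|\alpha)$ under the $\log T$ term cannot repair this, because the problematic term does not sit under that denominator.

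The missing idea (and the paper's actual proof) is to bypass Theorem \ref{thm:pure_klucb} and re-instantiate Lemma \ref{lem:klucb_key_bound} directly with the \emph{midpoints} $x=(\mu^k+\mu^*)/2$ and $y=(\alpha+\nu^k)/2$. Then Pinsker applies to \emph{both} divergences appearing in (\ref{eq:klucb_reward_bound}) and (\ref{eq:klucb_safety_bound}): $d(x\|\mu^*)\ge(\Delta^k)^2/2$ \emph{and} $d(x\|\mu^k)\ge(\Delta^k)^2/2$, so the remainder $2/(1\wedge d(x\|\mu^k))\le 4/(\Delta^k)^2$ is no longer treated as a constant but merges into the leading numerator, giving $\mathbb{E}[N_T^k]\le\bigl(2\log T+6\log\log T+4\bigr)/(\Delta^k)^2+6\log\log T+24$, and analogously in $\Gamma^k$. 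This also explains the constant $28$: the numerator $2\log T+6\log\log T+4\le 7\log T$ (for $T\ge 8$, using $2\log\log T\le\log T$) contains a contribution of $2\log T$ from exactly the term you discarded as constant; under your own accounting you would have gotten $\sqrt{20KT\log T}$, so your $c_1=7$ is reverse-engineered from the answer rather than produced by your derivation.
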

The above statement extends to \KLUCB for standard bandits upon sending $\alpha \to 1$, which, surprisingly, appears to have been unobserved, at least explicitly. 


\section{Bayesian Methods}

Thompson Sampling (\ThompNS) is the first proposed method for bandit problems \citep{thompson1933likelihood}, and encourages exploration by using randomisation. The idea is to choose an benign prior, and play arms according to their posterior probability being optimal. The posteriors remain flat for insufficiently explored arms, giving a non-trivial chance of pulling them. An advantage of \Thomp lies in the fact that it exploits a posterior that may be much better adapted to the underlying law $\mathbb{P}^k$ than confidence bounds that rely on a few simple statistics. Indeed, it has been empirically observed that \Thomp offers improved regret versus comparable \UCB methods in multi-armed bandits \citep{chapelle2011empirical}.

This section explores the use of Bayesian methods for safe bandits. We start by replacing the \KLUCB based selection of arms to play in Algorithm \ref{alg:DOCB}, but retaining the construction of $\Perm_t$. We then study a Bayesian method of selecting $\Perm_t.$

In the subsequent, we restrict analysis to the case of Bernoulli bandits, i.e., where the laws $\mathbb{P}^k$ are such that marginally $R \sim \mathrm{Bern}(\mu^{A_t})$ and $S \sim \mathrm{Bern}(\nu^{A_t}).$ We note that since the resulting bounds depend on only the means of the rewards and safety-risk, these bounds extend to generic laws supported on $[0,1]^2$ - indeed, as observed by \cite{agrawal2012analysis_TS_Bern_equals_general}, one can exploit an algorithm for Bernoulli bandits for generic laws by passing to the algorithm two samples $\widetilde{R}_t \sim \mathrm{Bern}(R_t), \widetilde{S}_t \sim \mathrm{Bern}(S_t)$. The corresponding $\widetilde{R}, \widetilde{S}$ are then Bernoulli with the same means, and any guarantee that only depends on the means for the Bernoulli case extends to the underlying bandit problem. Of course, such a procedure may blow up variances, and thus be profligate in the case of highly concentrated instances.

Note: the methods described below admit essentially the same guarantees as the bounds of Theorems \ref{thm:pure_klucb} and \ref{thm:pure_klucb_gap_indep}. For the sake of brevity, we suppress the explicit bounds on $\mathbb{E}[\mathcal{U}_T]$ and the gap-independent bounds in the following.

\subsection{Thompson Sampling with Optimistic Safety Indices}\label{sec:STOP}

For Bernoulli bandits, it is natural to use the $\mathrm{Beta}$ family for priors, due to favourable conjugacy. The standard form of \Thomp instantiates each arm with the uninformative prior $\mathrm{Beta}(1,1) = \mathrm{Unif}[0,1].$ The corresponding posterior at time $t$ is $\mathrm{Beta}(R^k_t + 1, N^k_t - R^k_t + 1)$.

\begin{wrapfigure}[16]{r}{0.51\textwidth}
\vspace{-2\baselineskip} \begin{minipage}{0.51\textwidth}
\begin{algorithm}[H]
   \setlength{\textfloatsep}{0pt}
   \caption{Thompson Sampling With Optimistic Safety Indices (\textsc{topsi}) for Bernoulli Bandits}
   \label{alg:STOP}
\begin{algorithmic}[1]
   \STATE \textbf{Input}: $K,$ function $L$.
   \STATE \textbf{Initialise}: $\hist_0 \gets \varnothing.$
   \FOR{$t = 1, 2, \dots$}
   \IF{$t \le N$}
   \STATE $A_t \gets t$
   \ELSE
   \STATE $\forall k, L^k_t \gets L(t, \hist_{t-1}, k).$ 
   \STATE $\Perm_t \gets \{k : L^k_t \le \alpha\}.$ 
   \STATE $\forall k \in \Perm_t,$ sample $\rho_t^k \sim \mathrm{Beta}(R_t^k +1, N_t^k - R_t^k + 1)$
   \STATE $A_t \gets \argmax_{k \in \Perm_t} \rho^k_t.$
   \ENDIF
   \STATE Pull $A_t,$ receive $(R_t, S_t) \sim \mathbb{P}^{A_t}$.
   \STATE Update $\hist_t \gets \hist_{t-1} \cup \{(A_t, R_t, S_t)\}$. 
   \ENDFOR
\end{algorithmic}
\end{algorithm}
    \end{minipage}
\end{wrapfigure}

Algorithm \ref{alg:STOP} describes the proposed strategy - we retain the optimistic lower bound from \cref{alg:DOCB}, but replace the arm selection given $\Perm_t$ to a \Thomp strategy: random scores $\rho_t^k$ are drawn from the posterior for each arm in $\Perm_t,$ the arm with the largest $\rho^k_t$ is pulled.


The analysis of such a method is simple, \emph{given an analysis of \Thomp} for standard bandits. Indeed, we can control the play of infeasible arms as we did for \cref{alg:DOCB}. Further, as long as we can ensure $k^* \in \Perm_t$ with high probability, we can invoke the decomposition \begin{align*}
    \mathbb{E}[N_T^k] \le  \sum_{t} \mathbb{P}(k^* \not\in\Perm_t) + \mathbb{P}(  A_t = k, k^* \in \Perm_t).
\end{align*}
The first term is handled using the consistency of the lower bound $L^*_t$. The second term is essentially the term analysed for standard bandits, and we can use any analysis of \Thomp to control it. We concretely use the approach of \citet{agrawal2013further} in \S\ref{appx:TS+UCB} to show the following result. 

\begin{myth}\label{thm:stop_gap_dep}
    For Bernoulli Bandits, Algorithm \ref{alg:STOP} instantiated with a \KLUCB type confidence bound attains the following regret bound for any $T$ and any $\varepsilon > 0$ \begin{align*} \mathbb{E}[\mathcal{R}_T] \le  \sum_{k\neq k^*}  & \frac{(1 + \varepsilon) (\Delta^k \vee \Gamma^k) \log T }{d_<(\mu^k\|\mu^*) \vee d_>(\nu^k \|\alpha) } + \xi_k,\end{align*} where $\xi_k = O(\log \log T +  \varepsilon^{-2}\log(\nicefrac1\varepsilon))$
\end{myth}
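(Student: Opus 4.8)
The plan is to follow the decomposition already sketched in the text immediately preceding the statement, and to reduce the analysis to two essentially independent pieces: control of infeasible play via the frequentist lower bound $L$, and control of inefficient-but-plausibly-feasible play via an off-the-shelf \Thomp analysis. First I would fix an arm $k \neq k^*$ and write $\mathbb{E}[N_T^k] \le \sum_t \mathbb{P}(A_t = k, k^* \in \Perm_t) + \sum_t \mathbb{P}(k^* \not\in \Perm_t)$. The second sum is controlled exactly as in the proof of \cref{thm:pure_klucb}: since $L$ is the \KLUCB lower bound and $\nu^* \le \alpha$, the event $\{k^* \not\in \Perm_t\} = \{L^*_t > \alpha\} \subseteq \{L^*_t > \nu^*\}$ is a failure of consistency of the lower confidence bound, whose probability is summable in $t$ by the standard \KLUCB concentration argument, contributing only an $O(1)$ (absorbed into $\xi_k$) term. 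This step requires no new idea beyond what \cref{thm:pure_klucb} already uses.

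Next I would split the first sum according to whether $k$ is safe ($\Gamma^k = 0$, so the gap is the inefficiency gap $\Delta^k$) or unsafe ($\Gamma^k > 0$). For the unsafe arms, I would argue exactly as for \cref{alg:DOCB} that $k$ falls out of $\Perm_t$ once $N_t^k$ exceeds $O(\log T / d_>(\nu^k\|\alpha))$, since $L^k_t$ concentrates below $\alpha$; this is the frequentist safety-elimination argument, independent of the reward-selection rule, and gives the $d_>(\nu^k\|\alpha)$ contribution to the denominator. For the genuinely inefficient feasible arms, the event $\{A_t = k, k^* \in \Perm_t\}$ is dominated by the event that \Thomp selects $k$ over $k^*$ among the permissible set; since $k^* \in \Perm_t$, the sampled score $\rho^k_t$ must beat $\rho^*_t$, which is precisely the situation handled by the standard \Thomp regret analysis. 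Here I would invoke the technique of \citet{agrawal2013further}: the term $\sum_t \mathbb{P}(A_t = k, k^* \in \Perm_t)$ is bounded above by $\sum_t \mathbb{P}(A_t = k$ in an \emph{unconstrained} two-arm-style comparison against $k^*)$, yielding the $(1+\varepsilon)\log T / d_<(\mu^k\|\mu^*)$ bound with the $\xi_k = O(\log\log T + \varepsilon^{-2}\log(1/\varepsilon))$ lower-order term characteristic of their analysis.

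The remaining work is to combine the two denominators into the single $d_<(\mu^k\|\mu^*) \vee d_>(\nu^k\|\alpha)$ appearing in the statement. The key observation is that both elimination mechanisms act on the \emph{same} counter $N_t^k$: an inefficient arm is bounded by the reward mechanism at $O(\log T / d_<(\mu^k\|\mu^*))$ pulls, and an unsafe arm is bounded by the safety mechanism at $O(\log T / d_>(\nu^k\|\alpha))$ pulls. Whichever bound is smaller controls $\mathbb{E}[N_T^k]$, so taking the minimum of the two pull-counts yields the maximum in the denominator, and multiplying by the per-pull cost $\Delta^k \vee \Gamma^k$ gives the claimed per-arm regret contribution. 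The subtle point — and the step I expect to be the main obstacle — is that the two bounds must be shown to hold \emph{simultaneously on a single high-probability event}, so that one may legitimately take the better of the two rather than merely adding them; the \Thomp argument, unlike the fully frequentist one, is inherently probabilistic over the posterior sampling, so I would need to ensure the safety-elimination event (a statement about $L^k_t$) and the \Thomp comparison event are handled on a common conditioning so that the minimum of the two counts bounds $N_T^k$ without double-counting. I would discharge this by conditioning on the consistency event for the lower bounds and running the \citet{agrawal2013further} argument inside it, absorbing the residual failure probability into $\xi_k$.
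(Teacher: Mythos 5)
Your proposal follows the paper's own proof essentially step for step: the same decomposition $\mathbb{E}[N_T^k] \le \sum_t \mathbb{P}(k^* \notin \Perm_t) + \sum_t \mathbb{P}(A_t = k,\, k^* \in \Perm_t)$, the same reuse of the \KLUCB safety-elimination bound (\ref{eq:klucb_safety_bound}) and consistency bound (\ref{eq:klucb_l_t_consistency}) from Lemma \ref{lem:klucb_key_bound} (both of which carry over verbatim because Algorithm \ref{alg:STOP} leaves the safety index untouched), the same invocation of the \citet{agrawal2013further} analysis for the term with $k^* \in \Perm_t$ (this is the paper's Lemma \ref{lem:ts_efficiency_bound}), and the same final step of taking, per arm, the tighter of the safety-based and reward-based controls, which is exactly what produces the maximum in the denominator.

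One correction, though: the ``main obstacle'' you flag at the end is not an obstacle, and your proposed fix is the only genuinely problematic step in the write-up if taken literally. The two mechanisms do not need to hold \emph{simultaneously on a single high-probability event}: each argument, run from start to finish, yields an unconditional numerical upper bound on the same expectation $\mathbb{E}[N_T^k]$ --- the safety argument bounds $\sum_t \mathbb{P}(A_t = k) \le \sum_t \mathbb{P}(A_t = k,\, L_t^k \le \alpha)$ directly, while the inefficiency argument bounds it via the decomposition above. The minimum of two valid upper bounds on a number is a valid upper bound; no common conditioning, no joint event, and no double-counting issue arises, and this is precisely how the paper combines them. Indeed, literally ``running the \citet{agrawal2013further} argument inside'' a conditioning event would be delicate, since conditioning on a history-dependent event distorts the i.i.d.\ reward structure and the Beta--Binomial identities that analysis relies on; fortunately it is unnecessary. (A minor quantitative point: the consistency-failure sum $\sum_t \mathbb{P}(k^* \notin \Perm_t)$ is $O(\log\log T)$ under the $\gamma_t = \log(t \log^3 t)$ schedule, per (\ref{eq:klucb_l_t_consistency}), not $O(1)$ as you state --- but this is absorbed into $\xi_k$ exactly as you intend.)
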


\subsection{Thompson Sampling with \BayesUCB} \label{sec:TUB}

While Algorithm \ref{alg:STOP} admits a tight analysis, it still uses the potentially loose frequentist bound to decide $\Perm_t,$ and it is possible that using the posteriors on the safety-risks to do this may improve the behaviour. 

It is tempting to appeal to the basic structure of Thompson sampling, and associate a posterior with the safety risk of $P_{t,\nu}^k = \mathrm{Beta}(S^k_t +1, N^k_t - S^k_t + 1),$ sample safety scores $\sigma_t^k \sim P_{t,\nu}^k,$ and let $\Perm_t = \{k : \sigma_t^k \le \alpha\}.$ However, this attempt is misguided, essentially because we need to compare the scores to a fixed level $\alpha,$ rather than amongst each other. Indeed, if it is the case that $\nu^* = \alpha,$ then there is a constant chance that $\sigma_t^* > \alpha,$ even if the empirical mean $\widehat{\nu}^*_t$ is faithful. This would mean a constant chance of playing a suboptimal arm, and so linear regret. A similar issue has been observed with trying to analyse \Thomp using the analysis developed for \textsc{UCB}-type schema \citep{kaufmann2012thompson}, but the issue is now at the level of the scheme rather than an analysis. Indeed, we show via simulations that when $\nu^* = \alpha,$ such a scheme suffers linear expected regret (\S\ref{appx:exp_ts_with_no_slack}). 

So, this idea needs a fix. One natural attempt is to introduce a slack, say some $\beta_t^k,$ such that $\Perm_t = \{ \sigma_t^k \le \alpha + \beta_t^k\}$. This $\beta_t^k$ should likely decay as $N_t^k$ rises, but be large enough to ensure that $k^* \in \Perm_t$ - this is similar to the analytical approach taken by \citet{kaufmann2012thompson}. However, in designing such a $\beta_t^k,$ we are functionally designing a confidence bound, somewhat defeating the purpose.

\begin{wrapfigure}[18]{r}{0.5\textwidth}
\vspace{-2\baselineskip} \begin{minipage}{0.5\textwidth}
\begin{algorithm}[H]
   \caption{Thompson Sampling with \BayesUCB (\textsc{tsbu}) for Bernoulli Bandits}
   \label{alg:TBU}
\begin{algorithmic}[1]
   \STATE \textbf{Input}: $K,$ schedule $\delta_t^k$. 
   \STATE \textbf{Initialise}: $\hist_0 \gets \varnothing.$
   \FOR{$t = 1, 2, \dots$}
   \STATE $\forall k$ 
   \IF{$S_t^k = 0$} \STATE $L_t^k \gets 0$
   \ELSE \STATE $L^k_t \gets Q(\mathrm{Beta}(S_t^k, N_t^k - S_t^k + 1), \delta_t^k).$ 
   \ENDIF
   \STATE $\Perm_t \gets \{k : L^k_t \le \alpha\}.$ 
   \STATE $\forall k \in \Perm_t,$ sample $\rho_t^k \sim \mathrm{Beta}(R_t^k +1, N_t^k - R_t^k + 1)$
   \STATE $A_t \gets \argmax_{k \in \Perm_t} \rho^k_t.$
   \STATE Pull $A_t,$ receive $(R_t, S_t) \sim \mathbb{P}^{A_t}$.
   \STATE Update $\hist_t \gets \hist_{t-1} \cup \{(A_t, R_t, S_t)\}$. 
   \ENDFOR
\end{algorithmic}
\end{algorithm}
    \end{minipage}
\end{wrapfigure}

We take a different tack, and instead use a \emph{Bayesian confidence bound}, essentially exploiting the \BayesUCB method of \citet{kaufmann2012bayesUCB}. The idea is to choose a $\delta_t^k$th quantile of the posterior $P_{t, \nu}^k$ as a score, where $\delta_t^k$ is a schedule that decays with $t$. This is able to exploit the potentially improved adaptivity of the posterior, but due to $\delta_t^k$ being small, would continue to produce an optimistic score, and so have a high chance of $k^* \in \Perm_t$ at any time. Additionally, due to the concentration of the $\mathrm{Beta}$-law for large $N_t^k,$ the score of unsafe arms would converge towards $\nu^k,$ and thus preclude their play beyond a point. Altogether, the method seems tailor-made for our situation of filtering arms at a given level. The scheme is described in Algorithm \ref{alg:TBU}, where $Q(P,\delta)$ denotes the $\delta$th quantile of the law $P$. We introduce a slight bias in the same for technical convenience.

The main design parameter is $\delta_t^k$ which trades off the consistency and tightness. In our argument, we use a conservative choice of $\delta_t^k = (\sqrt{8 N_t^k} t\log^3 t)^{-1},$ which leads to a simplified proof, but introduces the inefficiency of $\nicefrac{2}{3}$ in the bounds below. We find that in simulations, the uniform choice $\delta_t^k = 1/(t+1)$ is better (\S\ref{appx:bayesUCB}), and perhaps an improved analysis can establish better bounds for such a schedule. The following summarises our analysis in \S\ref{appx:ts+bayes_ucb}.

\begin{myth}\label{thm:TS+BayesUCB}
    For Bernoulli bandits, Algorithm \ref{alg:TBU}, instantiated with $\delta_t^k = (\sqrt{8 N_t^k}t\log^3 t)^{-1}$ attains the following regret bound for any $\varepsilon > 0 $ and any $T:$ \begin{align*}  \mathbb{E}[\mathcal{R}_T] \le  \sum_{k\neq k^*}  & \frac{(1 + \varepsilon) (\Delta^k \vee \Gamma^k) \log T }{d_<(\mu^k\|\mu^*) \vee \nicefrac{2}{3} \cdot d_>(\nu^k \|\alpha) } + \xi_k,\end{align*} where $\xi_k = O(\log \log T +  \varepsilon^{-2}\log(\nicefrac1\varepsilon))$
\end{myth}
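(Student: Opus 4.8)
The plan is to bound $\mathbb{E}[N_T^k]$ for each $k \neq k^*$ via the same decomposition used for \cref{alg:STOP}, namely $\mathbb{E}[N_T^k] \le \sum_{t} \mathbb{P}(k^* \notin \Perm_t) + \sum_t \mathbb{P}(A_t = k, k^* \in \Perm_t)$, then weight by $\Delta^k \vee \Gamma^k$ and separately extract the unsafe-play count, exactly as sketched for \cref{thm:pure_klucb}. The reward-selection term $\sum_t \mathbb{P}(A_t = k, k^* \in \Perm_t)$ is \emph{identical} to the one appearing for \cref{alg:STOP}: the reward index is the same $\mathrm{Beta}(R_t^k+1, N_t^k - R_t^k + 1)$ sample, and conditioning on $k^* \in \Perm_t$ while restricting the $\argmax$ to $\Perm_t$ only removes competitors, which cannot increase the chance of playing a fixed suboptimal $k$. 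I would therefore import the bound from the proof of \cref{thm:stop_gap_dep} essentially verbatim; it supplies the $d_<(\mu^k\|\mu^*)$ contribution and the $\varepsilon^{-2}\log(\nicefrac1\varepsilon)$ part of $\xi_k$. All genuinely new work concerns the \BayesUCBNS\ lower bound $L_t^k$ that now defines $\Perm_t$.

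The first safety task is \emph{consistency}: bounding $\sum_t \mathbb{P}(k^* \notin \Perm_t) = \sum_t \mathbb{P}(L_t^* > \alpha)$. Since $\nu^* \le \alpha$, it suffices to show the quantile lower bound rarely overshoots its own mean, i.e. to control $\mathbb{P}(L_t^* > \nu^*)$. I would pass through the Beta--Binomial duality $\mathbb{P}(\mathrm{Beta}(S,N-S+1) \le x) = \mathbb{P}(\mathrm{Bin}(N,x) \ge S)$, so that $L_t^* > \nu^*$ forces a Binomial upper-tail probability below $\delta_t^*$; a local anti-concentration lower bound on this tail of order $N^{-1/2}\exp(-N d)$ then shows the event has probability $\lesssim \sqrt{8 N_t^*}\,\delta_t^*$. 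The factor $\sqrt{8 N_t^k}$ baked into $\delta_t^k$ is engineered precisely to cancel this $N^{-1/2}$, leaving the summable residual $\sum_t (t\log^3 t)^{-1} = O(1)$. This contributes only $O(1)$ to $\xi_k$ and, crucially, carries no efficiency loss.

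The second safety task is \emph{elimination} of unsafe arms ($\Gamma^k > 0$), which produces the $\nicefrac{2}{3}\cdot d_>(\nu^k\|\alpha)$ denominator. Here $k$ leaves $\Perm_t$ once $\mathbb{P}(\mathrm{Bin}(N_t^k,\alpha) \ge S_t^k) < \delta_t^k$, and the clean Chernoff upper bound $\exp(-N_t^k d_>(\widehat{\nu}^k_t\|\alpha))$ on this tail, combined with concentration of $\widehat{\nu}^k_t$ near $\nu^k$, shows ejection occurs once $N_t^k\, d_>(\nu^k\|\alpha) \gtrsim \log(1/\delta_t^k)$. The simplification that yields the $\nicefrac23$ is to replace the data-dependent $\sqrt{8 N_t^k}$ inside $\delta_t^k$ by the crude $\sqrt{8t}$, so that $\log(1/\delta_t^k) \le \tfrac12\log(8t) + \log t + 3\log\log t = \tfrac32\log t + O(\log\log t)$; the effective budget inflates from $\log t$ to $\tfrac32 \log t$, and solving $N d_> \ge \tfrac32\log t$ gives $N \le \log T/(\nicefrac23 \cdot d_>)$. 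Taking the earlier of the reward- and safety-triggered ejection times yields the $d_<(\mu^k\|\mu^*) \vee \nicefrac23 \cdot d_>(\nu^k\|\alpha)$ denominator, with strictly dominated arms limited by the larger rate exactly as in \cref{thm:pure_klucb}.

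The main obstacle is this \BayesUCBNS-to-KL translation underlying both safety tasks: converting the $\delta_t^k$-quantile of the $\mathrm{Beta}$ posterior into sharp $d_>(\cdot\|\alpha)$ rates, in both tail directions and uniformly over the random $N_t^k$. The consistency direction demands the sharp $N^{-1/2}$ anti-concentration estimate (hence the $\sqrt{8N_t^k}$ tuning), while the elimination direction needs the clean exponential upper tail together with handling the self-referential appearance of $N_t^k$ in $\delta_t^k$; it is precisely to sidestep a delicate simultaneous treatment of this self-reference that I adopt the lossy $N_t^k \le t$ substitution, accepting the resulting $\nicefrac23$ as the stated price of a simpler argument. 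The slight posterior bias ($S_t^k$ rather than $S_t^k+1$, and the $S_t^k = 0$ branch setting $L_t^k \gets 0$) is a convenience that keeps the duality identity exact and should be carried through the tail computations.
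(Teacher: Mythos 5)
Your overall architecture coincides with the paper's: the same decomposition $\mathbb{E}[N_T^k] \le \sum_t \mathbb{P}(k^*\notin\Perm_t) + \sum_t \mathbb{P}(A_t = k,\, k^*\in\Perm_t)$, the verbatim import of the Thompson-sampling efficiency bound (\ref{eq:TS_efficiency}), the Beta--Binomial duality combined with the two-sided tail estimate $\frac{1}{\sqrt{8n}}e^{-nd(s/n\|q)} \le \mathbb{P}(\mathrm{Bin}(n,q)\ge s)\le e^{-nd(s/n\|q)}$, and the identification of the $\nicefrac23$ as the price of substituting $N_t^k \le t$ inside $\log(1/\delta_t^k)$; your bookkeeping $\tfrac32\log t + \tfrac32\log 2 + 3\log\log t$ matches Lemma \ref{lem:bayesucb} exactly, and your elimination argument for unsafe arms is sound since there the adaptivity of $N_t^k$ is handled by the standard per-play indexing trick. (One imprecision: your justification for importing (\ref{eq:TS_efficiency})---that removing competitors \emph{cannot increase} the chance of playing $k$---is backwards as stated; removing competitors can only help $k$ win the $\argmax$. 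The correct justification, which the paper gives, is that the Agrawal--Goyal ratio argument only uses that $\Perm_t$ is a predictable process containing $k^*$.)

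The genuine gap is in your consistency step. You claim $\mathbb{P}(L_t^* > \nu^*) \lesssim \sqrt{8N_t^*}\,\delta_t^* = (t\log^3 t)^{-1}$, summing to $O(1)$. This applies a fixed-sample-size Chernoff bound to $\widehat{\nu}_t^*$ at the \emph{random, adapted} count $N_t^*$ as if it were deterministic. After the $\sqrt{8N_t^*}$ cancellation, the event $\{L_t^* > \nu^*\}$ only forces $N_t^*\, d_>(\widehat{\nu}^*_t\|\nu^*) > \gamma_t$ with $\gamma_t = \log(t\log^3 t)$, and this must be controlled uniformly over the up-to-$t$ possible values of $N_t^*$: the naive union bound gives $t\,e^{-\gamma_t} = 1/\log^3 t$ per round, which is \emph{not} summable, so your $O(1)$ conclusion does not follow from the ingredients you list. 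The paper closes exactly this hole differently: it sandwiches $L_t^k \le \overline\varphi_t^k$ where $\overline\varphi_t^k$ is precisely the \KLUCBNS index with threshold $\gamma_t$ (this matching, rather than a per-round cancellation, is the real purpose of the $\sqrt{8N_t^k}$ tuning), and then invokes the self-normalized peeling bound (\ref{eq:klucb_consistency}) of \citet{garivier2011kl}, which pays an extra factor of $\gamma_t \log t$ and yields $\sum_{t\ge 3}\mathbb{P}(L_t^* > \alpha) \le e\log\log T + 4e = O(\log\log T)$, not $O(1)$. Since $\xi_k$ already absorbs $O(\log\log T)$, the theorem itself is unaffected, but your proof as written fails at this step and needs the peeling (or an equivalent uniform-over-$N_t^*$) argument to be repaired.
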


\section{Lower Bound}

We conclude our theoretical study with a lower bound for algorithms that admit sub-polynomial regrets against all bounded distributions. This is based on the technique of \citet{garivier2019lower_bound}, who use the chain rule of KL divergence and the data processing inequality to show the following relation, which extends to our case without change: \begin{mylem}\label{lemma:lower_bound_tech} For any safe bandit algorithm, and any two safe bandit instances $\{\bbP^k\}, \{\widetilde{\bbP}^k\},$ and any $T$, \[ \sum  \mathbb{E}[ N^k_T] D(\bbP^k\|\widetilde\bbP^k) \ge d( \mathbb{E}[N^k_T/T] \| \widetilde{\mathbb{E}}[N^k_T/T]).\]  \end{mylem}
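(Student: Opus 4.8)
The plan is to follow the two-step information-theoretic argument that underlies all such bounds: a divergence decomposition obtained from the chain rule, followed by the data processing inequality. Write $\mathbb{P}$ and $\mathbb{Q}$ for the laws induced on the full history $\hist_T = (A_1, R_1, S_1, \dots, A_T, R_T, S_T)$ by running the fixed algorithm against the two instances $\{\bbP^k\}$ and $\{\widetilde{\bbP}^k\}$ respectively. These are two probability measures on the same trajectory space, and the entire content of the lemma is a comparison of their KL divergence evaluated in two different ways.

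First I would establish the divergence decomposition
\[ D(\mathbb{P} \| \mathbb{Q}) = \sum_k \mathbb{E}[N^k_T]\, D(\bbP^k \| \widetilde{\bbP}^k). \]
To see this, factor the likelihood of $\hist_T$ via the chain rule into per-round conditional densities, each splitting into an action factor $p(A_t \mid \hist_{t-1})$ and an observation factor $p(R_t, S_t \mid A_t)$. Since the \emph{same} algorithm is run against both instances, the action kernels are identical functions of the history and hence cancel in the log-likelihood ratio; the only surviving contribution is $\log \frac{d\bbP^{A_t}}{d\widetilde{\bbP}^{A_t}}(R_t, S_t)$. Taking conditional expectation given $\hist_{t-1}$ (which fixes $A_t$) turns this into $D(\bbP^{A_t} \| \widetilde{\bbP}^{A_t})$, and summing over $t$ and grouping rounds by the arm pulled yields $\sum_k \mathbb{E}[N^k_T]\, D(\bbP^k \| \widetilde{\bbP}^k)$, using $N^k_T = \sum_{t \le T} \indi\{A_t = k\}$.

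Second I would apply data processing. The statistic $N^k_T/T$ is a $[0,1]$-valued measurable function of $\hist_T$, so feeding $\hist_T$ into the channel that outputs $\mathrm{Bern}(N^k_T/T)$ can only contract KL divergence. This gives $D(\mathbb{P} \| \mathbb{Q}) \ge d\bigl(\mathbb{E}_\mathbb{P}[N^k_T/T] \,\|\, \mathbb{E}_\mathbb{Q}[N^k_T/T]\bigr)$, with $d$ the Bernoulli KL appearing in the statement. Chaining this with the decomposition above yields the claim for the fixed arm $k$ on the right-hand side.

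The main obstacle is the careful measure-theoretic bookkeeping in the first step, in particular justifying that the action kernels cancel even when the policy is randomised and merely adapted to the filtration: one must argue that the log-likelihood ratio telescopes so that the \emph{only} source of divergence between $\mathbb{P}$ and $\mathbb{Q}$ is the instance-dependent reward/risk observation model, the decision rule itself being instance-agnostic. Once this is in place, the tower property handles the conditional expectations and the data processing step is a one-line invocation. Crucially, nothing here uses any structure of $\bbP^k, \widetilde{\bbP}^k$ beyond their being laws on $[0,1]^2$, which is exactly why the relation carries over from the standard bandit setting to the safe bandit setting without change.
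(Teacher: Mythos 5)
Your proposal is correct and follows essentially the same route as the paper's proof: the chain-rule divergence decomposition $D(\bbP_{\hist_T}\|\widetilde\bbP_{\hist_T}) = \sum_k \mathbb{E}[N_T^k]\,D(\bbP^k\|\widetilde\bbP^k)$ (using that the action kernels of the common algorithm cancel), combined with the data processing inequality applied to the $[0,1]$-valued statistic $N_T^k/T$ via a Bernoulli channel. The only difference is the order in which the two steps are presented, which is immaterial.
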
 This lemma enables a standard approach - pick $\widetilde\bbP$ so that $\widetilde{\mathbb{E}}^k[(R,S)] =  (\mu^k \vee \mu^* + \varepsilon, \nu^k \wedge \alpha)$, and leave the other $\bbP^k$s unchanged. For any bandit algorithm with sub-polynomial mean regret, the right hand side grows as $\log(T),$ while the left hand side side reduces to $\smash{\mathbb{E}[N^k_T] D(\bbP^k\| \widetilde{\bbP}^k)}$. Of course, the optimal choice of $\smash{\widetilde\bbP}$ depends subtly on the details of $\bbP.$ We study a simple concrete case to illustrate that our prior analyses are tight. \begin{myprop}\label{thm:lower_bound}
    Any algorithm that ensures that, uniformly over all instances of safe Bernoulli bandit problems with independent rewards and safety-risks, the mean number of plays of any suboptimal arm is bounded as $O(T^x)$ for every $x \in (0,1)$ must satisfy \[ \varliminf_{T \nearrow \infty} \frac{\mathbb{E}[N^k_T]}{\log T} \ge \frac{1}{d_<(\mu^k\|\mu^*) + d_>(\nu^k\|\alpha)}.\]
\end{myprop}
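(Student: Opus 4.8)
The plan is to run the standard change-of-measure argument enabled by Lemma~\ref{lemma:lower_bound_tech}, against a single alternative instance in which the target arm is made optimal. Fix a suboptimal arm $k$ and a small $\varepsilon > 0$, and build an alternative safe Bernoulli bandit $\{\widetilde{\bbP}^j\}$ that agrees with $\{\bbP^j\}$ on every arm $j \neq k$ and replaces arm $k$ by the independent product $\widetilde{\bbP}^k = \mathrm{Bern}(\tilde\mu^k)\otimes \mathrm{Bern}(\tilde\nu^k)$ with $\tilde\mu^k = (\mu^k \vee \mu^*) + \varepsilon$ and $\tilde\nu^k = \nu^k \wedge \alpha$. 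By construction arm $k$ is now safe ($\tilde\nu^k \le \alpha$) and has strictly the largest safe reward ($\tilde\mu^k > \mu^*$), so it is the unique competitor under $\widetilde{\bbP}$ while every other arm is suboptimal there.

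Applying Lemma~\ref{lemma:lower_bound_tech} to this pair, and noting that $\bbP^j = \widetilde{\bbP}^j$ for all $j \neq k$, the left-hand sum collapses to the single term $\mathbb{E}[N^k_T]\,D(\bbP^k\|\widetilde{\bbP}^k)$, yielding
\[ \mathbb{E}[N^k_T]\, D(\bbP^k\|\widetilde{\bbP}^k) \ge d\!\left( \frac{\mathbb{E}[N^k_T]}{T} \,\middle\|\, \frac{\widetilde{\mathbb{E}}[N^k_T]}{T}\right). \]
The two arguments of $d(\cdot\|\cdot)$ are pinned down by the sub-polynomial hypothesis applied in each instance: under $\bbP$ arm $k$ is suboptimal, so $a := \mathbb{E}[N^k_T]/T = O(T^{x-1})$ for every $x \in (0,1)$; under $\widetilde\bbP$ arm $k$ is optimal, so each $j \neq k$ is played $O(T^x)$ times and hence $1 - b := 1 - \widetilde{\mathbb{E}}[N^k_T]/T = \widetilde{\mathbb{E}}\big[\sum_{j\neq k} N^j_T\big]/T = O(T^{x-1})$ for every $x$. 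Since the dominant contribution to $d(a\|b)$ is the term $(1-a)\log\frac{1-a}{1-b}$, dividing by $\log T$ and letting $x \to 0$ gives $\varliminf_T d(a\|b)/\log T \ge 1$.

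Combining the two displays gives $\varliminf_T \mathbb{E}[N^k_T]/\log T \ge 1/D(\bbP^k\|\widetilde{\bbP}^k)$, and it remains to evaluate the divergence and send $\varepsilon \to 0$. Because both $\bbP^k$ and $\widetilde{\bbP}^k$ factor into independent Bernoulli reward and safety components --- the only place the independence hypothesis is used --- the chain rule gives $D(\bbP^k\|\widetilde{\bbP}^k) = d(\mu^k\|\tilde\mu^k) + d(\nu^k\|\tilde\nu^k)$. A short case split using monotonicity of $b \mapsto d(a\|b)$ then shows the reward term tends to $d_<(\mu^k\|\mu^*)$ and the safety term to $d_>(\nu^k\|\alpha)$ as $\varepsilon \to 0$: the reward perturbation is active only when $\mu^k < \mu^*$, driving $\tilde\mu^k \to \mu^*$, while the safety perturbation is active only when $\nu^k > \alpha$, driving $\tilde\nu^k = \alpha$; in the complementary cases the corresponding term is $0$, matching the truncated divergences. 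This recovers the stated bound. The step requiring the most care is the lower bound on $d(a\|b)$ together with the order of limits: one must take $T \to \infty$ at fixed $\varepsilon$ to extract the factor $1/D$, and only then let $\varepsilon \to 0$, and the claim that the surviving logarithmic term grows like $\log T$ rests crucially on the uniform sub-polynomial regret assumption holding also for the constructed alternative instance.
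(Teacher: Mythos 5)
Your proof is correct and follows essentially the same route as the paper's: the same alternative instance $\widetilde{\bbP}^k$ with means $(\mu^k \vee \mu^* + \varepsilon,\ \nu^k \wedge \alpha)$, the same collapse of Lemma~\ref{lemma:lower_bound_tech} to a single term, the same asymptotic lower bound on $d(\mathbb{E}[N_T^k/T]\,\|\,\widetilde{\mathbb{E}}[N_T^k/T])$ via the sub-polynomial hypothesis applied in both instances, and the same order of limits ($T \to \infty$ first, then $\varepsilon \to 0$). If anything, your case split when evaluating $D(\bbP^k\|\widetilde{\bbP}^k)$ is slightly more careful than the paper's statement of that divergence.
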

Since mean regret can be expressed in terms of $\mathbb{E}[N_T^k],$ this also lower bounds regret. Note the sum in the denominator, rather than a max as in our upper bounds. This means that for strictly dominated arms (i.e. $k : \Delta^k > 0, \Gamma^k > 0$), our scheme may be loose by up to a factor of two. This arises since our scheme does not utilise the dependence structure of $(R,S),$ and represents an opportunity for future work.

\section{Simulations}\label{sec:sims}

We provide practical contextualisation for the schema described in the theoretical section using simulation studies over small safe bandit environments. Of course, to concretely study the schema we describe, we need to instantiate them with appropriate confidence bounds. We will do so using the \KLUCB and \BayesUCB based indices which we analysed previously. Finally, we use \Thomp instantiated with the Beta priors as described in the text. Of course, a variety of other methods can be implemented in these schema, but we believe that these methods serve well to illustrate both the theory and a first order practical design. All implementation details are left to \S\ref{appx:sims}.

We will begin by empirically illustrating that the prior policy based methods are indeed ineffective in our scenario, and play unsafe arms far too often. We then illustrate the performance of the methods on a realistic problem instance. Finally, we will and investigate the dependence of regret of the three methods on the gaps to the optimal arm.



\subsection{Empirical Demonstration of the Ineffectiveness of Prior Formulations}\label{sec:exp_policy}

As discussed previously, the globally constrained \citep{bandits_with_knapsacks_2013, agrawal2014_BwCR} and policy level \citep{pacchiano2021stochastic} formulations are unsatisfactory in the context of safety-constraints, as illustrated by example (\ref{eq:example}). Nevertheless, a priori it may be possible that the schema designed for these objectives may be effective in our scenario, especially if there exist optimal policies supported on a single arm. We implement the the doubly optimistic policy method (\textsc{BwCR}) of \citet{agrawal2014_BwCR}, and the optimistic-pessimistic method (\textsc{Pess}) of \citet{pacchiano2021stochastic} to demonstrate that this is untrue.

We explore two illustrative cases, both of which are for Bernoulli bandits with independent means and safety-risks. The data reported is across $100$ trials of horizon $50000$. Since these policy methods are based on confidence bounds, we also compare them to Alg.~\ref{alg:DOCB}. In all cases we instantiate these schema with \KLUCBNS-based confidence bounds.

\textbf{1. Multiple optimal policies.} We consider four arms with \begin{align}
    \mu = (0,0.4, 0.5, 0.6),
    \nu = (0, 0.4, 0.5, 0.6), \alpha = 0.5. \notag
    \end{align} The $(0,0)$ arm is included as a known safe arm, which is required for \textsc{Pess} to enable sufficient exploration. Notice that in this case there are two optimal static policies - one that is entirely supported on arm $3$, while another that is uniformly supported on arms $2$ and $4$. However, which one of these two policies these schema converge to is essentially random, and we thus see linear growth of $\mathbb{E}[\mathcal{U}_t]$ in Fig.~\ref{fig:compare_to_policy}.

\textbf{2. Single arm optimal policy.} We jack up the rewards of arm $3$ to $0.6,$ but leave the other means unchanged. Now the optimal policy is singly supported on arm $3$ and has a significant gap of $0.1$. Despite the fact that such a case is the most promising for policy-based methods in terms of efficacy in our formulation, Fig.~\ref{fig:compare_to_policy} again shows that they do rather poorly - for instance, while our implementation play the unsafe arms about 550 times, these methods play it at least 8000 times. This occurs because the policy-based methods are designed for the much richer policy space---a simplex---and so must explore a lot more than methods designed for single arm play. We note that in this case, while \textsc{BwCR} plays unsafe arms more often, it suffers less regret than \textsc{Pess}, since the unsafe arm incurs a smaller loss.

\begin{figure}[H]
    \centering
    \includegraphics[width = 0.49\linewidth]{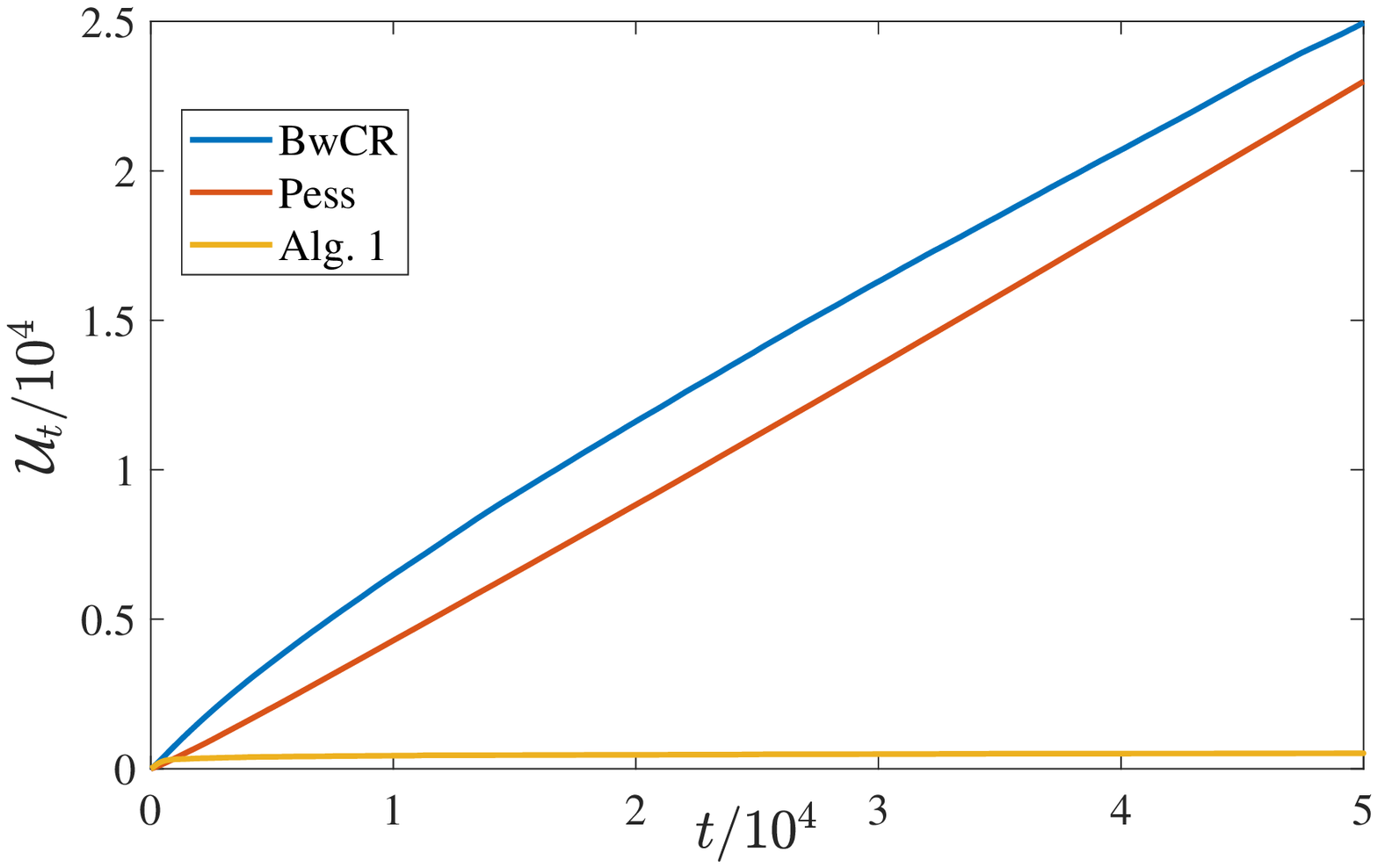}~
    \includegraphics[width = 0.49\linewidth]{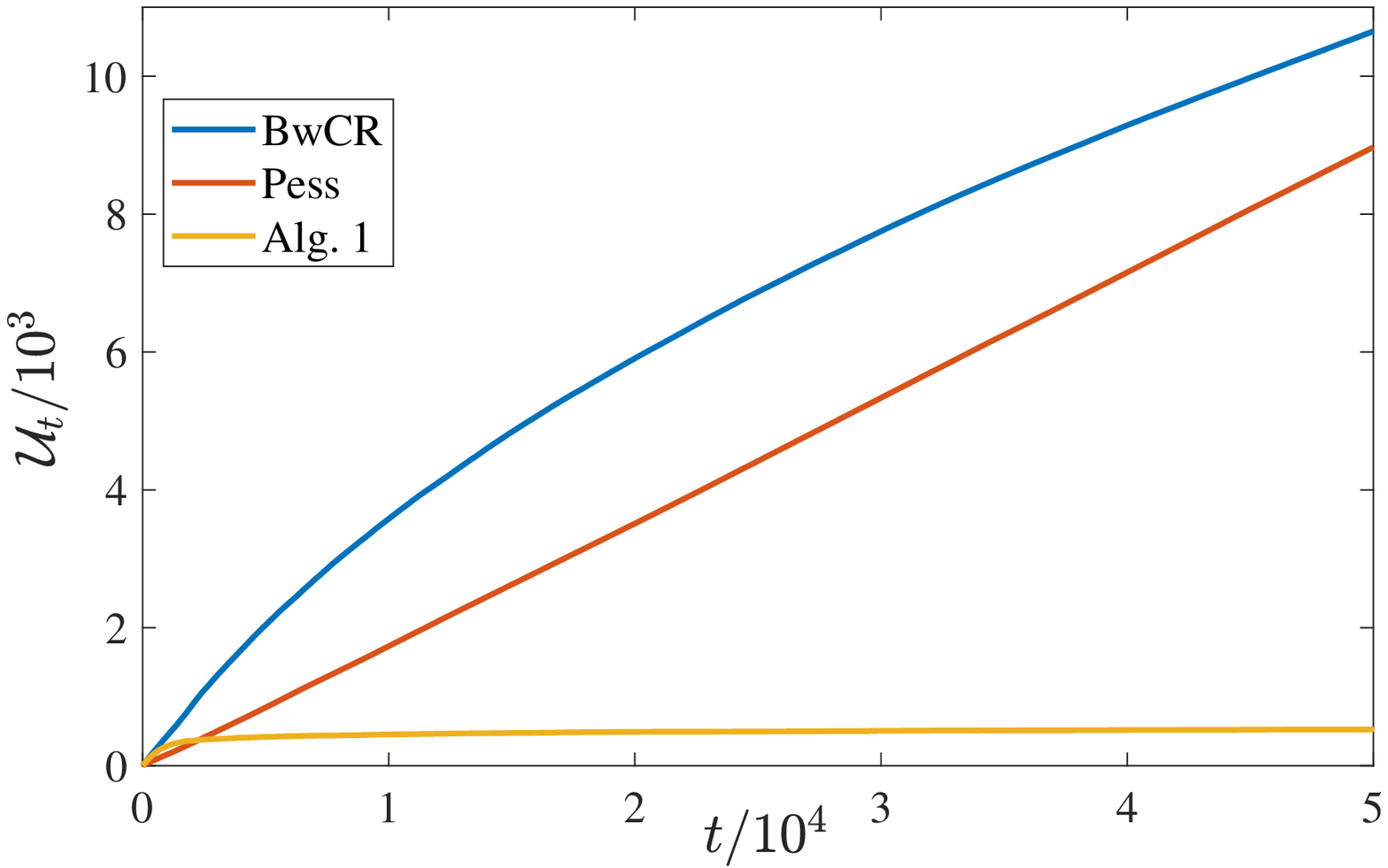}
    \caption{Empirical means of $\mathcal{U}_t$ versus $t$ averaged over $100$ trials over $t \in [1:50000].$ Left is case 1 (with multiple optimal policies), right case 2 (with a single optimal policy supported on one arm).} \vspace{-\baselineskip}
    \label{fig:compare_to_policy}
\end{figure}

\subsection{Characterisation of the Proposed Schema}\label{sec:exp_probing}

We implement the three methods to establish a practical contextualisation of their performance, and to verify the theoretical claims. For the sake of realism, we use the data of \citet{genovese2013efficacy}, who report efficacy and infection rates from a phase 2 randomised trial for various dosages of a drug to treat rheumatoid arthritis. The dosages studied were $(0, 25, 75, 150, 300)$ mg, and the observations were \begin{align*}
    \mu &= (0.360, 0.340, 0.469, 0.465, 0.537),\\
    \nu &= (0.160, 0.259, 0.184, 0.209, 0.293).
\end{align*}
This data is challenging for any safety level - no matter the choice, we have to deal with either a potential safety gap of order $10^{-2}$, or an efficacy gap of $10^{-3}$, both of which contribute a large regret. We study the safety level $0.21,$ under which arm $3$ is optimal, while arms $2,5$ are unsafe. We chose this to allow large enough safety gaps that the behaviour of $\mathcal{U}_T$ is easy to establish with runs of length about $50K$ - if we took $\alpha$ smaller, say $0.2,$ then we would expect to need runs of length $100K$ simply to reach a point at which arm $4$ is played fewer than about a third of the time. This consideration also illustrates why the regret $\mathcal{R}_T$ is a much more reasonable notion of study than $\mathcal{U}_T,$ which can grow very large due to tiny, practically undetectable safety gaps. Plots for a run with $\alpha = 0.19$ are included in \S\ref{appx:probing}.

\begin{figure}[tb]
    \centering
    \includegraphics[width = 0.49\linewidth]{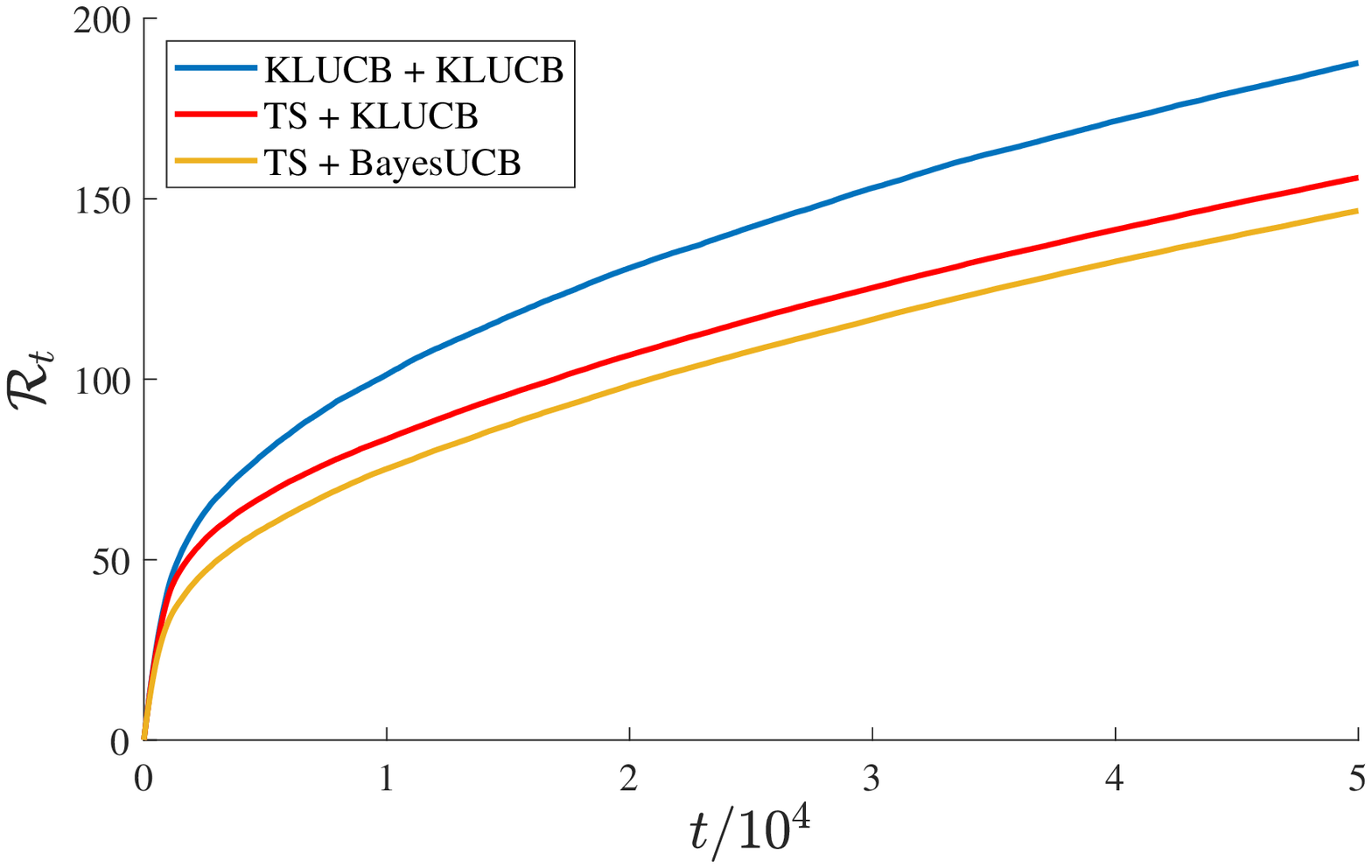}~
    \includegraphics[width = 0.49\linewidth]{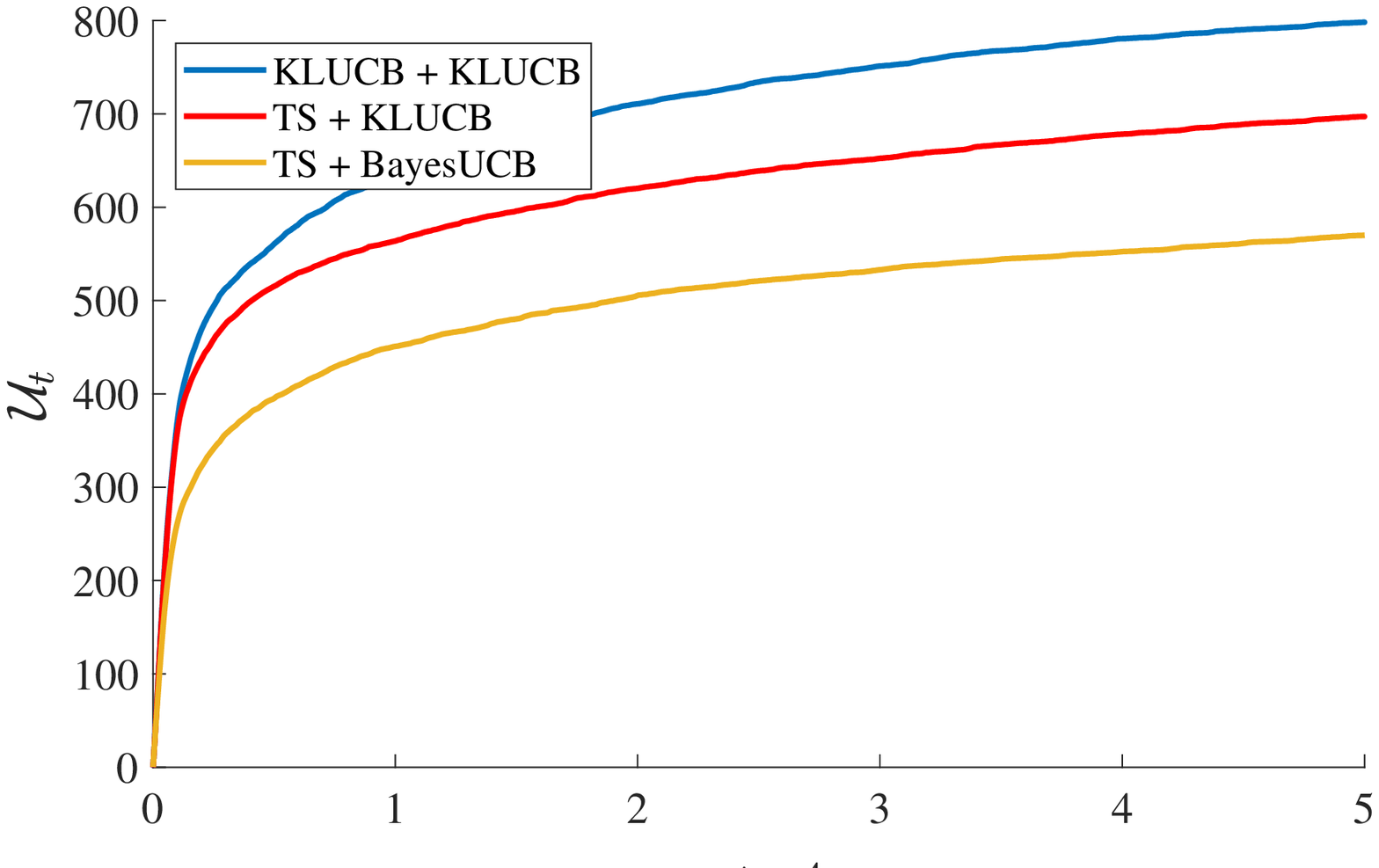}
    \caption{Empirical means over 500 trials of $\mathcal{R}_t$ (left) and $\mathcal{U}_t$ (right) for the drug trial data with $\alpha = 0.21$.} \vspace{-\baselineskip}
    \label{fig:trial_data_0_21}
\end{figure}

\textbf{Observations of Performance} From Fig.\ref{fig:trial_data_0_21}, we first note that both $\mathcal{R}_t$ and $\mathcal{U}_t$ are well controlled and well within the theoretical bounds for the methods we have analysed.\footnote{The main term of the regret bound is $137 \log t,$ and the unsafe-arm bound is $81\log t,$ both $>750$ for $t > 10^4$.} The general trend observed is that Alg.~\ref{alg:STOP} based methods that use a \ThompNS-based index outperform confidence bound indices of Alg.~\ref{alg:DOCB}, which is consistent with \citet{chapelle2011empirical}. Finally, we observe that Alg.~\ref{alg:TBU}, as represented by \ThompNS+\BayesUCB outperforms all other methods. These observations held regardless of the means we have run the methods on. One caveat, however, is that the underlying Bernoulli laws used are well aligned to the priors for Bayesian methods, which may improve their performance. 

\subsubsection{Dependence on Gaps} We investigate the dependence of the regret on the gaps $\Delta^k, \Gamma^k$, in particular illustrating that, as predicted by the theorems, this decays inversely with the larger of the two, and is insensitive to the smaller of the two.

\textbf{Inverse Dependence on Gaps} First, we will demonstrate that the regret varies with $(\Delta^k \wedge \Gamma^k)$ inversely. To this end, we study the the cases \begin{align*}
    \mu_i &= (0.5, 0.5 - i/25, 0.5 + i/25),\\
    \nu_i &= (0.5, 0.5 - i/25, 0.5 + i/25),
\end{align*}
for $\alpha = 0.5$ over $i$ in $[1:10]$ over $100$ trials across a horizon of $T = 2\times 10^4.$ Fig.~\ref{fig:gap_explore} reports the regret $\mathcal{R}_{T}$ versus $i/25$ over this data, and exhibits a clear inverse dependence on $i$.

\textbf{Lack of Dependence on Smaller Gaps} Secondly, we will illustrate that the dependence on the gaps is driven by the \emph{larger} of $\Delta^k$ and $\Gamma^k$, but not on $(\Delta^k \wedge \Gamma^k)$. For this we study the data \begin{align*}
    \mu_i = (0.5, 0.5 - i/25, 0.5 + i/250),\\
    \nu_i = (0.5, 0.5 + i/250, 0.5 + i/25),
\end{align*}
again with $\alpha = 0.5$ for $100$ trials over a horizon of $T = 2\times 10^4$. Observe that $\Delta^k \vee \Gamma^k$ is the same as the previous case, but $\Delta^k \wedge \Gamma^k$ is reduced by a factor of $10$ for each suboptimal arm. The principal observation from the second part of Fig.~\ref{fig:gap_explore} is that the plot remains similar to the previous case of `large' minimum gaps, bearing out this independence from the smaller of the two gaps. 

\begin{figure}[H]
    \centering
    \includegraphics[width = 0.49\linewidth]{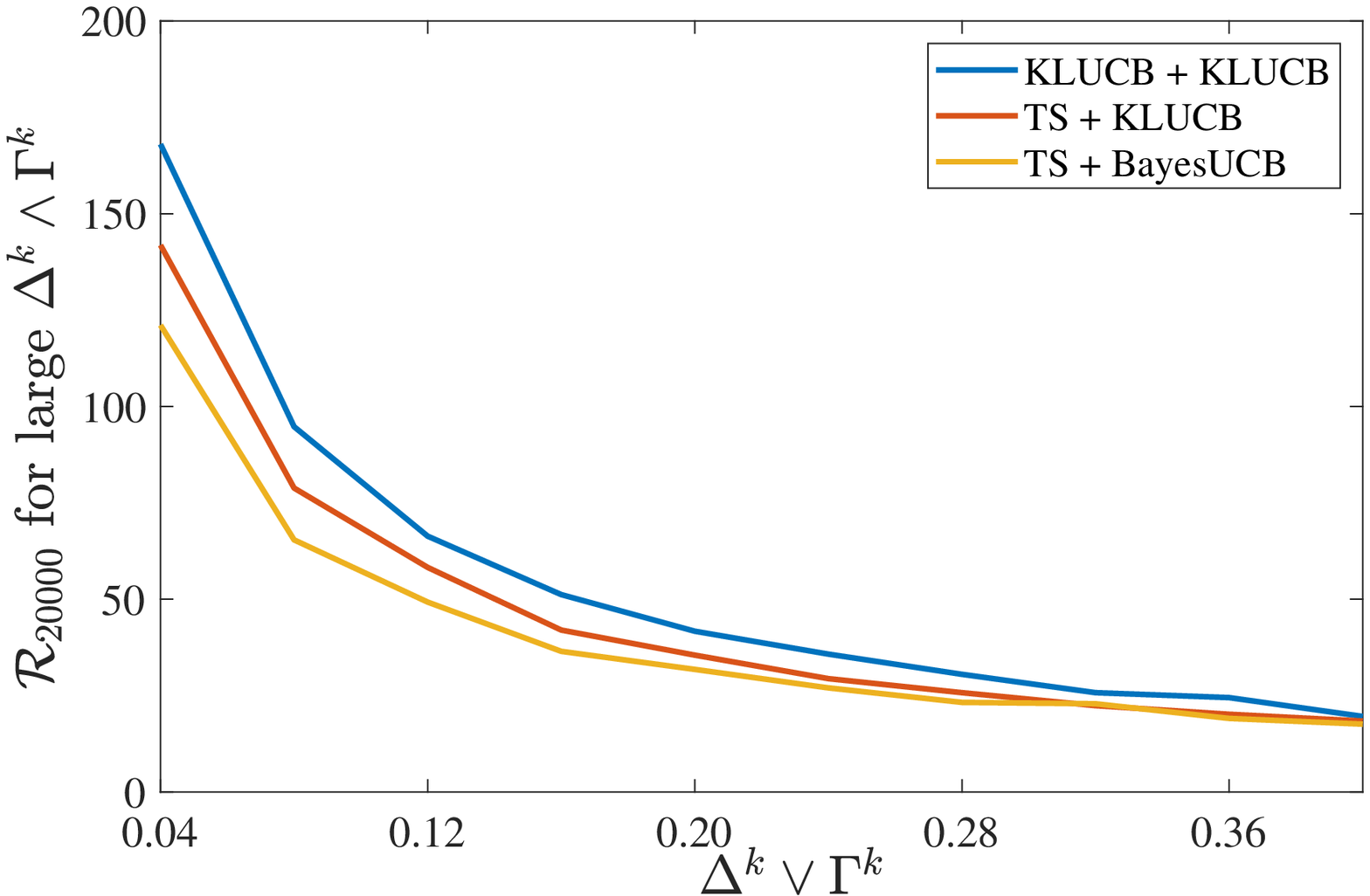}~
    \includegraphics[width = 0.49\linewidth]{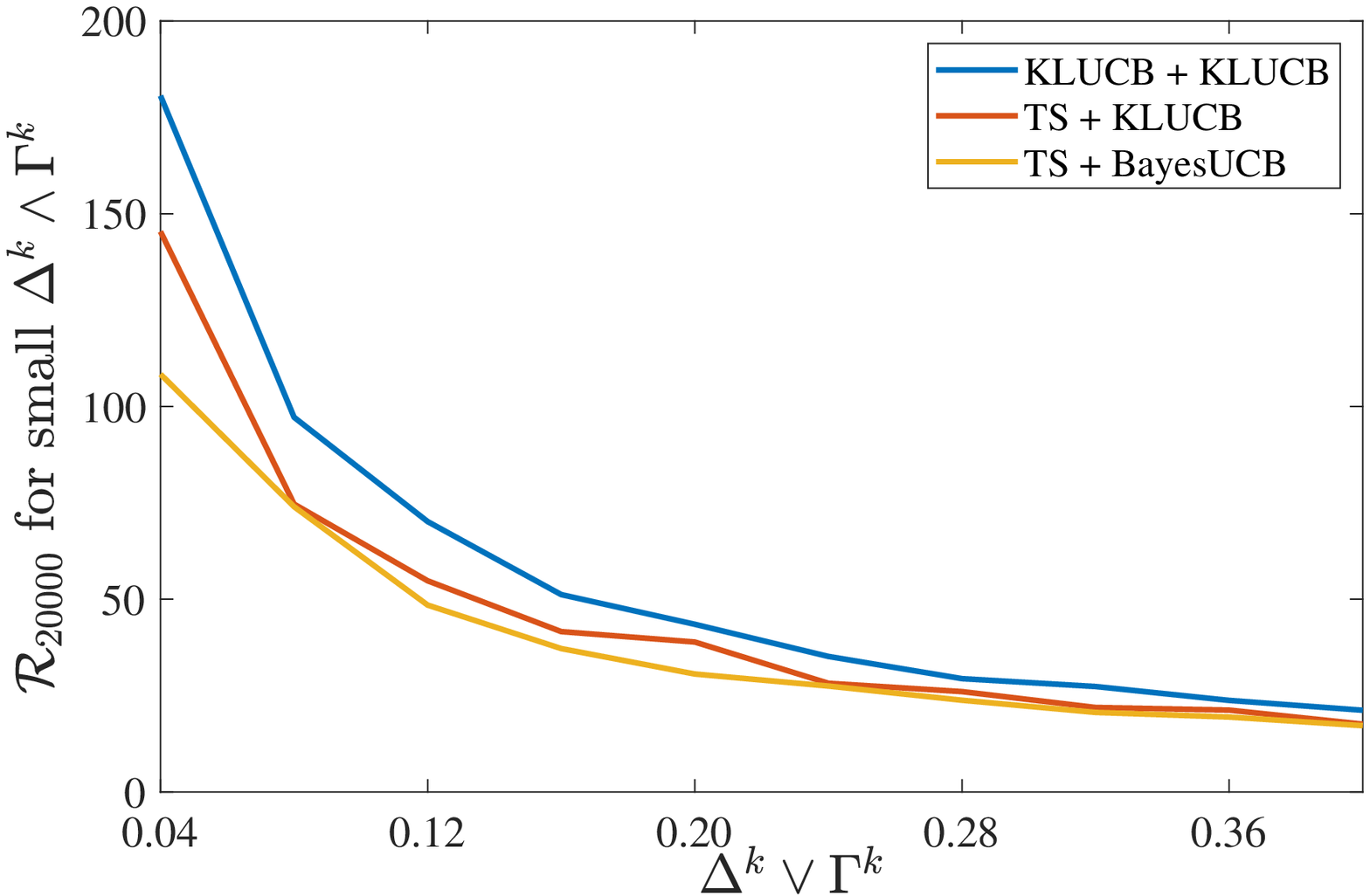}
    \caption{Behaviour of Regret at $T = 20000$ with respect to the maximum gap. Medians over 100 runs are reported. Left is the case of large minimum gaps, while right is the case of small minimum gaps.}
    \label{fig:gap_explore} \vspace{-\baselineskip}
\end{figure}

\bibliography{safe_bandit}
\bibliographystyle{icml2022}

\newpage
\appendix
\onecolumn

\section{Notation and Broad Proof Strategy}\label{appx:strat}

We begin with some notation, and then describe the general proof strategy. 

We will abuse notation and let $\hist_{t-1}$ also stand for the sigma algebra induced by the history, with $\hist_0$ denoting the trivial sigma algebra. Naturally, $\{\hist_{t}\}$ forms a filtration - observe that in the \Thomp cases, the laws of $\rho_t^k$ are measurable with respect to $\hist_{t-1}$. The Bayesian methods also utilise extraneous randomness, as represented by the various $\rho_t^k$s. 
An important observation regarding all the methods we design is that the permissible set $\Perm_t$ is a \emph{predictable} process, i.e., is determined given $\hist_{t-1}$. Indeed, all the methods use an index based on the history to decide $\Perm_t,$ and so it is a deterministic function of the variables $\{(A_s, R_s, S_s) : s< t\}.$ Of course, this is not as such required, but it represents a convenience that we will employ in our proofs. For the sake of brevity, we will denote the conditional laws $\mathbb{P}(\cdot | \hist_{t-1})$ as $\bbP_{t-1}$.

\textbf{Proof strategy} The basic decomposition of regret is in terms of $N_T^k$ - indeed, due to the additive definition, \[ \mathbb{E}[\mathcal{R}_T] = \sum_{k \neq k^*} \mathbb{E}[N_T^k] (\Delta^k \vee \Gamma^k).\] 
Therefore, the main arguments all control $\mathbb{E}[N_T^k]$ for all suboptimal arms $k$. Of course, subsidiary claims about $\mathbb{E}[\mathcal{U}_T]$ also follow from these. 

The arguments separately control $\mathbb{E}[N_T^k]$ for infeasible and inefficient arms. For arms which are both inefficient and infeasible, the tighter of the control offered by these two arguments can be taken, and this yields the form of the expressions in the main text. 

\emph{Infeasible arms} All of our schemes use a safety index $L_t^k$ to populate the permissible set $\Perm_t$. We exploit the properties of this index to control the play of infeasible arms. Indeed, we can decompose \begin{align*} \mathbb{E}[N_T^k] &= \sum \mathbb{P}(A_t = k) \le \sum_{t = 1}^T \mathbb{P}(L_t^k \le \alpha).
\end{align*}

The design of the two indices - that via \KLUCB and via \BayesUCB both ensure that the chance of playing an infeasible arm more than $O(\log(T)/d(\nu^k\|\alpha))$ times is exponentially small. For \KLUCBNS, this is a simple consequence of Chernoff's bound. For \BayesUCBNS, the argument reduces to that for \KLUCB using a connection between the tails of Beta distributions and Binomials.

\emph{Inefficient arms} Following the standard method for confidence bound based index policies, controlling the play of inefficient arms requires some known good index to compare the reward indices to. Naturally, we want to use the index of $k^*,$ but doing so requires that $k^*$ itself is permitted, since otherwise the algorithm never takes its reward index into consideration when choosing an arm. This represents the main deviation from standard proofs.

Let us take the case of \KLUCBNS. The idea is to decompose \begin{align*}
    \mathbb{E}[N_t^K] &= \sum \mathbb{P}(A_t = k) \\
                      &= \sum \mathbb{P}(A_t = k, k^* \not\in \Perm_t) + \mathbb{P}(A_t = k, k^* \in \Perm_t)\\
                      &\le \sum_t \mathbb{P}(k^* \not\in \Perm_t) + \sum_t \mathbb{P}(A_t = k, k^* \in \Perm_t)
\end{align*}  
The first course of action then is to ensure that the first term is small, which exploits the consistency of $L_t^*$. 

This enables us to proceed pretty much as usual. For \KLUCBNS, we decompose the second term as \begin{align*}
    \sum_t \mathbb{P}(A_t = k, k^* \in \Perm_t) &\le \sum_t \mathbb{P}( U_t^* \le \mu^*, k^* \in \Perm_t) + \mathbb{P}(U_t^k < \mu^*, U_t^* \ge \mu^*, k^* \in \Perm_t, A_t = k) \\ 
    &\le \sum_t \mathbb{P}(U_t^* < \mu^*) + \mathbb{P}(U_t^k \ge \mu^*, A_t = k).
\end{align*}
Of course, the final expression is the usual quantity controlled in regret proofs, and this argument can be repeated without change. For the sake of being self-contained, we will sketch these proofs in the subsequent as well. For \KLUCBNS, this is essentially the argument of \citet{garivier2011kl}, while for the \BayesUCB bound, this is the argument of \citet{kaufmann2012bayesUCB} (which itself is very similar to \citet{garivier2011kl}). For the efficiency of \ThompNS, we will use the argument of \cite{agrawal2013further}.

\emph{Remark on showing consistency of $L_t^*$} We observe that, by design, our choices of $L_t^k$ are such that consistency proofs for $U_t^*$ translate directly into those for $L_t^*$ - this is due to the symmetry of the relevant functionals under the maps $(S, \nu^k, \alpha) \mapsto (1-S, 1-\nu^k, 1-\alpha),$ upon doing which $1- L_t^k$ is a $U_t^k$-type upper bound for $1 - \nu^k$. Similarly, the argument for controlling $\sum \mathbb{P}(L_t^k \le \alpha)$ for infeasible arms is basically the same as that for controlling $\sum \mathbb{P}(U_t^k \ge \mu^*)$ for the standard bandit version of the appropriate method. 

That said, we note a deviation from the proof of this consistency for the case of \BayesUCBNS. Since controlling standard regret in a Bayesian setting requires one to compare two \emph{random} indices, \citet{kaufmann2012bayesUCB} use direct comparison of their index $U_{t, \BayesUCBNS}^*$ to $\mu^*$ only enough to argue that $N_t^*$ is at least logarithmically large. With this in hand, they can argue that $U_{t, \BayesUCBNS}^*$ is at least $\mu^* - O(\sqrt{1/\log(T)})$ with high probability and argue that this is unlikely to be exceeded by suboptimal arms. However, to ensure that our (random) safety index $L_t^*$ is consistent, we must compare it to a fixed value $\alpha$, and so this second argument utilising a weakened consistency does not carry over. We handle this by loosening the quantiles $\delta_t^k$ enough so that the first argument itself is sufficient to provide consistency. This represents a gap, which may be possibly be resolved by a stronger analysis.

\emph{Remark on dependence} Note that the sketch above above does not use the potential dependence between the signals $(R,S).$ It is possible that this can be exploited, and this exploitation may gain in importance as we increase the number of safety constraints. We leave this as a direction for further work.

\section{Proof for Doubly Optimistic Confidence Bounds}\label{appx:kl-ucb-proof}

The following lemma essentially follows from the main result of the \KLUCB analysis due to \citet{garivier2011kl}, and forms the key statement to demonstrate our results. We note that this is stated slightly more generically than in their paper, essentially to let us use the same result to show both gap dependent and gap independent bounds. We came across this trick in the work of \citet{agrawal2013further}.

\begin{mylem}[Adaptation of \citet{garivier2011kl}]\label{lem:klucb_key_bound}
    Let $k$ be a suboptimal arm. Then Algorithm \ref{alg:DOCB}, instantiated with the \KLUCB type confidence bounds attains the following guarantees for all $k$.
    \begin{itemize}
        \item If $\Delta^k > 0,$ then for any $x \in (\mu^k, \mu^*),$ \begin{equation} \label{eq:klucb_reward_bound} \mathbb{E}[N_T^k] \le \frac{ \log T + 3 \log\log T}{d(x\|\mu^*)} + 6 \log\log T + \frac{2}{ 1 \wedge  d(x\|\mu^k)} + 24. \end{equation} 
        \item If $\Gamma^k > 0,$ then for any $y \in (\alpha, \nu^k),$ \begin{equation} \label{eq:klucb_safety_bound} \mathbb{E}[N_T^k] \le \frac{ \log T + 3 \log\log T}{d(y\|\alpha)} + \frac{2}{ 1 \wedge  d(y\|\nu^k)}. \end{equation}
    \end{itemize}
\end{mylem}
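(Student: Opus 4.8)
The plan is to reproduce the classical \KLUCB analysis of \citet{garivier2011kl}, importing their delicate concentration estimates wholesale and supplying only the two adaptations specific to the doubly optimistic scheme: the handling of the permissible set $\Perm_t$ in the inefficient-arm bound, and a symmetric treatment of the safety lower bound $L_t^k$ in the infeasible-arm bound. Both bounds rest on the same two elementary tools -- monotonicity of $q \mapsto d(\cdot\|q)$ to convert an index threshold into a bound on $N_t^k$, and summation of Chernoff bounds over the per-arm sample index to control empirical deviations.

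For the reward bound ($\Delta^k > 0$) I would begin from $\mathbb{E}[N_T^k] = \sum_t \mathbb{P}(A_t = k)$ and decompose each round according to $\Perm_t$. Since $A_t = k$ forces $k \in \Perm_t$ and $U_t^k = \max_{j \in \Perm_t} U_t^j$, on $\{A_t = k,\, k^* \in \Perm_t,\, U_t^* \ge \mu^*\}$ we have $U_t^k \ge U_t^* \ge \mu^*$, so \[ \{A_t = k\} \subseteq \{k^* \notin \Perm_t\} \cup \{U_t^* < \mu^*\} \cup \{A_t = k,\, U_t^k \ge \mu^*\}. \] The sum of $\mathbb{P}(U_t^* < \mu^*)$ is exactly the consistency quantity that \citet{garivier2011kl} bound by $O(\log\log T)$, and for $\sum_t \mathbb{P}(k^* \notin \Perm_t) = \sum_t \mathbb{P}(L_t^* > \alpha)$ I would invoke the symmetry $(S,\nu,\alpha)\mapsto(1-S,1-\nu,1-\alpha)$, under which $1 - L_t^*$ is a \KLUCB upper bound for $1-\nu^* \ge 1-\alpha$; thus $\{L_t^* > \alpha\}$ is a consistency-failure event of the identical form and obeys the same estimate. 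Together these give the $6\log\log T + 24$ term.

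The residual $\sum_t \mathbb{P}(A_t = k,\, U_t^k \ge \mu^*)$ is the standard term, split at the threshold $x$. When $\muhat \le x$ and $U_t^k \ge \mu^*$, monotonicity of $q\mapsto d(\muhat\|q)$ and of $a \mapsto d(a\|\mu^*)$ yields $d(x\|\mu^*) \le d(\muhat\|\mu^*) \le \gamma_t/N_t^k \le \gamma_T/N_t^k$, so this event persists only while $N_t^k \le \gamma_T/d(x\|\mu^*)$ and hence occurs at most $(\log T + 3\log\log T)/d(x\|\mu^*) + 1$ times -- the main term. The complementary case is a pure deviation: \[ \sum_t \mathbb{P}(A_t = k,\, \muhat > x) \le \sum_{s\ge 1}\mathbb{P}(\widehat\mu_{k,s} > x) \le \sum_{s\ge1} e^{-s\, d(x\|\mu^k)} \le \frac{2}{1 \wedge d(x\|\mu^k)}. \] For the safety bound ($\Gamma^k > 0$) no comparison arm is needed, since an infeasible arm is played only if it survives into $\Perm_t$: $\mathbb{E}[N_T^k] \le \sum_t \mathbb{P}(A_t = k,\, L_t^k \le \alpha)$. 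Splitting at $y \in (\alpha,\nu^k)$, on $\{\nuhat \ge y,\, L_t^k \le \alpha\}$ the same monotonicity gives $d(y\|\alpha) \le d(\nuhat\|\alpha) \le \gamma_T/N_t^k$, bounding the count by $(\log T + 3\log\log T)/d(y\|\alpha)$, while $\{A_t = k,\, \nuhat < y\}$ contributes $\sum_{s\ge1} e^{-s\, d(y\|\nu^k)} \le 2/(1\wedge d(y\|\nu^k))$; these are precisely the two claimed terms, with no consistency correction.

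The main obstacle is the consistency estimate $\sum_t \mathbb{P}(U_t^* < \mu^*) = O(\log\log T)$ together with its safety mirror $\sum_t \mathbb{P}(L_t^* > \alpha)$: these require the peeling and maximal-inequality argument of \citet{garivier2011kl} over the sample count of the optimal arm, which I would cite rather than reprove. The genuinely new work is routine bookkeeping -- checking that the symmetry map really converts $L_t^*$-consistency into an instance of their estimate, and that the $\Perm_t$-based decomposition is legitimate because $\Perm_t$ is predictable with respect to $\hist_{t-1}$ -- once the correct events are written down.
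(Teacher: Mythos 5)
Your proposal is correct and follows essentially the same route as the paper's proof: the identical three-event decomposition $\{k^* \notin \Perm_t\} \cup \{U_t^* < \mu^*\} \cup \{A_t = k,\, U_t^k \ge \mu^*\}$ for inefficient arms, the comparison-free threshold-plus-Chernoff argument for infeasible arms, and the citation of the self-normalized deviation bound (Theorem 10 of \citealp{garivier2011kl}) for both consistency terms, with your symmetry reduction for $\{L_t^* > \alpha\}$ matching the paper's own remark in \S\ref{appx:strat} (the paper's proof instead writes the equivalent direct inclusion $\{L_t^* > \alpha\} \subset \{\hat\nu_t^* > \nu^*,\, d(\hat\nu_t^*\|\nu^*) > \gamma_t/N_t^*\}$). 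The only differences are cosmetic bookkeeping---you split the main term on the value of $\muhat$ rather than reindexing by the per-arm sample count, which costs at most an additive constant already absorbed in the stated bounds.
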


We will first show the proofs of the two results using the above lemma, and leave proving it until the end.

\subsection{Proof of Theorem \ref{thm:pure_klucb}}\label{appx:gap_dep_klucb_pf}

\begin{proof}

Fix an arm $k$. If $\Delta^k > 0,$ then choose $x \in (\mu^k, \mu^*)$ such that $d(x\|\mu^*) = \frac{d(\mu^k\|\mu^*) }{1 + \varepsilon}$ - this exists since $d(x\|\mu^*)$ is continuous and monotonically decreases from $d(\mu^k\|\mu^*)$ to $0$ as $x$ varies in $(\mu^k, \mu).$ We need to argue that the third term in the bound of (\ref{eq:klucb_reward_bound}) is bounded as $O(\varepsilon^{-2})$. This follows since for small $\varepsilon,$ $x = \mu^k + O(\varepsilon)$. 

Indeed, let us abbreviate $d = d(\mu^k\|\mu^*),$ and observe that the the derivative $d' := \left.\partial_z d(z\|\mu^*) \right|_{z = \mu^k}$ is non-zero, and so $x - \mu^k = \varepsilon \frac{d}{|d'|} + O(\varepsilon^2).$ But then notice that since $d(z \|\mu^k)$ is minimised at $z = \mu^k,$ $d(x\|\mu^k) = \nicefrac12(\widetilde{d}''\varepsilon^2 (d/d')^2) + O(\varepsilon^3),$ where $ \widetilde{d}'':= \left.\partial^2_{zz} d(z\|\mu^k)\right|_{z = \mu^k}$.   We conclude that \[ \frac{2}{d(x\|\mu^k) \wedge 1} = O\left( \frac{d'^2}{d'' d^2 \varepsilon^2}\right), \] which of course is a scaling of $\varepsilon^{-2}$ by a problem dependent constant. 

Next, if $\Gamma^k > 0,$ we proceed similarly to the above, and choose $y \in (\alpha, \nu^k)$ such that $d(y\|\alpha) = d(\nu^k\|\alpha)/(1+\varepsilon).$ By an entirely identical calculation as above, the final term of (\ref{eq:klucb_safety_bound}) is bounded as $O \left( \frac{f'^2}{f''} \frac{1}{d^2(\nu^k \|\alpha) \varepsilon^2} \right),$ where $f' =\left. \partial_z d(z\|\alpha)\right|_{z = \nu^k},$ and $\widetilde{f}'' = \left. \partial^2_{zz} d^2(z\|\nu^k) \right|_{z = \nu^k}.$

Using both of these bounds, we conclude that \begin{align*} \mathbb{E}[N_T^k] &\le \frac{1}{\indi\{\mu^k < \mu^*\}} \left\{ \frac{(1+\varepsilon) \log T}{d(\mu^k\|\mu^*)} + \frac{(1+\varepsilon) 3 \log\log T}{d(\mu^k\|\mu^*)} + 6 \log\log T+ 24+  O\left( \frac{(d'^2/\widetilde{d}'')}{d^2(\mu^k\|\mu^*) \varepsilon^2}\right)\right\}, \\ 
\mathbb{E}[N_T^k] &\le \frac{1}{\indi\{\nu^k > \alpha\}} \left\{\frac{(1+\varepsilon) \log T}{d(\nu^k\|\alpha)} + \frac{(1+\varepsilon) 3 \log\log T}{d(\nu^k\|\alpha)} + O\left( \frac{(f'^2/\widetilde{f}'')}{d^2(\nu^k\|\alpha) \varepsilon^2}\right)\right\}, \end{align*}
where we set $1/\indi\{ p \} = \infty$ when the proposition $p$ is untrue. Of course, recalling that $\indi\{\mu^k < \mu^*\}d(\mu^k\|\mu^*) = d_<(\mu^k\|\mu^*)$ and similarly $d_>(\nu^k\|\nu^*),$ we may choose the tighter of the above bounds to get the result \[\mathbb{E}[N_t^k] \le \frac{(1+\varepsilon) \log T}{d_<(\mu^k\|\mu^*) \vee d_>(\nu^k\|\nu^*)} + O\left( \frac{\log\log T}{d_<(\mu^k\|\mu^*) \vee d_>(\nu^k\|\nu^*)} + \frac{1}{(d_<(\mu^k\|\mu^*) \vee d_>(\nu^k\|\nu^*))^2 \varepsilon^2}\right).  \]

The claimed bounds now follow trivially - to control $\mathbb{E}[\mathcal{R}_T],$ simply multiply by the per-round regret of playing arm $k$, $\Delta^k \vee \Gamma^k,$ and sum. To control $\mathbb{E}[\mathcal{U}_T],$ simply add up the above over the unsafe arms.

\end{proof} 

Note that as the gaps $\Delta^k$ and $\Gamma^k$ decay, the last term scales as $1/(\Delta^k \wedge \Gamma^k)^4$, which only yields a $T^{3/4}$ gap-independent bound.

\subsection{Proof of Theorem \ref{thm:pure_klucb_gap_indep}}\label{appx:klucb_gap_indep_pf}

As is standard, the gap-independent regret bounds follow on observing that arms for which the gap is too small cannot actually incur large regret over $T$ rounds. To this end, let $\mathbf{M} > 0$ be a parameter to be chosen, and express regret as \begin{equation} \label{eqn:regret_in_terms_of_gaps} \mathbb{E}[\mathcal{R}_T] \le  \sum_{k : \Delta^k > \Gamma^k \vee \mathbf{M} } \mathbb{E}[N_T^k] \Delta^k + \sum_{k : \Gamma^k > \Delta^k \vee \mathbf{M} } \mathbb{E}[N_T^k] \Gamma^k + \mathbf{M} \sum_{k : (\Delta^k \vee \Gamma^k) \le \mathbf{M}} \mathbb{E}[N_T^k].\end{equation} The last term is of course bounded by $\mathbf{M}T,$ and so we will end up taking $\mathbf{M}$ of order $\sqrt{K \log T/T}$ to control regret. It remains to show that $\mathbb{E}[N_T^k]$ is not too large for arms with large gaps. To this end, we first develop bounds dependent explicitly on the gaps using (\ref{eq:klucb_reward_bound}) and (\ref{eq:klucb_safety_bound}).

\begin{mylem}
    For any arm $k$ with $\Delta^k > 0,$ \[ \mathbb{E}[N_T^k] \le \frac{2\log T + 6\log\log T + 4}{(\Delta^k)^2} + 6 \log\log T + 24 . \]
    Similarly, for any arm $k$ with $\Gamma^k > 0,$ \[ \mathbb{E}[N_T^k] \le \frac{2\log T + 6\log\log T + 4}{(\Gamma^k)^2}.\]
\end{mylem}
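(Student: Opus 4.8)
The plan is to instantiate the key bound of the preceding lemma at a convenient value of its free parameter and then invoke Pinsker's inequality to convert the KL divergences into squared gaps. I treat the two claims in parallel, since they are symmetric.

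For the inefficiency bound I would apply (\ref{eq:klucb_reward_bound}) with the midpoint choice $x = (\mu^k + \mu^*)/2$, which lies in $(\mu^k, \mu^*)$ precisely because $\Delta^k > 0$. With this choice $x - \mu^k = \mu^* - x = \Delta^k/2$, so Pinsker's inequality $d(a\|b) \ge 2(a-b)^2$ yields both $d(x\|\mu^*) \ge (\Delta^k)^2/2$ and $d(x\|\mu^k) \ge (\Delta^k)^2/2$. Consequently the leading term of (\ref{eq:klucb_reward_bound}) is at most $(2\log T + 6\log\log T)/(\Delta^k)^2$, while the $6\log\log T$ and $24$ terms are carried over unchanged.

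The only step requiring any care is the truncated term $2/(1 \wedge d(x\|\mu^k))$. I would write $2/(1\wedge d(x\|\mu^k)) = \max(2,\, 2/d(x\|\mu^k)) \le \max(2,\, 4/(\Delta^k)^2)$, and then note that since the means lie in $[0,1]$ we have $\Delta^k \le 1$, whence $4/(\Delta^k)^2 \ge 4 > 2$ and the term is bounded by $4/(\Delta^k)^2$. Collecting the three contributions gives exactly $(2\log T + 6\log\log T + 4)/(\Delta^k)^2 + 6\log\log T + 24$, as claimed.

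The safety bound follows by the identical argument applied to (\ref{eq:klucb_safety_bound}), now with $y = (\alpha + \nu^k)/2 \in (\alpha, \nu^k)$ so that $y - \alpha = \nu^k - y = \Gamma^k/2$ and Pinsker gives $d(y\|\alpha), d(y\|\nu^k) \ge (\Gamma^k)^2/2$. Since (\ref{eq:klucb_safety_bound}) carries no additive constant terms, bounding its two summands by $(2\log T + 6\log\log T)/(\Gamma^k)^2$ and $4/(\Gamma^k)^2$ respectively produces the stated bound. There is no genuine obstacle here; the argument is a short calculation, and the only subtlety worth flagging is the appeal to $\Delta^k, \Gamma^k \le 1$ to absorb the truncated term into the squared-gap denominator.
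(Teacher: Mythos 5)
Your proof is correct and follows essentially the same route as the paper's: instantiate the key lemma at the midpoint $x = (\mu^k+\mu^*)/2$ (resp. $y = (\alpha+\nu^k)/2$), apply Pinsker's inequality to bound both KL terms below by half the squared gap, and absorb the truncated term $2/(1 \wedge d(\cdot\|\cdot))$ into the $4/(\Delta^k)^2$ contribution using the fact that the gaps are at most $1$. Your explicit treatment of the truncated term via $2/(1\wedge d) = \max(2, 2/d)$ is just a more detailed spelling-out of the paper's remark that $(\Delta^k)^2/2 \le 1$.
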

\begin{proof}
    First, take a $k$ with $\Delta^k > 0,$ and in the bound (\ref{eq:klucb_reward_bound}), set $x = (\mu^k + \mu^*)/2 =: \bar\mu^k.$ By Pinsker's inequality, $d(\bar\mu^k\|\mu^*) \ge 2(\mu^* - \bar\mu^k)^2 = (\Delta^k)^2/2,$ and $d(\bar\mu^k\|\mu^k) \ge 2(\bar\mu^k - \mu^k)^2 = (\Delta^k)^2/2.$ Plugging these into the bound yields the claim upon observing that $(\Delta^k)^2/2 \le 1.$
    
    For arms with $\Gamma^k > 0,$ we can develop a similar control resulting from $(\ref{eq:klucb_safety_bound})$ by setting $y = (\alpha + \nu^k)/2$. 
\end{proof}

We are now in a position to show the claim. 

\begin{proof}[Proof of Theorem \ref{thm:pure_klucb_gap_indep}] 

The first term in (\ref{eqn:regret_in_terms_of_gaps}) can be bounded as \[ \sum_{\Delta^k > \Gamma^k \vee \mathbf{M}} \frac{2\log T + 6\log\log T + 2}{\Delta^k} + (6 \log\log T + 24) \Delta^k \le K_\Delta \left( \frac{2\log T + 6\log\log T  + 4}{\mathbf{M}} + 6 \log\log T + 24\right),  \] where $K_\Delta = |\{ k : \Delta^k > \Gamma^k\}|$.

Similarly, the second term in (\ref{eqn:regret_in_terms_of_gaps}) can be bounded as \[ \sum_{\Gamma^k > \Delta^k \vee \mathbf{M}} \frac{2\log T + 6\log\log T + 4}{\Gamma^k} \le K_\Gamma \frac{2\log T + 6\log\log T + 4}{\mathbf{M}}, \] where $K_\Gamma = |\{k : \Gamma^k > \Delta^k\}|.$

Finally, observing that $K_\Gamma + K_\Delta \le K,$ we conclude that \[ \mathbb{E}[\mathcal{R}_T] \le \frac{K}{\mathbf{M}} (2\log T + 6\log\log T + 4) +  (6 \log\log T +24)\sum (\Delta^k \vee \Gamma^k) +  T \mathbf{M} . \]

The claim follows on choosing $\mathbf{M} = \sqrt{K (2\log T + 6\log\log T + 4)/T},$ and observing that $ 2 \log T \ge 4$ for $T \ge 8,$ and $2\log\log T \le \log T$ for all $T$. 
\end{proof}

\subsection{Proof of Lemma \ref{lem:klucb_key_bound}  }

\begin{proof} We make the argument separately for infeasible and inefficient arms. The former is easier, so let us begin with it.

\textit{Infeasible arms} 

We follow the decomposition from \S\ref{appx:strat}. Recall that $L_t^k = \min\{q \le \nuhat : d(\nuhat \|q) \le \gamma_t/N_t^k\}$. Since $d(\nuhat\|x)$ is a continuous decreasing function on $[0, \nuhat],$ if $L_t^k \le \alpha$ then it must either hold that $\nuhat \le \alpha,$ or $d(\nuhat\|\alpha) \le d(\nuhat\|L_t^k) = \gamma_t/N_t^k.$ Either way, we have that $d_>(\nuhat \|\alpha) \le \gamma_t/N_t^k$.

Now, let $\hat\nu^k(s)$ denote the value of $\nuhat$ after the $s$th time we play the arm $k$. We observe that \begin{align*}
    \sum_t \indi\{A_t = k\} &\le \sum_{t= 1}^T \indi\{A_t = k, d_>(\nuhat\|\alpha) \le \gamma_t/N_t^k\}\\
                            &= \sum_{t= 1}^T \sum_{s = 1}^t \indi\{A_t = k, s d_>(\nuhat \|\alpha) \le \gamma_t, N_t^k = s\}\\
                            &\le \sum_{t= 1}^T \sum_{s = 1}^t \indi\{A_t = k, N_t^k = s\} \cdot \indi\{ s d_>(\hat\nu^k(s)\|\alpha) \le \gamma_T\} \\
                            &= \sum_{s = 1}^T \indi\{s d_>(\hat\nu^k(s)\|\alpha) \le \gamma_T\} \cdot  \sum_{t = s}^T \indi\{A_t = k, N_t^k = s\} \\
                            &= \sum_{s = 1}^T \indi\{s d_>(\hat\nu^k(s)\|\alpha) \le \gamma_T\},
\end{align*}
where we have used that $\gamma_t$ increases with $T$, and for any value $s$, there is at most one time step on which $N_t^k$ is exactly $s$ and we play the action $k$.

Now, we observe that for any $y \in (\alpha, \nu^k),$ the event $\{ d_>(\hat\nu^k(s)\|\alpha) \le d(y\|\alpha)\} = \{ \hat\nu^k(s) \le y\}.$ Indeed, $d_>(u\|\alpha)$ is exactly equal to $0$ for $u \le \alpha,$ and monotonically increasing for $u > \alpha$. But, recalling Chernoff's bound (which applies since the random variables are bounded in $[0,1]$), $P( \hat\nu^k(s) \le y ) \le \exp(-s d(y\|\nu^k)).$ This sets up the following calculation.

Let $y \in (\alpha, \nu^k),$ and define $S(y) := \lfloor \gamma_T / d(y\|\alpha) \rfloor,$ so that for all $s > S(y),$ $\gamma_T/s < d(y\|\alpha)$. Then \begin{align}\label{eq:klucb_safety_guts}
    \mathbb{E}[N_T^k] &= \sum_{t = 1}^T \mathbb{P}(A_t = k) \notag \\
                      &\le \sum_{s = 1}^T \mathbb{P}(s d_>(\hat\nu^k(s)\|\alpha) \le \gamma_t) \notag\\
                      &\le S(y) + \sum_{s = S(y) + 1}^T \mathbb{P}( d_>(\hat\nu^k(s) \|\alpha) \le d(y\|\alpha)) \notag\\
                      &\le S(y) + \sum_{s = S(y) + 1}^T e^{-s d(y\|\nu^k)}\notag \\
                      &\le S(y) + \frac{e^{-(S(y) + 1) d(y\|\nu^k)}}{1- e^{-d(y\|\nu^k)}}\notag\\
                      &\le S(y) + \frac{2}{1 \wedge d(y\|\nu^k)},
\end{align}    
where the last term uses that $(S(y) + 1)d(y\|\nu^k) \ge 0$, and $\frac{1}{1-e^{-u}} \le 2/(1\wedge u).$ But $S(y) \le \frac{\gamma_T}{d(y\|\alpha)} = \frac{\log T + 3 \log\log T}{d(y\|\alpha)}.$

\emph{Inefficient arms} Again, we follow the decomposition from \S\ref{appx:strat}, namely \[\mathbb{E}[N_T^k] \le \sum_t \mathbb{P}(k^* \not\in \Perm_t) + \mathbb{P}(U_t^* < \mu^*) + \mathbb{P}(U_t^k \ge \mu^*).  \] Observe that $\mathbb{P}(k^* \not\in \Perm_t) = \mathbb{P}( L_t^* > \alpha).$

As noted in \S\ref{appx:strat}, the final term is controlled in exactly the same way as the inefficiency control. Indeed, $U_t^k = \max\{q \ge \muhat: d(\muhat\|q) \le \gamma_t/N_t^k\}$. Since $d(\muhat\|x)$ increases in the range $[\muhat, 1],$ if $U_t^k \ge \mu^*,$ then either $\muhat > \mu^*,$ or $d(\muhat\|\mu^*) \le \gamma_t/N_t^k$.  Developing the subsequent bound in exactly the same way, we find that \[ \sum_t \indi\{A_t = k, U_t^k \ge \mu^*\} \le \sum_{s = 1}^T \indi\{s d_<(\hat\mu^k(s)\|\mu^*) \le \gamma_T\},\] and again, for any $x \in (\mu^k, \mu^*),$ $P( d_<(\hat\mu^k(s)\|\mu^*) \le d(x\|\mu^*) ) = P(\hat\mu^k(s) \le x) \le \exp( - d(x\|\mu^k)).$ The resulting sum then gives the bound \[\sum \mathbb{P}(U_t^k \ge \mu^*, A_t = k) \le S(x) + \frac{2}{1 \wedge d(x\|\mu^k)}, \] where $S(x) \le \frac{\gamma_T}{d(x\|\mu^*)}.$

It remains to control $\sum \mathbb{P}(L_t^* > \alpha) + \mathbb{P}(U_t^* < \mu^*)$. To control the second term, we first exploit the monotonicity of $d(\hat\mu_t^*\|q)$ on $[\hat\mu_t^*,1]$ to note that \[\{U_t^* > \mu^*\} = \{\max\{ q > \hat\mu_t^* : d(\hat\mu_t^*\|q) \le \gamma_t/N_t^* \} < \mu^* \} = \{ \hat\mu_t^* < \mu^*, d(\hat\mu_t^*\|\mu^*) > \gamma_t/N_t^*\}.  \]

The final event is the subject of \citep[Theorem 10,][]{garivier2011kl}, who show that for any $z>0$, and any $k$ \begin{equation}\label{eq:klucb_consistency}
    \mathbb{P}(N_t^k d(\hat\mu_t^k\|\mu^k) > z) \le e (z \log(t) + 1) e^{-z} 
\end{equation}
The statement extends, of course, to the empirical mean of any subsampling of any i.i.d.~process in $[0,1]$. The gist of the argument is to partition the space according to the size of $N_t^k$. If $N_t^k$ is non-trivially large at some fixed time $t$, then it is exponentially unlikely for $Nd(\hat\mu^k_t\|\mu^k)$ to exceed $z$, essentially because the cumulant generating function is bounded by that of a Bernoulli, and $d$ is the Fenchel dual of this function for the Bernoulli. It is then just a question of stitching together these bounds over a well-chosen grid of values that $N_t^k$ may take (concretely, a geometrically increasing grid is used, and we end up with a $\log t$ due to this grid), and accounting for the poor behaviour for small $N_t^k$ (whence the premultiplying $e$). The argument presented in the supplement to the follow up work by \citet{cappe2013kullback} is somewhat cleaner than the original, and might be preferred.

Applying (\ref{eq:klucb_consistency}) to $\hat\mu_t^*$ and $z = \gamma_t$, we find that \[\mathbb{P}(U_t^* < \mu^*) \le e(\gamma_t \log (t) + 1)e^{-\gamma_t}, \] and so \begin{align}\label{eq:klucb_u_t_consistency}
    \sum_{t = 3}^T \mathbb{P}(U_t^* < \mu^*) &\le \sum_{t = 3}^T \frac{e( \log^2 t + 3\log t \cdot \log\log t + 1}{t \log ^3(t)} \notag \\
                                             &\le e ( \log\log T + 4).
\end{align}

Control on $\sum \mathbb{P}(L_t^* > \alpha)$ follows identically. Exploiting monotonicity twice, \[ \{L_t^* > \alpha\} = \{\hat\nu_t^* > \alpha, d(\hat\nu^*_t\|\alpha) > \gamma_t/N_t^* \} \subset \{\hat\nu_t^* > \nu^*, d(\hat\nu_t^*\|\nu^*) > \gamma_t/N_t^*\},\]
and thus, applying (\ref{eq:klucb_consistency}) to $\hat\nu_t^*$ with $z = \gamma_t,$ \begin{equation} \label{eq:klucb_l_t_consistency} \sum_{t = 3}^T \mathbb{P}(k^* \not\in \Perm_t) = \sum_{t = 3}^T \mathbb{P}(L_t^* > \alpha) \le e\log\log T + 4e .\end{equation}

Putting these together, we have \[\mathbb{E}[N_t^k] \le \frac{\log T + 3\log\log T}{d(x\|\mu^*)} + 6\log\log T + 24 + \frac{2}{1 \wedge d(x\|\mu^k)},  \] where we have used $2e < 6, 8e + 2 < 24$.
\end{proof}

\section{Proofs for Thompson Sampling with Optimistic Safety Indices}\label{appx:TS+UCB}

The first observation is that since the safety index $L_t^k$ remains unchanged, we may directly use the proofs of Lemma \ref{lem:klucb_key_bound} to observe that the bounds $(\ref{eq:klucb_safety_bound})$ and (\ref{eq:klucb_l_t_consistency}) continue to hold, that is, \begin{align*}
    \mathbb{E}[N_T^k] &\le \inf_y \frac{1}{\indi\{ \alpha < y < \nu^k\} } \left( \frac{\log T + 3 \log\log T}{d(y\|\alpha)} + \frac{2}{1 \wedge d(y\|\nu^k)} \right),\\
    \sum_{t = 3}^T &\mathbb{P}(k^* \not\in \Perm_t) \le e\log\log T + 4e.
\end{align*}

The focus of the study then is to ensure that the \Thomp analysis extends to control the play of inefficient arms. This pretty much exploits the analysis of \Thomp due to \citet{agrawal2013further}, although alternate analyses such as that of \citet{kaufmann2012thompson} can equivalently be used. 

The main bound is summarised in the following \begin{mylem}[Adaptation of \citet{agrawal2013further}]\label{lem:ts_efficiency_bound}
    There exists a universal constant $C$ such that if $\Delta^k > 0,$ then for any $u, v$ such that $\mu^k < u < v < \mu^*$, \begin{equation}\label{eq:TS_efficiency}\sum_{t = 1}^T \mathbb{P}(A_t = k, k^* \in \Perm_t) \le \frac{\log T}{d(u\|v)} + \frac{3}{1 \wedge d(u\|\mu^k)} +  \frac{C}{(\mu^* - v)^2} \left(1  + \log \frac{1}{\mu^* -v} + \log \left( \frac{1}{1 - e^{-d(v\|\mu^*)}} \wedge  T(\mu^* - v) \right) \right)  \end{equation}
\end{mylem}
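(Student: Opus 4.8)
The plan is to reduce the statement to the standard Thompson sampling analysis of \citet{agrawal2013further}, exploiting the fact (noted in \S\ref{appx:strat}) that $\Perm_t$ is predictable with respect to $\hist_{t-1}$. I would condition throughout on $\hist_{t-1}$: the event $\{k^* \in \Perm_t\}$ is then determined, and on its complement $\mathbb{P}(A_t = k, k^* \in \Perm_t \mid \hist_{t-1}) = 0$. On $\{k^* \in \Perm_t\}$ the optimal arm is a bona fide competitor, so $A_t = k$ forces $\rho^k_t \ge \rho^*_t$, exactly as in a standard bandit in which $k$ must beat $k^*$; the presence of further permitted arms only makes $A_t = k$ less likely, and so never breaks any comparison inequality below. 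This is the only place the safety structure enters, and it is benign.

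With this reduction, introduce the good events $E^\mu_t := \{\muhat \le u\}$ and $E^\theta_t := \{\rho^k_t \le v\}$ and split
\begin{align*}
\mathbb{P}(A_t = k, k^* \in \Perm_t) &= \mathbb{P}(A_t = k, k^* \in \Perm_t, E^\mu_t, E^\theta_t) \\
&\quad + \mathbb{P}(A_t = k, k^* \in \Perm_t, E^\mu_t, \overline{E^\theta_t}) + \mathbb{P}(A_t = k, \overline{E^\mu_t}).
\end{align*}
The third term asks that $k$ be played with an inflated empirical mean $\muhat > u > \mu^k$; since the count rises by one on each such play, a Chernoff bound on $\hat\mu^k(s)$ together with a geometric sum bounds it by $3/(1 \wedge d(u\|\mu^k))$, mirroring the infeasible-arm computation leading to (\ref{eq:klucb_safety_guts}). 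The second term requires the Beta sample to exceed $v$ while the empirical mean stays below $u$; translating the sample deviation into a Binomial tail via the Beta--Binomial identity, this occurs on at most $\log T / d(u\|v)$ of the plays of $k$ in expectation, giving that term.

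The substantive work, and the main obstacle, is the first term, in which both the empirical mean and the sample of $k$ are well behaved yet $k$ is still played. Here I would invoke the core comparison of \citet{agrawal2013further}: with $p_{k,t} := \mathbb{P}(\rho^*_t > v \mid \hist_{t-1})$, on $\{k^* \in \Perm_t\} \cap E^\theta_t$ a play of $k$ forces $\rho^*_t \le v$, and the independent resampling of $\rho^*_t$ yields
\[ \mathbb{P}(A_t = k, E^\theta_t \mid \hist_{t-1}) \le \frac{1 - p_{k,t}}{p_{k,t}}\, \mathbb{P}(A_t = k^* \mid \hist_{t-1}). \]
Since $p_{k,t}$ is a function of the posterior of $k^*$ alone, summing the right side groups by the plays of $k^*$ and leaves $\mathbb{E}\!\left[\sum_{s} (1 - p(s))/p(s)\right]$, where $p(s)$ is the probability a $\mathrm{Beta}$ draw for $k^*$ after $s$ plays exceeds $v$. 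The delicate step is bounding this sum: one controls the anti-concentration of the $\mathrm{Beta}(R^*+1, s - R^* + 1)$ law about the fixed level $v < \mu^*$, splitting into a small-$s$ regime (large deviations give the $1/(1 - e^{-d(v\|\mu^*)})$ factor) and a large-$s$ regime (concentration drives $p(s)$ to one, contributing the $T(\mu^* - v)$ cap together with the $(\mu^* - v)^{-2}$ and $\log(1/(\mu^* - v))$ factors). This computation concerns only the reward posteriors of $k$ and $k^*$, so it transfers verbatim from \citet{agrawal2013further} once the conditioning on $\{k^* \in \Perm_t\}$ is in place, and the universal constant $C$ is the one it produces.
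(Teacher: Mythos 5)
Your proposal is correct and takes essentially the same route as the paper's proof: the same three-way decomposition via the good events on $\muhat$ and $\rho_t^k$, the same Chernoff and Beta--Binomial controls for the terms $\mathbb{P}(A_t = k, \overline{E^\mu_t})$ and $\mathbb{P}(A_t = k, E^\mu_t, \overline{E^\theta_t})$, and the same key comparison $\mathbb{P}_{t-1}(A_t = k, \rho_t^k \le v, k^* \in \Perm_t) \le \frac{1-p_{k,t}}{p_{k,t}}\,\mathbb{P}_{t-1}(A_t = k^*)$ resting on the predictability of $\Perm_t$, with the residual sum $\mathbb{E}\bigl[\sum_s (1-p(s))/p(s)\bigr]$ handled by Lemma 2 of \citet{agrawal2013further}. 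The only deviations are bookkeeping (where the additive $O(1/(1\wedge d(u\|\mu^k)))$ constants land) and that the paper explicitly carries out the integral comparisons converting the Agrawal--Goyal sum into the stated closed form, which you defer to their analysis.
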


Let us first demonstrate the result from the main text using the above Lemma. 

\begin{proof}[Proof of Theorem \ref{thm:stop_gap_dep}]
    We first argue the theorem.
    
    For infeasible arms, instantiate (\ref{eq:klucb_safety_bound}) with a $y$ such that $d(y\|\alpha) = d(\nu^k\|\alpha)/(1+\varepsilon)$. Since as previously argued, the resulting $d(y\|\nu^k)$ is $\Theta(\varepsilon^2).$
    
    For inefficient arms, consider the decomposition \[ \mathbb{E}[N_T^k] = \sum_{t = 1}^T \mathbb{P}(A_t = k) \le  \sum_{t = 1}^T \mathbb{P} (k^* \not\in \Perm_t) + \sum_{t = 1}^T \mathbb{P}(k^* \in \Perm_t, A_t = k). \]
    
    The first term is bounded as $3\log\log T.$ For the second term, we instantiate the bound (\ref{eq:TS_efficiency}) with a $u$ and a $v$ chosen so that \begin{enumerate}
        \item $d(u\|\mu^*) = d(\mu^k\|\mu^*)/\sqrt{1+\varepsilon}$
        \item $d(u\|v) = d(u\|\mu^*)/\sqrt{1+\varepsilon} = d(\mu^k\|\mu^*)/(1+\varepsilon),$
    \end{enumerate}
    both of which exist by continuity. 
    
    Showing the bound then requires control on $u - \mu^k$ and $\mu^* -v$ (using the upper bound $d(a\|b) \ge 2(a-b)^2$). To this end, as in the proof of Theorem \ref{thm:pure_klucb}, observe that $u = \mu^k + \Theta(\sqrt{1 + \varepsilon} -1) = \mu^k + \Theta(\varepsilon).$ Similarly, $v = \mu^* - \Theta(\varepsilon).$ Therefore,  $d(u\|\mu^k), d(v\|\mu^*) = \Theta(\varepsilon^{-2}).$ Finally, since this $\varepsilon^{-2}$ term does not grow with $T$, $(d(v\|\mu^*))^{-1} \wedge T = O(\varepsilon^{-2})$.  
    
    We may now conclude the argument exactly as in the proof of Theorem \ref{thm:pure_klucb}
\end{proof}

Similarly to the case for Algorithm \ref{alg:DOCB}, this scheme also admits a gap-independent bound. \begin{myprop}\label{prop:STOP_gap_indep}
    Algorithm \ref{alg:STOP}, instantiated with \KLUCB type lower confidence bounds, attains the gap independent regret bound \[ \mathbb{E}[\mathcal{R}_T] \le O(\sqrt{KT \log T} + K \log \log T).\]
    \begin{proof}
        For infeasible arms, instantiate (\ref{eq:klucb_safety_bound}) with $y = (\alpha + \nu^k)/2$ to conclude that \[ \mathbb{E}[N_T^k] \le O\left(\frac{\log T}{(\Gamma^k)^2}\right)\]
            
        For inefficient arms, instantiate (\ref{eq:TS_efficiency}) with $u = \mu^k + \Delta^k/3,$ and $v = \mu^k + 2\Delta^k/3.$ Then $\mu^* - v = v - u = u - \mu^k = \Delta^k/3,$ and by observing that $d(v\|\mu^*)^{-1} \wedge T\Delta^k/3 \le T\Delta^k/3,$ we have the upper bound \[ \mathbb{E}[N_T^k] \le O(\log\log T) +  O\left( \frac{\log T}{(\Delta^k)^2} + \frac{1 + \log(1/\Delta^k) + \log(T\Delta^k)}{(\Delta^k)^2} \right) = O\left(\log\log T + \frac{\log T}{(\Delta^k)^2} \right).\]
        
        Taking the tighter of these bounds, and partitioning according to the size of $\Delta^k \vee \Gamma^k$, we have the bound \[ \mathbb{E}[\mathcal{R}_T] \le \inf_{\mathbf{M} > 0} T\mathbf{M} + O \left( \frac{K \log T}{\mathbf{M}}\right)+ O(K \log\log T), \] giving the  claim upon optimisation. 
    \end{proof}
\end{myprop}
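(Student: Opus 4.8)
The plan is to reduce everything to gap-dependent control of $\mathbb{E}[N_T^k]$ expressed through the \emph{squared} gaps, and then to run the same threshold-balancing argument used for Theorem~\ref{thm:pure_klucb_gap_indep}. The crucial simplification is that Algorithm~\ref{alg:STOP} uses exactly the same safety index $L_t^k$ as Algorithm~\ref{alg:DOCB}, so the infeasible-arm bound (\ref{eq:klucb_safety_bound}) and the consistency estimate (\ref{eq:klucb_l_t_consistency}) transfer without any change. First I would dispatch the infeasible arms: instantiate (\ref{eq:klucb_safety_bound}) at the midpoint $y = (\alpha+\nu^k)/2$ and apply Pinsker's inequality, which gives $d(y\|\alpha), d(y\|\nu^k) \ge (\Gamma^k)^2/2$, hence $\mathbb{E}[N_T^k] = O(\log T/(\Gamma^k)^2)$.

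The real work is in the inefficient arms, where the \KLUCB reward bound is replaced by the \Thomp efficiency estimate of Lemma~\ref{lem:ts_efficiency_bound}. Using the decomposition $\mathbb{E}[N_T^k] \le \sum_t \mathbb{P}(k^*\notin\Perm_t) + \sum_t \mathbb{P}(A_t=k,\, k^*\in\Perm_t)$, the first sum is $O(\log\log T)$ by (\ref{eq:klucb_l_t_consistency}), while the second is handled by (\ref{eq:TS_efficiency}). I would place $u,v$ at equal spacing inside $(\mu^k,\mu^*)$, taking $u = \mu^k + \Delta^k/3$ and $v = \mu^k + 2\Delta^k/3$ so that $\mu^*-v = v-u = u-\mu^k = \Delta^k/3$; Pinsker then turns every divergence appearing in (\ref{eq:TS_efficiency}) into $\Omega((\Delta^k)^2)$, and the leading term becomes $O(\log T/(\Delta^k)^2)$.

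The one place that needs genuine care — and which I expect to be the only real obstacle — is the additive term $\tfrac{C}{(\mu^*-v)^2}(1 + \log\tfrac{1}{\mu^*-v} + \log(\tfrac{1}{1-e^{-d(v\|\mu^*)}} \wedge T(\mu^*-v)))$ of (\ref{eq:TS_efficiency}). A naive bound $\log\tfrac{1}{\mu^*-v} = \log(3/\Delta^k)$ threatens to blow up as $\Delta^k\to 0$ and ruin the rate. The fix is to retain the truncation $T(\mu^*-v) = T\Delta^k/3$ inside the minimum: the two logarithms then combine, $\log\tfrac{1}{\Delta^k} + \log(T\Delta^k) = \log T + O(1)$, uniformly in $\Delta^k$. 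Consequently this term is also $O(\log T/(\Delta^k)^2)$, so altogether $\mathbb{E}[N_T^k] = O(\log\log T + \log T/(\Delta^k)^2)$ for inefficient arms.

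Finally, taking the tighter of the two estimates per arm yields $\mathbb{E}[N_T^k] = O(\log\log T + \log T/(\Delta^k\vee\Gamma^k)^2)$, and I would conclude exactly as in Theorem~\ref{thm:pure_klucb_gap_indep}. Splitting the regret at a threshold $\mathbf{M}$, arms with $\Delta^k\vee\Gamma^k \le \mathbf{M}$ contribute at most $\mathbf{M}T$ in aggregate, while each of the at most $K$ arms with larger gap contributes $(\Delta^k\vee\Gamma^k)\mathbb{E}[N_T^k] = O(\log T/\mathbf{M})$ plus an $O((\Delta^k\vee\Gamma^k)\log\log T)$ term. This gives $\mathbb{E}[\mathcal{R}_T] \le T\mathbf{M} + O(K\log T/\mathbf{M}) + O(K\log\log T)$, and optimising at $\mathbf{M} = \Theta(\sqrt{K\log T/T})$ delivers the claimed $O(\sqrt{KT\log T} + K\log\log T)$.
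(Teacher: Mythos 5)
Your proposal is correct and follows essentially the same route as the paper's own proof: the identical choice $y = (\alpha+\nu^k)/2$ for infeasible arms, the identical placements $u = \mu^k + \Delta^k/3$, $v = \mu^k + 2\Delta^k/3$ in Lemma~\ref{lem:ts_efficiency_bound} for inefficient arms (including the key observation that keeping the $T(\mu^*-v)$ branch of the minimum makes $\log\frac{1}{\Delta^k} + \log(T\Delta^k) = \log T + O(1)$), and the same threshold-$\mathbf{M}$ partition optimised at $\mathbf{M} = \Theta(\sqrt{K\log T/T})$. No gaps; your treatment of the additive term in (\ref{eq:TS_efficiency}) is exactly the step the paper compresses into ``$d(v\|\mu^*)^{-1} \wedge T\Delta^k/3 \le T\Delta^k/3$.''
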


It remains to show the key Lemma. Again, we note that the key ideas are due to \citet{agrawal2013further}.

\begin{proof}[Proof of Lemma \ref{lem:ts_efficiency_bound}]
    Fix a $k$. The values $u$ and $v$ essentially represent indices that we can compare the random scores $\rho_t^*$ and $\rho_t^k$ to. To this end, we define the `good' events \begin{align*} \good{\mu,k}t := \{ \muhat \le u \}, \\ \good{\rho,k}t:=\{ \rho_t^k \le v\}. \end{align*}
    Notice that $\good{\mu,k}t$ lies in $\hist_{t-1}$.
    
    Now, we start with the decomposition \begin{align}\label{ts:eff_decomposition}
        \mathbb{P}(A_t = k, k^* \in \Perm_t) &= \mathbb{P}(A_t = k, k^* \in \Perm_t, \good{\mu,k}t, \good{\rho,k}t) + \mathbb{P}(A_t = k, k^* \in \Perm_t, \good{\mu,k}t, (\good{\rho,k}t)^c) + \mathbb{P}(A_t = k, k^* \in \Perm_t, (\good{\mu,k}t)^c) \notag \\
        &\le \mathbb{P}(A_t = k, k^* \in \Perm_t, \good{\mu,k}t, \good{\rho,k}t) + \mathbb{P}(A_t = k, \good{\mu,k}t, (\good{\rho,k}t)^c) + \mathbb{P}(A_t = k, (\good{\mu,k}t)^c).
        \end{align}
        
        Now, the last of these terms in (\ref{ts:eff_decomposition}) is easily controlled - indeed, $\mathbb{P}(A_t = k, (\good{\mu,k}t)^c) = \mathbb{P}(A_t = k, \muhat > u)$ is exponentially small if $N_t^k$ is large. In fact, mirroring the approach of the proof of Lemma \ref{lem:klucb_key_bound}, we find that \begin{align*}
            \sum_{t \le T} \indi\{A_t = k, \muhat > u\} &= \sum_{t\le T} \sum_{s \le t} \indi\{A_t = k, N_t^k = s, \muhat > u\} \\
                                                &= \sum_s \indi\{ \hat\mu^k(s) > u\} \sum_{t \ge s} \indi\{A_t = k, N_t^k = s\} \\
                                                &\le \sum_{s\le T} \indi\{ \hat\mu^k(s) > u\},
        \end{align*}
        where we set $\hat\mu^k(s)$ to be the value of $\muhat $ at the first $t$ such that $N_t^k = s$. But then, by Chernoff's bound, $P(\hat\mu^k(s) > u) \le \exp(-s d(u\|\mu^k),$ giving the bound \begin{equation}\label{eq:ts_third_term}
            \sum \mathbb{P}(A_t = k, (\good{\mu,k}t)^c) \le \frac{2}{1 \wedge d(u\|\mu^k)}.
        \end{equation}
        
        The second term of (\ref{ts:eff_decomposition}) too is similar to control, upon observing that the posterior $\mathrm{Beta}$ law is very well concentrated around $\muhat$ with variance scale $1/N_t^k.$ More concretely, \citet{agrawal2013further} exploit the following observation: if $F(x; \mathrm{Beta}(a,b))$ is the CDF of a $\mathrm{Beta}(a,b)$ random variable, and $G(k; \mathrm{Bin}(n,p))$ is the CDF of a Binomial random variable, then for natural $n \ge k,$ \[ 1- F(x; \mathrm{Beta}(k+1, n - k+1)) = G(k; \mathrm{Bin}(n+1,x) ). \] This relation most easily follows from the fact that the $\mathrm{Beta}(k+1, n-k+1)$ is the law of the $k+1$th order statistic of $n+1$ samples from the uniform distribution, and the chance of this exceeding $x$ is simply the chance that the $k$ smaller ones are at most $x$, and the rest are at least $x$, which of course is expressed by the Binomial distribution. But then we conclude that for any $N_0$ \begin{align*}
            \mathbb{P}( \rho_t^k > v |N_t^k > N_0, \hat\mu_t^k \le u) \le e^{-N_0 d(v\|u)}.
        \end{align*}

        Choosing $N_0 = \log(T)/d(v\|u),$ we then get the bound \begin{align} \label{eq:ts_second_term} \sum_{t = 1}^T \mathbb{P}(A_t = k, \rho_t^k > v, \muhat \le u) 
        &\le \sum_{t = 1}^T \mathbb{P}(A_t = k, N_T^k \le N_0) + \sum_{t = 1}^T \mathbb{P}(N_T^k > N_0, \rho_t^k > v, \muhat \le u) \notag \\
        &\le N_0 + T{e^{-N_0 d(v\|u)}} \notag \\
        &\le \frac{\log T}{d(v\|u)} + 1 \le \frac{\log T}{d(v\|u)} + \frac{1}{1 \wedge d(u\|\mu^k)}.\end{align}

        This leaves the first term of (\ref{ts:eff_decomposition}), which is the hardest to control, and ultimately relies upon hard analysis of Binomial tails. The idea is roughly to use $v$ as a lower index for $\rho^*_t$. Indeed,  let \[ \mathbf{P}_t := \mathbb{P}( \rho_t^* >  v | \hist_{t-1}) = \mathbb{P}_{t-1}(\rho_t^* > v).\] Then observe that 
        
        \begin{align*} \mathbb{P}_{t-1}(A_t = k, \good{\mu,k}t, \good{\rho,k}t , k^* \in \Perm_t) &= \indi\{\good{\mu,k}t, k^* \in \Perm_t\}\mathbb{P}_{t-1}( A_t = k, \rho_t^k < v ) \\
        &\le \indi\{\good{\mu,k}t, k^* \in \Perm_t\} \mathbb{P}_{t-1}(\forall k \in \Perm_t, \rho_t^k < v)\\ 
        &= \indi\{\good{\mu,k}t, k^* \in \Perm_t\}(1-\mathbf{P}_t) \mathbb{P}_{t-1}(\forall k \neq k^* \in \Perm_t, \rho_t^k < v) \\
        &= \frac{1 - \mathbf{P}_t}{\mathbf{P}_t} \indi\{\good{\mu,k}t, k^* \in \Perm_t\} \mathbb{P}_{t-1}(\rho_t^* > v, \forall k \neq k^* \in \Perm_t, \rho_t^k < v) \\
        &\le \frac{1 - \mathbf{P}_t}{\mathbf{P}_t} \mathbb{P}_{t-1}(A_t  = k^*),\end{align*} where we have used the fact that $\good{\mu,k}t \in \hist_{t-1}$ and $\Perm_t$ is predictable. The idea is to now exploit the fact that $\mathbf{P}_t$ is exponentially close to $1$ as $N_t^*$ increases, and by expressing this chance in terms of the size of $N_t^*$ and analysing the same, \citet{agrawal2013further} show in their Lemma 2 that \begin{equation*}
            \sum_{t = 1}^T\mathbb{E}[ (1-\mathbf{P}_t) \mathbb{P}_{t-1}(A_t = k^*)/\mathbf{P}_t] \le \frac{24}{\Delta_v^2} + C' \sum_{s \ge 8/\Delta_v}^{T - 1} e^{-\Delta_v^2 s/2} +  \frac{1}{e^{\Delta_v^2 s/4} - 1} + \frac{e^{-sd(v\|\mu^*)}}{(s+1)\Delta_v^2}, 
        \end{equation*}
        where $\Delta_v := (\mu^* - v)$ and $C'$ is a constant. Notice that each of the terms in the sum are monotonically decreasing. Therefore, we may derive upper bounds by comparison to an integral, which yields for the first and second terms that \begin{align*}
            \sum_{s = \lceil 8/\Delta_v \rceil}^{T-1} e^{-\Delta_v^2 s/2} \le \int_{0}^\infty e^{-\Delta_v^2 s/2} \mathrm{d}s = \frac{2}{\Delta_v^2},
        \end{align*}
        and \begin{align*}
            \sum_{s = \lceil 8/\Delta_v \rceil}^{T-1} \frac{1}{e^{\Delta_v^2 s/4} - 1} &\le \int_{7/\Delta_v}^{T} \frac{1}{e^{\Delta_v^2 s/4} - 1} \mathrm{d}s \\ &= \frac{4}{\Delta_v^2} \int_{(\nicefrac74\Delta_v}^{\Delta_v^2 T/4} \frac{1}{e^{u} - 1} \mathrm{d}u \\
            &\le \frac{4}{\Delta_v^2} \log \frac{1}{1-e^{-\nicefrac74\Delta_v}}\\
            &\le \frac{4}{\Delta_v^2} \log \frac{2}{1 \wedge \nicefrac74\Delta_v} \le \frac{4}{\Delta_v^2}\left( \log \frac{1}{\Delta_v} + O(1)\right),
        \end{align*}
        where we have used the previously established fact that $\frac{1}{1-e^{-x}} \le \frac{2}{x \wedge 1}$.
        
        For the final term, we may bound this in two ways - firstly simply observing that $e^{-sd} \le 1,$ we get the bound $\sum_{8/\Delta_v}^{T-1} \frac{1}{s+1} \le \log (T\Delta/8).$ In addition, we derive a $T$-independent bound as follows, wherein we abbreviate $d_v = d(v\|\mu^*).$
        
        \begin{align*}
            \sum_{s = \lceil 8/\Delta_v \rceil}^{T-1} \frac{e^{-sd_v}}{(s+1)\Delta_v^2} &= e^{d_v}\sum_{s = \lceil 8/\Delta_v}^{T-1} \frac{e^{-(s+1)d_v}}{s+1} \\
            &= e^{d_v} \sum_{s = \lceil 8/\Delta_v\rceil}^{T-1} \int_{u = d_v}^\infty e^{-(s+1)u} \mathrm{d}u \\
            &\le e^{d_v} \int_{u = {d_v}}^\infty \sum_{ s= 1}^{\infty} e^{-(s+1)u} \mathrm{d}u \\
            &= e^{d_v} \int_{u = d_v}^\infty \frac{e^{-u} }{e^u - 1} \mathrm{d}u\\
            &\le  \log \frac{1}{1-e^{-d_v}} \le \log \frac{2}{d_v} + O(1).
        \end{align*}
        
        Taking the smaller of these two bounds, the final term is controlled by $4\Delta_v^{-2}[\log( T\Delta_v \wedge d_v^{-1}) + O(1)]$, and we have
        \begin{equation}\label{ts:first_term}
            \sum_{t = 1}^T \mathbb{P}(A_t = k, \good{\mu,k}t, \good{\rho, k}t, k^* \in \Perm_t) \le \frac{C}{\Delta_v^2}\left( 1 + \log \frac{1}{\Delta_v} + \log \left( \Delta_v T \wedge -d(v\|\mu^*)^{-1}\right) \right).
        \end{equation}
        
        The claimed bound is then realised by adding up (\ref{eq:ts_third_term}, \ref{eq:ts_second_term}, \ref{ts:first_term}).
\end{proof}

\section{Proofs for Thompson Sampling with \BayesUCB}\label{appx:ts+bayes_ucb}

Since the procedure for selecting arms given $\Perm_t$ is left unchanged from the previous case, we only need to demonstrate that $\Perm_t$ is good, that is, that the lower bound index $L_t^k$ performs well. Indeed, this is essentially exploiting the fact that the argument of the previous section only uses the fact that $\Perm_t$ is a predictable process, and then specifics of the Thompson scores $\rho_t^k$s, and so the second term of the decomposition  \[\sum_{t} \mathbb{P}(A_t = k) \le \sum_t \mathbb{P}(k^* \not\in \Perm_t) + \sum_t \mathbb{P}(k^* \in \Perm_t, A_t = k)\] can be pursued identically to control the play of inefficient arms on rounds such that $k^* \in\Perm_t$, again giving (\ref{eq:TS_efficiency}). 

We show the following bound, following the methods of \citet{kaufmann2012bayesUCB} as described in \S\ref{appx:strat}. \begin{mylem}
    In the setting of Theorem \ref{thm:TS+BayesUCB}, the following hold. \begin{itemize}\label{lem:bayesucb}
        \item If $\Gamma^k > 0,$ then for any $x \in (\alpha, \nu^k),$ \begin{equation}\label{eq:bayesucb_safety_control}
            \mathbb{E}[N_T^k] \le \frac{\nicefrac{3}{2} \log T + 3 \log\log T + \nicefrac{3}{2}\log 2}{d(x\|\alpha)} + \frac{2}{1\wedge d(x\|\nu^k)}
        \end{equation}
        \item The mean number of times the optimal arm is treated as impermissible is bounded as \[ \sum_{t = 3}^T \mathbb{P}(k^* \not\in\Perm_t) \le e \log\log T + 4e.\]
    \end{itemize}
\end{mylem}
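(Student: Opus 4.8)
The plan is to reduce both claims to the Binomial-tail estimates already used in the infeasible-arm and consistency parts of Lemma~\ref{lem:klucb_key_bound}, via the Beta-Binomial identity exploited in the analysis of Algorithm~\ref{alg:STOP}. The starting point is that for the \emph{biased} posterior used here the parameters of $\mathrm{Beta}(S_t^k, N_t^k - S_t^k + 1)$ sum to $N_t^k+1$, so that $F(\alpha; \mathrm{Beta}(S_t^k, N_t^k - S_t^k + 1)) = \mathbb{P}(\mathrm{Bin}(N_t^k, \alpha) \ge S_t^k)$; this is precisely why the $+1$ is dropped from the first argument. Hence $L_t^k \le \alpha$ iff $\mathbb{P}(\mathrm{Bin}(N_t^k, \alpha) \ge S_t^k) \ge \delta_t^k$ (with the case $S_t^k=0$ giving $L_t^k = 0 \le \alpha$ automatically), turning every permissibility statement into a statement about a Binomial upper tail at the empirical count $S_t^k$.

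For the first claim I would mirror the peeling argument of the infeasible-arm part of Lemma~\ref{lem:klucb_key_bound}. Writing $\hat\nu^k(s)$ for the empirical safety-risk after $s$ pulls and using that at most one round has $A_t=k$ with $N_t^k=s$, it suffices to sum over $s$ the probability that the permissibility condition holds at count $s$. Fix $y \in (\alpha, \nu^k)$. If $\hat\nu^k(s) > y$, then $\mathbb{P}(\mathrm{Bin}(s,\alpha) \ge s\hat\nu^k(s)) \le e^{-s\, d(y\|\alpha)}$ by Chernoff, so the condition $\ge \delta_t^k$ fails once $e^{-s\, d(y\|\alpha)} < \delta_t^k$. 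Since $t \le T$ forces $\delta_t^k = (\sqrt{8N_t^k}\,t\log^3 t)^{-1} \ge (\sqrt{8s}\,T\log^3 T)^{-1}$, the failure is guaranteed as soon as $s\, d(y\|\alpha) \ge \tfrac12\log(8s) + \log T + 3\log\log T$; bounding the nuisance term crudely by $\tfrac12\log(8T) = \tfrac12\log T + \tfrac32\log 2$ (this is the sole source of the $\nicefrac32$ factor, i.e.\ the conservatism flagged in the text) gives the threshold $S(y) = (\tfrac32\log T + 3\log\log T + \tfrac32\log 2)/d(y\|\alpha)$. For $s > S(y)$ the condition forces $\hat\nu^k(s) \le y$, which by Chernoff has probability at most $e^{-s\, d(y\|\nu^k)}$; summing the geometric tail as in \eqref{eq:klucb_safety_guts} yields the residual $2/(1 \wedge d(y\|\nu^k))$ and hence \eqref{eq:bayesucb_safety_control}.

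For the consistency claim I would instead embed $\{L_t^* > \alpha\}$ into the KL-deviation event controlled by \eqref{eq:klucb_consistency}. By the identity, $L_t^* > \alpha$ is equivalent to $\mathbb{P}(\mathrm{Bin}(N_t^*, \alpha) \ge S_t^*) < \delta_t^*$, which forces $\hat\nu_t^* > \alpha$ (otherwise the tail is at least a constant, far exceeding the tiny $\delta_t^*$) and, via a Binomial-tail \emph{lower} bound of the form $\mathbb{P}(\mathrm{Bin}(n,\alpha) \ge na) \ge (\sqrt{8n})^{-1} e^{-n\, d(a\|\alpha)}$, forces $N_t^*\, d(\hat\nu_t^*\|\alpha) > \gamma_t = \log(t\log^3 t)$. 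The $\sqrt{8N_t^k}$ factor baked into $\delta_t^k$ is chosen exactly to cancel the $(\sqrt{8n})^{-1}$ prefactor, leaving a clean $e^{-n d}$ comparison. Since $\nu^* \le \alpha < \hat\nu_t^*$ and $b \mapsto d(a\|b)$ decreases for $b<a$, we have $d(\hat\nu_t^*\|\alpha) \le d(\hat\nu_t^*\|\nu^*)$, so $\{L_t^* > \alpha\} \subseteq \{\hat\nu_t^* > \nu^*,\ N_t^*\, d(\hat\nu_t^*\|\nu^*) > \gamma_t\}$. Applying \eqref{eq:klucb_consistency} with $z = \gamma_t$ gives $\mathbb{P}(L_t^* > \alpha) \le e(\gamma_t \log t + 1)e^{-\gamma_t}$, which is literally the per-round bound of \eqref{eq:klucb_u_t_consistency}, so summing over $t$ reproduces $e\log\log T + 4e$ verbatim.

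I expect the consistency step to be the main obstacle. Unlike the frequentist index, the Bayesian quantile must be compared against the fixed level $\alpha$ rather than an adaptively shrinking bound, so the weakened-consistency shortcut of \citet{kaufmann2012bayesUCB} is unavailable and one must control $\mathbb{P}(L_t^* > \alpha)$ directly. The crux is establishing the correct Binomial-tail lower bound and verifying that the $\sqrt{8N_t^k}$ slack in $\delta_t^k$ absorbs its polynomial prefactor; pinning down these constants is exactly what forces the conservative quantile schedule and, in turn, the $\nicefrac23$ loss in Theorem~\ref{thm:TS+BayesUCB}. Everything else is a faithful transcription of the KL-UCB arguments in Lemma~\ref{lem:klucb_key_bound}.
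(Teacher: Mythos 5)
Your proposal is correct and takes essentially the same route as the paper: the paper packages the reduction as a sandwich lemma $\underline\varphi_t^k \le L_t^k \le \overline\varphi_t^k$ (Lemma~\ref{lem:bayesucb_quantile_bounds}), obtained from the Beta--Binomial CDF identity and the two-sided Binomial tail estimates, and then reruns the \KLUCBNS{} peeling and consistency arguments, which is exactly what you do with the sandwich inlined. In particular, you correctly identify both sources of the constants --- the $\nicefrac{3}{2}$ inflation from bounding $\log(1/\delta_t^k)$ when only the Chernoff upper bound is available, and the $\sqrt{8 N_t^k}$ factor in $\delta_t^k$ cancelling the $(\sqrt{8n})^{-1}$ prefactor of the Binomial tail lower bound so that the upper envelope is exactly the \KLUCBNS{} index and \eqref{eq:klucb_consistency} applies verbatim.
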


The claimed bound is quickly forthcoming upon combining the appropriate pieces of the proofs of Theorems \ref{thm:pure_klucb} and \ref{thm:stop_gap_dep}. 

\begin{proof}[Proof of Theorem \ref{thm:TS+BayesUCB}]
    For inefficient arms, combining the second part of Lemma \ref{lem:bayesucb} and (\ref{eq:TS_efficiency}), we conclude that if $\Delta^k > 0,$ then \[ \mathbb{E}[N_T^k] \le \frac{\log T}{d(u\|v)} + \frac{3}{1 \wedge d(u\|\mu^k)} + \frac{C}{(\mu^* - v)^2} (1 + (d(v\|\mu^*)^{-1} \wedge \log T) ) + e \log\log T + 4e. \]
    
    Similarly, for infeasible arms, by using (\ref{eq:bayesucb_safety_control}), we have the control \[\mathbb{E}[N_t^k] \le \frac{\log T + 3\log\log T + 2\log 2}{\nicefrac23 d(y\|\alpha)} + \frac{2}{1\wedge d(y\|\nu^k)}. \]
    
    Now choosing $u,v,y$ as in the proof of Theorem \ref{thm:stop_gap_dep} and proceeding along the same lines gives the claim.
\end{proof}

The same approach also shows the following gap-independent result. The proof is identical, and so omitted. \begin{myprop}
    Algorithm \ref{alg:TBU} instantiated with \BayesUCB with $\delta_t^k = 1/\sqrt{8N_t^k} t\log^3 t$ also satisfies the bound \[ \mathbb{E}[\mathcal{R}_T] = O(\sqrt{KT \log T} + K \log\log T).\]
\end{myprop}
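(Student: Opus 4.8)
The plan is to reduce both claims to the KL-UCB machinery of Lemma~\ref{lem:klucb_key_bound} by passing through the Beta--Binomial duality already invoked in the proof of Lemma~\ref{lem:ts_efficiency_bound}. Since $L_t^k$ is the $\delta_t^k$th quantile of $\mathrm{Beta}(S_t^k, N_t^k - S_t^k + 1)$, and the Beta CDF is continuous and strictly increasing, we have $L_t^k \le \alpha \iff F(\alpha; \mathrm{Beta}(S_t^k, N_t^k - S_t^k+1)) \ge \delta_t^k$. The duality rewrites this CDF as the Binomial upper tail $\Lambda_t^k := \mathbb{P}(\mathrm{Bin}(N_t^k, \alpha) \ge S_t^k)$, so that $L_t^k \le \alpha \iff \Lambda_t^k \ge \delta_t^k$ and, dually, $L_t^* > \alpha \iff \Lambda_t^* < \delta_t^*$. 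Everything then follows from two-sided control of $\Lambda_t^k$ via the empirical mean $\nuhat = S_t^k/N_t^k$. The degenerate case $S_t^k = 0$, where $L_t^k = 0$ by fiat, is subsumed into the concentration tail below, since it forces $\nuhat = 0$.

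For the infeasible-arm bound (\ref{eq:bayesucb_safety_control}) I would start from $\mathbb{E}[N_T^k] \le \sum_t \mathbb{P}(L_t^k \le \alpha, A_t = k)$ and use the upper Chernoff bound $\Lambda_t^k \le \exp(-N_t^k d_>(\nuhat\|\alpha))$. Thus $L_t^k \le \alpha$ forces $N_t^k d_>(\nuhat\|\alpha) \le \log(1/\delta_t^k) = \tfrac12\log(8 N_t^k) + \gamma_t$, with $\gamma_t = \log(t\log^3 t)$. Applying the one-play-per-count reindexing of Lemma~\ref{lem:klucb_key_bound} and bounding $\gamma_t \le \gamma_T$ (valid since $\delta_t^k$ decreases in $t$ at fixed count) gives $\mathbb{E}[N_T^k] \le \sum_{s=1}^T \mathbb{P}(s\,d_>(\hat\nu^k(s)\|\alpha) \le \tfrac12\log(8s) + \gamma_T)$. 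Fixing $y \in (\alpha,\nu^k)$ and taking the threshold $S(y) = (\tfrac32\log T + 3\log\log T + \tfrac32\log 2)/d(y\|\alpha)$, one checks that for $s > S(y)$ the inequality $s\,d(y\|\alpha) > \tfrac12\log(8s) + \gamma_T$ holds (the constant $\tfrac32\log 2$ is chosen so that $\tfrac12\log 8$ cancels, leaving a residual $\tfrac12\log(T/S(y)) > 0$), so the event forces $\hat\nu^k(s) \le y$. Summing the Chernoff tail $\exp(-s\,d(y\|\nu^k))$ over $s > S(y)$ exactly as in (\ref{eq:klucb_safety_guts}) yields the additive $2/(1\wedge d(y\|\nu^k))$, giving (\ref{eq:bayesucb_safety_control}). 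The inflation from $\log T$ to $\tfrac32\log T$ is precisely the $\nicefrac23$ inefficiency.

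For consistency, the key is an \emph{anti-concentration} (reverse) lower bound on the Binomial tail: for $\hat\nu_t^* > \alpha$ one has $\Lambda_t^* \ge (2N_t^*)^{-1/2}\exp(-N_t^* d_>(\hat\nu_t^*\|\alpha))$, and $\Lambda_t^* \ge (2N_t^*)^{-1/2}$ when $\hat\nu_t^* \le \alpha$ (where $d_> = 0$), for $N_t^* \ge 2$. This is exactly why $\delta_t^k$ carries the factor $\sqrt{8N_t^k}$: the event $L_t^* > \alpha \iff \Lambda_t^* < \delta_t^*$ now forces $(2N_t^*)^{-1/2}\exp(-N_t^* d_>(\hat\nu_t^*\|\alpha)) < (\sqrt{8 N_t^*}\, t\log^3 t)^{-1}$, i.e.\ $N_t^* d_>(\hat\nu_t^*\|\alpha) > \log(2t\log^3 t) > \gamma_t$ after the $\sqrt 2/\sqrt 8$ factors cancel. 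Since $\nu^* \le \alpha$, for $\hat\nu_t^* > \alpha$ we have $d(\hat\nu_t^*\|\nu^*) \ge d(\hat\nu_t^*\|\alpha)$, so $\{L_t^* > \alpha\} \subset \{\hat\nu_t^* > \nu^*,\ N_t^* d(\hat\nu_t^*\|\nu^*) > \gamma_t\}$ --- the identical event appearing in (\ref{eq:klucb_l_t_consistency}). The Garivier--Capp\'e deviation inequality (\ref{eq:klucb_consistency}) with $z = \gamma_t$ and the same summation then give $\sum_{t=3}^T \mathbb{P}(k^* \notin \Perm_t) \le e\log\log T + 4e$.

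The main obstacle is the anti-concentration lower bound on the Binomial upper tail: unlike the Chernoff bound used everywhere else it is not standard boilerplate, and obtaining the clean $(2n)^{-1/2}$ prefactor (via Stirling) is what makes the choice $\delta_t^k = (\sqrt{8N_t^k}t\log^3 t)^{-1}$ cancel correctly and reduce consistency to the already-proven KL-UCB estimate, rather than to a genuinely new deviation argument. The remaining care is bookkeeping: verifying the self-referential threshold inequality for $S(y)$ in the first part and handling the boundary regimes $\nuhat \le \alpha$ and $S_t^k = 0$ so that the reductions are valid for all counts.
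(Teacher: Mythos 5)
You have, in effect, written a correct proof of Lemma \ref{lem:bayesucb}: the two claims you reduce to the KL-UCB machinery (the infeasible-arm bound (\ref{eq:bayesucb_safety_control}) and the consistency bound $\sum_{t}\mathbb{P}(k^* \notin \Perm_t) \le e\log\log T + 4e$) are exactly its two bullets, and your route --- Beta--Binomial duality, the Chernoff upper tail for infeasible arms, a Stirling-type lower bound on the Binomial upper tail so that the $\sqrt{8N_t^k}$ factor in $\delta_t^k$ cancels, and then inclusion into the event controlled by (\ref{eq:klucb_consistency}) --- is essentially the paper's own argument via the sandwich $\underline\varphi_t^k \le L_t^k \le \overline\varphi_t^k$ of Lemma \ref{lem:bayesucb_quantile_bounds}. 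Your bookkeeping checks out: the threshold $S(y)$ works because $\tfrac12\log s \le \tfrac12\log T$ for every summand, and whether the anti-concentration prefactor is $(2n)^{-1/2}$ or the paper's more conservative $(8n)^{-1/2}$ only changes a harmless constant inside the logarithm.

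However, the statement you were asked to prove is the gap-independent regret bound $\mathbb{E}[\mathcal{R}_T] = O(\sqrt{KT\log T} + K\log\log T)$, and your proposal stops well short of it. Two ingredients are missing. First, nothing in your argument controls inefficient arms (those with $\Delta^k > 0$, $\Gamma^k = 0$): for these one must bound $\sum_t \mathbb{P}(A_t = k,\, k^* \in \Perm_t)$, which requires the Thompson-sampling efficiency bound (\ref{eq:TS_efficiency}) of Lemma \ref{lem:ts_efficiency_bound}; it does carry over because $\Perm_t$ is predictable and the selection rule given $\Perm_t$ is unchanged from Algorithm \ref{alg:STOP}, but it has to be invoked --- your consistency bound alone only handles rounds where $k^* \notin \Perm_t$. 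Second, all of your bounds carry instance-dependent KL denominators, and no $\sqrt{KT\log T}$ term can emerge from them without the synthesis used for Proposition \ref{prop:STOP_gap_indep}: instantiate (\ref{eq:bayesucb_safety_control}) at $x = (\alpha+\nu^k)/2$ and (\ref{eq:TS_efficiency}) at $u = \mu^k + \Delta^k/3$, $v = \mu^k + 2\Delta^k/3$, use Pinsker's inequality to get $\mathbb{E}[N_T^k] = O\bigl(\log\log T + \log T/(\Delta^k \vee \Gamma^k)^2\bigr)$, then split the arms at a threshold $\mathbf{M}$, bound the contribution of small-gap arms by $\mathbf{M}T$, and optimize $\mathbf{M} = \Theta(\sqrt{K\log T/T})$. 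Without these steps the claimed bound is never reached: as written, your proposal proves the prerequisite safety lemma, not the proposition.
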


We conclude by showing the main Lemma.

\begin{proof}[Proof of Lemma \ref{lem:bayesucb}]
    The argument relies on the following estimate, which essentially serves as a reduction to the analysis of \KLUCBNS. This result is a variation of Lemma 1 of \citet{kaufmann2012bayesUCB}.
    \begin{mylem}\label{lem:bayesucb_quantile_bounds}
        Define the quantities \begin{align*}  \underline\varphi_t^k &:= \indi\{S_t^k > 0\} \min\left\{q \le \frac{S_t^k}{N_t^k} : N_t^k d\left(\frac{S_t^k}{N_t^k} \middle\|q\right) \le \log((2 t \log^2t )^{3/2})\right\} \\  \overline\varphi_t^k &:= \indi\{S_t^k > 0\} \min \left\{q \le  \frac{S_t^k}{N_t^k} : N_t^k d\left(\frac{S_t^k}{N_t^k} \middle \| q\right) \le \log(t \log^3(t) ) \right\} .\end{align*}
        Then for all $t,$ \[ \underline\varphi_t^k \le L_t^k \le \overline\varphi_t^k.\]
        \begin{proof}
            Firstly, since $L_t^k = 0$ whenever $S_t^k = 0,$ this case is trivial. So assume $S_t^k \ge 1$.
            
            The idea behind the bounds is to exploit the relationship between the CDFs of Beta and Binomial random variables to reduce the quantile estimation to that of a Binomial, and then use Chernoff's bound for the Binomial to control where the quantile can be. Indeed, let $Z \sim \mathrm{Beta}(S_t^k, N_t^k - S_t^k + 1)$. Then we know that \[ \mathbb{P}(Z \le q) = \mathbb{P}(\mathrm{Bin}(N_t^k, q) \ge S_t^k). \]
            Further, by Chernoff's upper bound, and by estimating the $s$th term in the Binomial series using Stirling's approximation, we may show the following result (where the lower bound holds generally, and the upper bound holds for any $s \ge nq$).  \[ \frac{1}{\sqrt{8n}} \exp( - n d( (s/n)\|q) ) \le \mathbb{P}(\mathrm{Bin}(n,q) \ge s) \le \exp(-nd((s/n)\|q)).\]
            
            Now, recall that $L_t^k$ is the $\delta_t^k$th quantile of the law of $Z$, so that $P(Z \le L_t^k) = \delta_t^k$. 
            
            \emph{Lower bound} Suppose $q \le S_t^k/N_t^k$ is such that \[ \exp(- N_t^kd( S_t^k/N_t^k)\|q) ) \le \delta_t^k.\] Then it follows that $q \le L_t^k$. Therefore, \begin{align*}
                L_t^k &\ge \max \left\{ q \le \frac{S_t^k}{N_t^k} : N_t^k d\left(\frac{S_t^k}{N_t^k} \| q\right) \ge \log(1/\delta_t^k) \right\}\\ &= \min\left\{q \le \frac{S_t^k}{N_t^k} : N_t^k  d\left(\frac{S_t^k}{N_t^k}\| q\right) \le \log(1/\delta_t^k) \right\},
            \end{align*}  where the final equality is due to the continuity of $d(a\|\cdot).$ 
            
            Now observe that \[ \log(1/\delta_t^k) \le (2(t+1))^{3/2} \log^3 t.\] Therefore, replacing $\log(1/\delta_t^k)$ by the larger $\log( 2(t+1)^{3/2}\log^3 t$ in the lower bound can only decrease it.
            
            \emph{Upper bound} Suppose that $q \le S_t^k/N_t^k$ is such that the lower bound on the Binomial tail exceeds $\delta_t^k$. Then $L_t^k$ must be smaller than this $q$, and so \[ L_t^k \le \min\left\{ q\le \frac{S_t^k}{N_t^k}: N_t^k d\left(\frac{S_t^k }{N_t^k}\| q\right) \le \log\left( \frac{1}{\sqrt{8N_t^k}\delta_t^k}\right) \right\}.\]
            
            But, by definition, \[\frac{1}{\sqrt{8N_t^k} \delta_t^k} = t\log^3 t.  \]

        \end{proof}
    \end{mylem}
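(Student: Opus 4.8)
The plan is to turn the Beta-quantile $L_t^k$ into a statement about a Binomial upper tail, and then locate it precisely by squeezing between a Chernoff bound and a matching reverse-Chernoff bound. Throughout I take $S_t^k \ge 1$; the case $S_t^k = 0$ is immediate, since then $L_t^k = 0$ and the indicators $\indi\{S_t^k > 0\}$ make both $\underline\varphi_t^k = \overline\varphi_t^k = 0$. Write $\hat\nu := S_t^k/N_t^k$. The starting point is the exact Beta--Binomial CDF identity: $\mathrm{Beta}(S_t^k, N_t^k - S_t^k + 1)$ is the law of the $S_t^k$-th order statistic of $N_t^k$ i.i.d.\ uniforms, so for $Z$ distributed as this Beta,
\[ \mathbb{P}(Z \le q) = \mathbb{P}\big(\mathrm{Bin}(N_t^k, q) \ge S_t^k\big). \]
Since $L_t^k$ is the $\delta_t^k$-th quantile, $\mathbb{P}(Z \le L_t^k) = \delta_t^k$, and because the right-hand side above is continuous and strictly increasing in $q$, I can pin down $L_t^k$ by comparing the Binomial tail at trial values $q \le \hat\nu$ against $\delta_t^k$ and invoking monotonicity.

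The key technical ingredient, which I would prove first, is the two-sided estimate of the relevant upper tail: for every $q \le s/n$,
\[ \tfrac{1}{\sqrt{8n}}\, e^{-n d(s/n \| q)} \;\le\; \mathbb{P}\big(\mathrm{Bin}(n,q) \ge s\big) \;\le\; e^{-n d(s/n \| q)}. \]
The upper inequality is the standard Chernoff bound, valid precisely because $s/n \ge q$, and comes from $d(\cdot\|\cdot)$ being the Fenchel dual of the Bernoulli cumulant generating function. The lower inequality I would get by keeping only the mode term: $\mathbb{P}(\mathrm{Bin}(n,q)\ge s) \ge \binom{n}{s} q^s (1-q)^{n-s} = e^{-n d(s/n\|q)}\,\mathbb{P}(\mathrm{Bin}(n, s/n) = s)$, and then lower-bounding the mode probability $\mathbb{P}(\mathrm{Bin}(n,s/n)=s)$ by $(8n)^{-1/2}$ via Stirling's approximation. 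This reverse-Chernoff step is the main obstacle, as it requires a careful non-asymptotic treatment of the factorials to produce the stated constant; everything else is bookkeeping.

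Granting the two-sided bound, the sandwich drops out by playing each inequality against the monotonicity of the CDF. For $L_t^k \ge \underline\varphi_t^k$: any $q \le \hat\nu$ with $e^{-N_t^k d(\hat\nu\|q)} \le \delta_t^k$ satisfies $\mathbb{P}(Z\le q) \le \delta_t^k = \mathbb{P}(Z \le L_t^k)$, hence $q \le L_t^k$; taking the largest such $q$ and using continuity of $d(\hat\nu\|\cdot)$ gives $L_t^k \ge \min\{q \le \hat\nu : N_t^k d(\hat\nu\|q) \le \log(1/\delta_t^k)\}$. For $L_t^k \le \overline\varphi_t^k$: any $q \le \hat\nu$ with $(8N_t^k)^{-1/2} e^{-N_t^k d(\hat\nu\|q)} > \delta_t^k$ satisfies $\mathbb{P}(Z \le q) > \delta_t^k$, hence $q > L_t^k$; since these $q$ form an interval $(\,\cdot\,,\hat\nu]$ whose left endpoint solves $N_t^k d(\hat\nu\|q) = \log(1/(\sqrt{8N_t^k}\delta_t^k))$, I get $L_t^k \le \min\{q \le \hat\nu : N_t^k d(\hat\nu\|q) \le \log(1/(\sqrt{8N_t^k}\delta_t^k))\}$.

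Finally I would match these data-dependent thresholds to the stated ones by substituting $\delta_t^k = (\sqrt{8N_t^k}\,t\log^3 t)^{-1}$. The upper threshold is exact: $1/(\sqrt{8N_t^k}\delta_t^k) = t\log^3 t$, so the right-hand minimum is precisely $\overline\varphi_t^k$. For the lower threshold, $\log(1/\delta_t^k) = \tfrac32\log 2 + \tfrac12\log N_t^k + \log t + 3\log\log t$, and bounding $N_t^k \le t$ gives $\log(1/\delta_t^k) \le \tfrac32\log 2 + \tfrac32\log t + 3\log\log t = \log\big((2t\log^2 t)^{3/2}\big)$; since enlarging the threshold can only lower the minimizing $q$, this yields $\underline\varphi_t^k \le \min\{q \le \hat\nu : N_t^k d(\hat\nu\|q) \le \log(1/\delta_t^k)\} \le L_t^k$. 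Combining the two displays completes the argument.
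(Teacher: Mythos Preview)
Your proposal is correct and follows essentially the same route as the paper: the Beta--Binomial CDF identity, the two-sided Chernoff/reverse-Chernoff sandwich on the Binomial tail, and then monotonicity to locate $L_t^k$. Your final step, bounding $N_t^k \le t$ to get $\log(1/\delta_t^k) \le \log\big((2t\log^2 t)^{3/2}\big)$, is in fact cleaner and matches the stated $\underline\varphi_t^k$ exactly, whereas the paper's proof sketches this threshold with a small typographical slip.
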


    Observe that the bounds $\overline\varphi$ and $\underline\varphi$ exactly take the form of the \KLUCB bounds, but with a different value for $\gamma_T$. Thus, the same proofs may be repeated. 
    
    Indeed, to show (\ref{eq:bayesucb_safety_control}), we observe that for an arm with a safety gap, $\{L_t^k \le \alpha\} \subset \{ \underline\varphi_t^k \le \alpha,\}$ and we may then follow the proof of Lemma \ref{lem:klucb_key_bound} to control this identically to there - the only change is that $\log(\gamma_T)$ in $S(y)$ is replaced by $\log ( (2 t \log^2 t)^{3/2}).$
    
    Further, the upper bound is exactly the bound of \KLUCBNS, and therefore  without alteration we may immediately conclude that \begin{align*}
        \sum_{t \ge 3} \mathbb{P}(L_t^* > \alpha) &\le \sum_{t \ge 3} \mathbb{P}(\overline\varphi^*_t > \alpha) \le e\log\log t + 4e. \qedhere
    \end{align*}
    
    We note that the last property in the proof of Lemma \ref{lem:bayesucb_quantile_bounds} is exactly the reason for selecting $\delta_t$ of the form that we did, which is essentially the $1/\gamma_t$ from \KLUCBNS, but scaled down to ensure that the \BayesUCB bound is at least as optimistic as that of \KLUCBNS. In principle, then, this gives an avenue for a tighter analysis by choosing a more refined notion of $\delta_t$ by exploiting stronger bounds for the Binomial tails. 
    
    For instance, it is known \citep[Prop A.4, A.2][]{jevrabek2004dual} that there exists a constant $C$ such that for $s \ge nq +\sqrt{nq(1-q)}, $ \[ \frac1C \frac{qn - qs}{s - qn } \sqrt{\frac{n}{s(n-s)}} e^{-n d(s/n\|q)} \le \mathbb{P}(\mathrm{Bin}(n,q) \ge s) \le C \frac{qn - qs}{s - qn } \sqrt{\frac{n}{s(n-s)}} e^{-n d(s/n\|q)},  \] while for $s \le nq +  \sqrt{nq (1-q)},$ it is bounded below by another constant $C'$. This suggests using $\delta_t \sim \min\left( C',  \frac{1}{t \log^3 t} \cdot \sqrt{\frac{N_t^k}{S_t^k (N_t^k - S_t^k)}}\right),$ although it is unclear how to handle the $(qn - qs)/( s-qn)$ term properly. Assuming this is indeed handled, though, this should result in an improvement to $\overline \varphi$ of replacing the $t^{3/2}$ by something $O(t),$ while the lower bound should remain unchanged. Of course, this does not quite explain the success of $\delta_t = 1/t$ in the experiments, and it is possible that this approach simply serves to make \BayesUCB look more like \KLUCBNS, which defeats the purpose somewhat.
\end{proof}

\section{Lower Bound}

We begin by showing the key Lemma. 

\begin{proof}[Proof of Lemma \ref{lemma:lower_bound_tech}]
    Fix a (possibly randomised) algorithm. Let $\{ \bbP^k \}$ and $\{\widetilde{\bbP}^k\}$ be two safe bandit instances, and recall that  $\hist_{t} := \{(A_s,R_s, S_s): s \le t\}$ denotes the history of play. We will use $\mathbb{P}$ to represent laws in the first instance and $\widetilde{\bbP}$ for laws in the second. Similary, $\mathbb{E}$ and $\widetilde{\mathbb{E}}$ denote expectations under the two laws.
    
    Let $Z$ be any function of measurable with respect to $\sigma(\hist_{T})$ that is bounded in $[0,1]$. Then observe that from $\hist_{T}$, we can generate a random bit by first computing $Z(\hist_{T+1}),$ and then sampling $B \sim \mathrm{Bern}(Z).$ Clearly, the mean of $B$ is the same as that of $Z$. But then, by the data processing inequality,\[ D( \mathbb{P}_{\hist_{T}} \| \widetilde{\mathbb{P}}_{\hist_{T}} ) \ge D(\bbP_B \|\widetilde\bbP_B) = d( \mathbb{E}[Z] \| \widetilde{\mathbb{E}}[Z]).\] 
    
    Next, due to the chain rule of KL divergence, for any $t \ge 1$, \begin{align*} D(\bbP_{\hist_{t}} \| \widetilde\bbP_{\hist_{t}}) &= D(\bbP_{\hist_{t-1}} \| \widetilde\bbP_{\hist_{t-1}})  \\ &\qquad   + \mathbb{E}[ D(\bbP_{A_t| \hist_{t-1}}\|\widetilde\bbP_{A_t |\hist_{t-1}} | \hist_{t-1} ) ] \\&\qquad\qquad + \mathbb{E}[ D(\bbP_{(R_t, S_t)|A_t,  \hist_{t-1}}\|\widetilde\bbP_{(R_t, S_t)|A_t,  \hist_{t-1}} |A_t,\hist_{t-1} ) ]. \end{align*}

Now, the second term in the RHS is $0$ since the learner must be causal, and thus the law of $A_t$ is determined by $\hist_{t-1}.$ Further, the feedback $(R_t, S_t)$ is independent of the history given $A_t$, and is distributed according to $\bbP^{A_t}$ and $\widetilde\bbP^{A_t}$ under the two instances. We thus have the recurrence \[ D(\bbP_{\hist_{t}} \| \widetilde\bbP_{\hist_{t}}) - D(\bbP_{\hist_{t-1}} \| \widetilde\bbP_{\hist_{t-1}}) = \sum_k \mathbb{P}(A_t = k) D(\bbP^k \|\widetilde\bbP^k). \]

Summing this up, and observing that $\hist_0$ is trivial, and then recalling $\sum_t \mathbb{P}(A_t = k) = \mathbb{E}[N_T^k] ,$  it follows that \[D(\bbP_{\hist_{T}}\|\widetilde{\bbP}_{\hist_{T}}) =  \sum_k \mathbb{E}[N_T^k] D(\bbP^k\|\widetilde\bbP^k). \] The conclusion now follows on taking $Z = N_T^k/T$, which trivially lies in $[0,1]$. \end{proof}

\begin{proof}[Proof of \cref{thm:lower_bound}]
    As mentioned in the main text, choose $\widetilde{\bbP}^j = \bbP^j$ for $j \neq k,$ and instead let $\widetilde\bbP^k$ be any law on $\{0,1\}^2$ of means $(\mu^k \vee \mu^* + \varepsilon, \nu^k \wedge \alpha).$ Notice that in the $\widetilde\bbP$-instance, arm $k$ is optimal.
    
    Since the algorithm ensures that suboptimal arms are not played more than $C_x T^x$ times, $\mathbb{E}[N_T^k/T] \le C_xT^{-(1-x)}, $ and $\widetilde{\mathbb{E}}[N_T^k/T] \ge 1 - C_xT^{-(1-x)}$ for any $x \in (0,1)$. Therefore, \begin{align*} d(\mathbb{E}[N_T^k/T]\|\widetilde{\mathbb{E}}[N_T^k/T] ) &\ge \left( 1- \frac{\mathbb{E}[N_T^k]}{T}\right) \log \frac{1}{1- \widetilde{\mathbb{E}}[N_T^k/T]}   - \log 2\\
    &\ge (1 - o(1)) (1-x) \log \frac{T}{C_x} - \log 2 = (1-o(1))(1-x)\log T. \end{align*} 
    
    Next, since we are working with independent means and safety rewards, taking $\widetilde{\bbP}^k$ to also have the independent rewards, we get $D(\bbP^k\|\widetilde{\bbP}^k) = d_<(\mu^k\|\mu^* + \varepsilon) + d_>(\nu^k\|\alpha).$
    
    We conclude that for any $x,\varepsilon \in (0,1),$ \[\frac{\mathbb{E}[N_T^k]}{\log T} \ge \frac{(1-x)(1-o(1)) }{d_<(\mu^k\|\mu^* + \varepsilon) + d_>(\nu^k\|\alpha)},  \] whence the claim follows on taking $\varliminf_{T \nearrow \infty},$ and then taking limits as $x \to0, \varepsilon \to 0,$ and exploiting the continuity of $d_<(a\|b)$.    
\end{proof}

\section{Simulation Details and Supplementary Plots}\label{appx:sims}

\textbf{Implementation Details} All methods are implemented on MATLAB. Throughout we use independent Bernoulli bits for both $R$ and $S$. The particular details of the methods used are described below.

\emph{Policy approaches} It is a straightforward observation that for a single constraint and objective, the solution to linear program $\max_{\pi \in \Delta} \langle \pi, a\rangle \textrm{ s.t. } \langle \pi, b\rangle \le c$ is supported on at most two coordinates. Further, the optimal policy on two given coordinates itself is simple to compute - clearly at least one needs to be safe according to the relevant safety index at the particular time, else this is not a permitted policy. If both are safe as per the index, then the policy can concentrate on the one with larger reward index. If one is safe and the other not, then the policy concentrates on the safe one if it has a larger reward index. Otherwise, we assign the slack between the safety level and the safety index of the safe coordinate as the mass of the policy on the coordinate with the unsafe index. This enables a simple - and fast - method to select the round-wise policies for both \textsc{BwCR} and \textsc{Pess} - we simply evaluate the value of the optimal policy on each pair of arms, and choose the one with the largest reward.

\emph{Details of Confidence Bound Computation}

In effect we use two types of confidence bounds - \KLUCBNS-based, and \BayesUCBNS-based. \begin{itemize}
    \item \KLUCBNS-based bounds are all evaluated with $\gamma_t = 1/t$ (i.e., without the extra $1/\log^3 t$ factor in the main text). This is aligned with the practical recommendations of \citet{garivier2011kl}. 
    
    The upper indices $U^k_t$ on $\mu^k$ are computed simply by computing a lower bound for $1-\mu^k$, and then subtracting this from one. The soundness of this procedure is a trivial exercise.
    
    Finally, the KL inversion is performed via a binary search. Specifically, we carry this out for $\max(4, \log_2(t))$ rounds, thus ensuring that any error in the estimate is of order $1/t,$ which ensures that extra regret due to numerical precision is at most $\log T.$
    
    \item \BayesUCB-based bounds are all evaluated with $\delta_t^k = 1/(t+1)$. Again, this is in line with the recommendations of \citet{kaufmann2012bayesUCB}. We note that this is a larger quantile than studied in the main text, and a regret bound with this $\delta_t^k$ is currently unavailable. Nevertheless, the empirical performance is sound, as seen in \S\ref{sec:sims}.
    
    The quantile estimation is performed by using the library \texttt{betainv} function provided by the Statistics Toolbox of MATLAB. This uses Newton's method to solve the equation defining a quantile of a Beta distribution.
    
    \item For \ThompNS, we sample from the appropriate Beta posteriors by using the library \texttt{betarnd} function provided by MATLAB. 
\end{itemize}

\subsection{Supplement to \S\ref{sec:exp_policy}}

We provide plots that detail the regrets achieved by each algorithm in the two cases studied. The main observations remain unchanged - the regrets of policy based methods grow linearly in the first case, and while they appear sublinear inthe second, they are at a much larger scale than our implementation. We note that in both cases the more unsafe \textsc{BwCR} performs better on the regret criterion. This should be evident on the data of case two, for which playing the unsafe arm only contributes $0.6- 0.5 = 0.1$ to the regret, while the suboptimal arm has a gap of $0.6 - 0.4 = 0.2$. However, the data of case 1 suggests that this is also true more broadly, and may be an effect of the optimism principle. That said, this is a moot point in this case since hte growth rate is very much linear.

\begin{figure}[h]
    \centering
    \includegraphics[width = 0.49\textwidth]{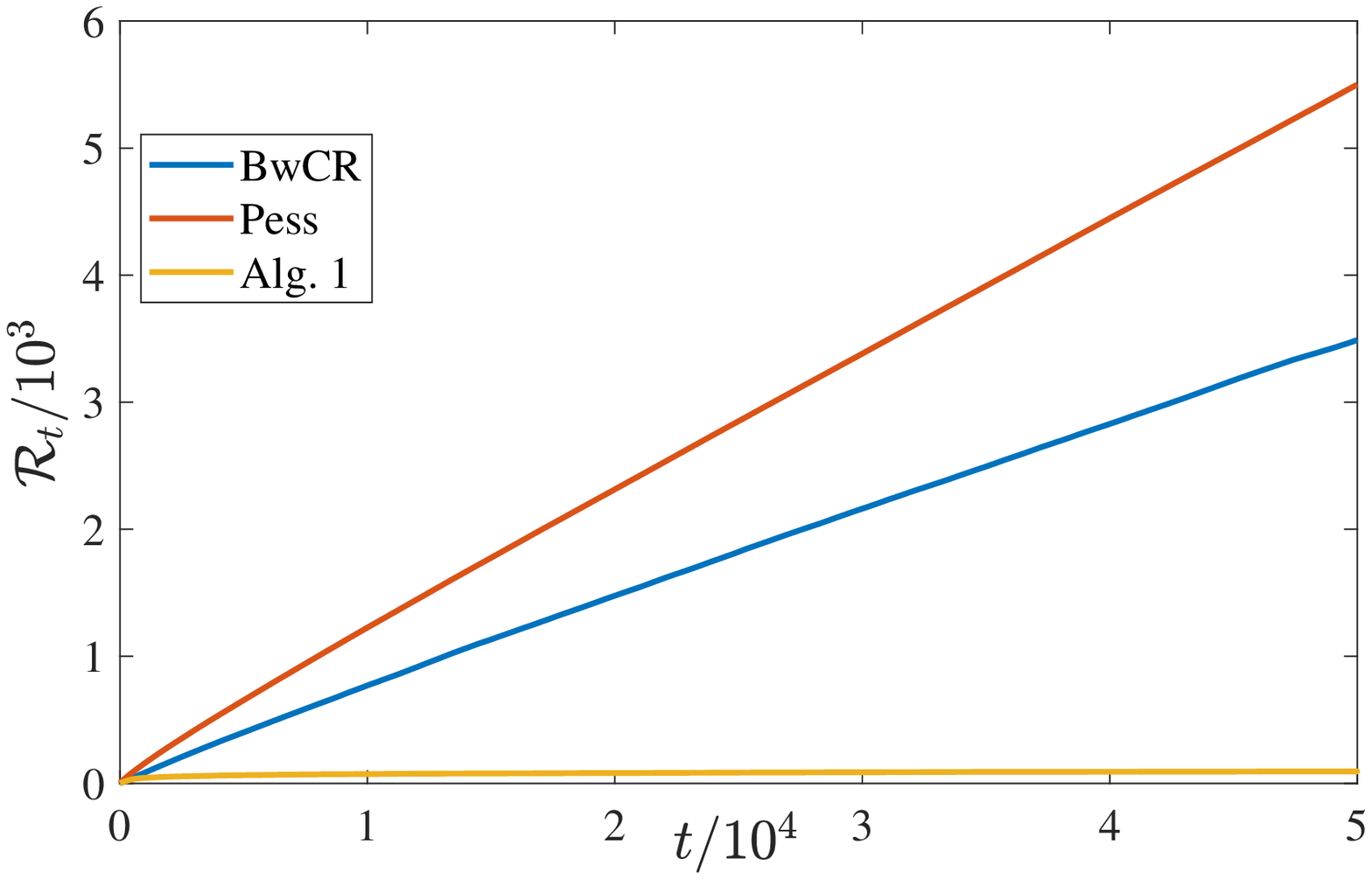}
    \includegraphics[width = 0.49\textwidth]{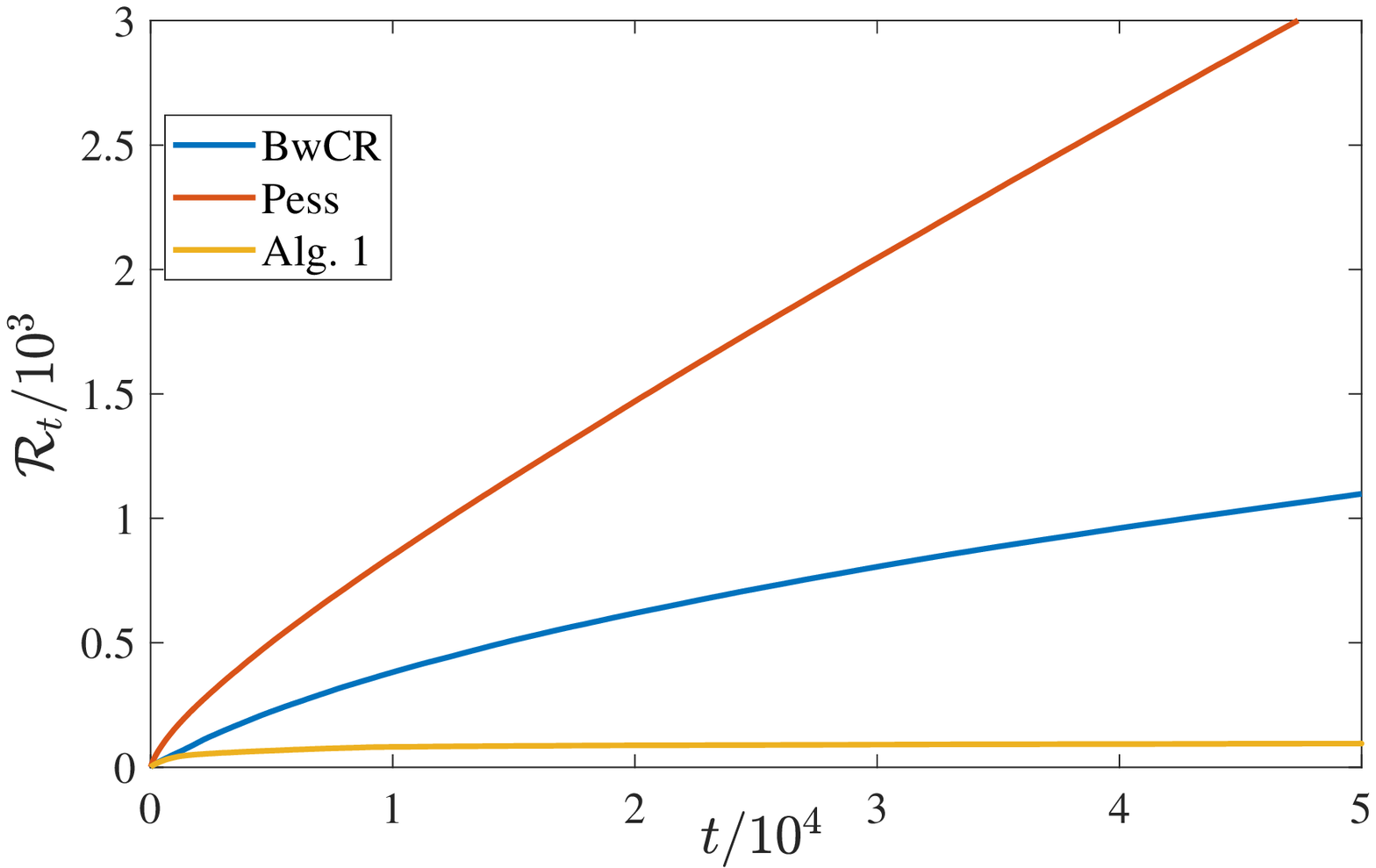}
    \caption{Regrets for the situations of \S\ref{sec:exp_policy} - left is the first case with two optimal policies, right is the second case with a single optimal policy supported on a single arm}
    \label{fig:policy_regret}
\end{figure}

\subsection{Supplement to \S\ref{sec:exp_probing}}\label{appx:probing}

We first provide Box plots in Figure \ref{fig:trial_alpha_0_21_box_plots} of the spread of regret and safety for the situation studied in the main text with $\alpha = 0.21$. Note that the Regret of the \Thomp based methods shows somewhat larger fluctuations, although the maximum of the data is similar. For the net safety violation, the fluctuations are similarly sized, and the Bayesian methods retain an advantage. 

\begin{figure}[htb]
    \centering
    \includegraphics[width = 0.49\textwidth]{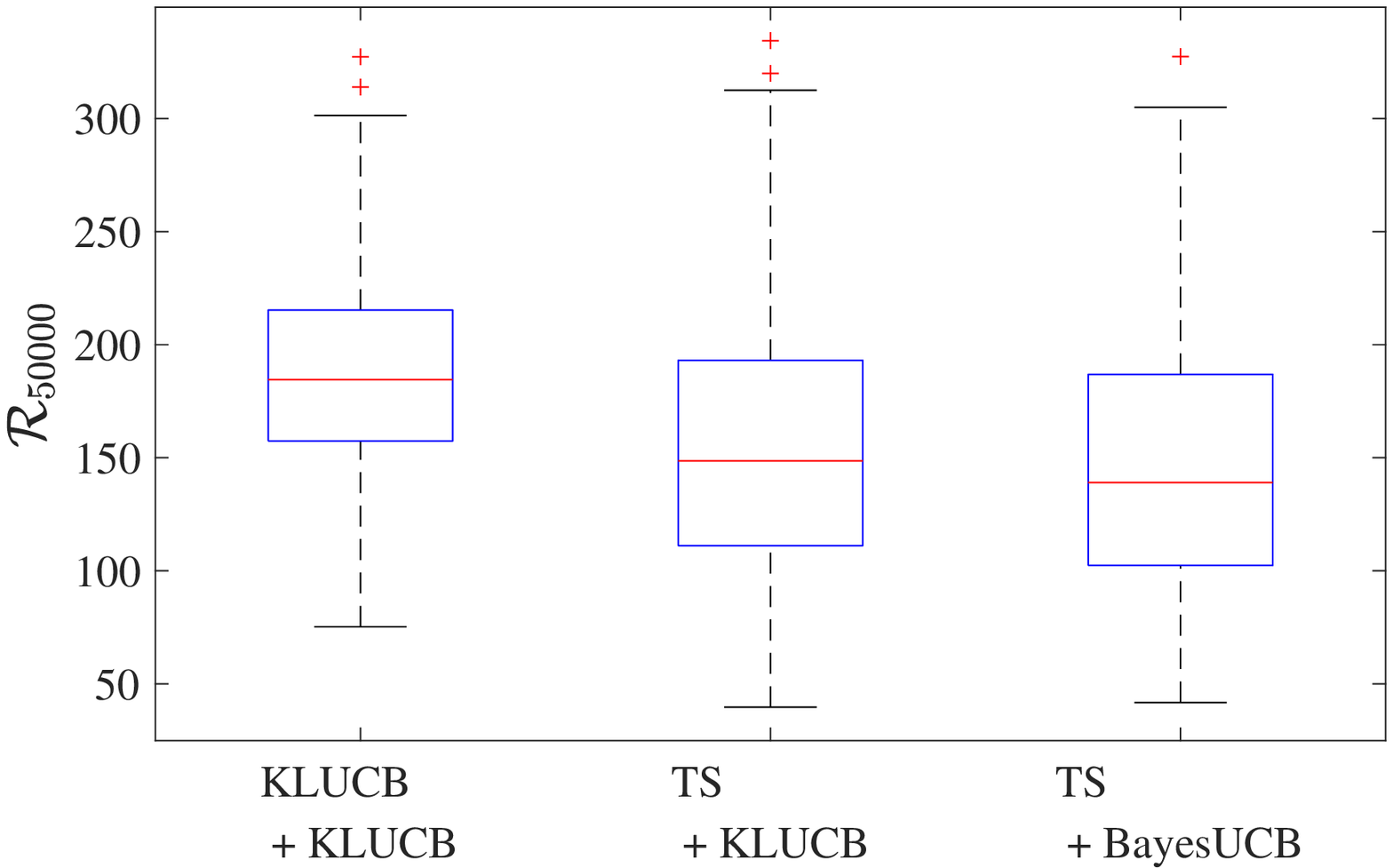}~\includegraphics[width = 0.49\textwidth]{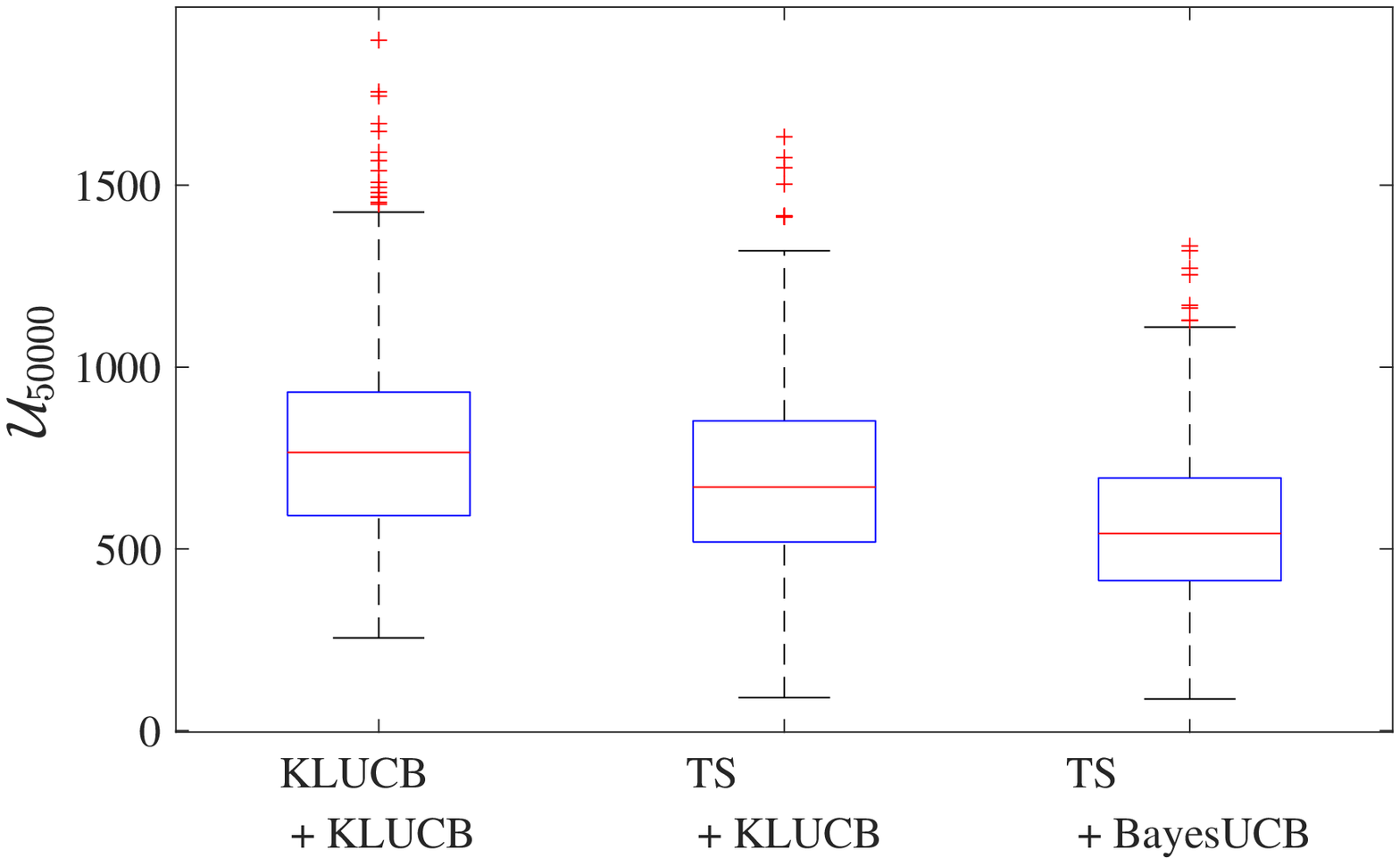}
    \caption{Box plots of Regret and Total Safety Violation at time $T = 50K$ over $500$ runs for the Trial Drugs data with $\alpha = 0.21$}
    \label{fig:trial_alpha_0_21_box_plots}
\end{figure}

Next, we provide plots for the same scenario, but with  $\alpha = 0.19.$ Note that this induces the difference that the arm $4$ is now unsafe by a significant amount, which increases its gap $\Delta^4 \vee \Gamma^4$ to about $0.02$ from $0.004$. However, since $0.004$ is about the same size as $\sqrt{K/T} = 0.01,$ this arm was not contributing much to the regret in the previous case. Further, the safety violation of the least unsafe arm is not only about $0.02$ instead of the previous $0.05$. Correspondingly, we expect to see an increase in the play of unsafe arms, as well as a slight increase in regret due to the scale up from $0.04$ to $0.019$ in the play of this arm. Both of these observations are clearly borne out in Figure \ref{fig:alpha_0_19}, which presents data over 100 trials.

We note that these observations are again consistent with the theoretical bounds. The main term of the regret bound is roughly $40 \log t$, while that of the safety violation bound is roughly $1500\log t,$ and $\log(10^4) \approx 4 \cdot 2.3 \approx 10$. 

\begin{figure}[H]
    \centering
    \includegraphics[width = 0.49\textwidth]{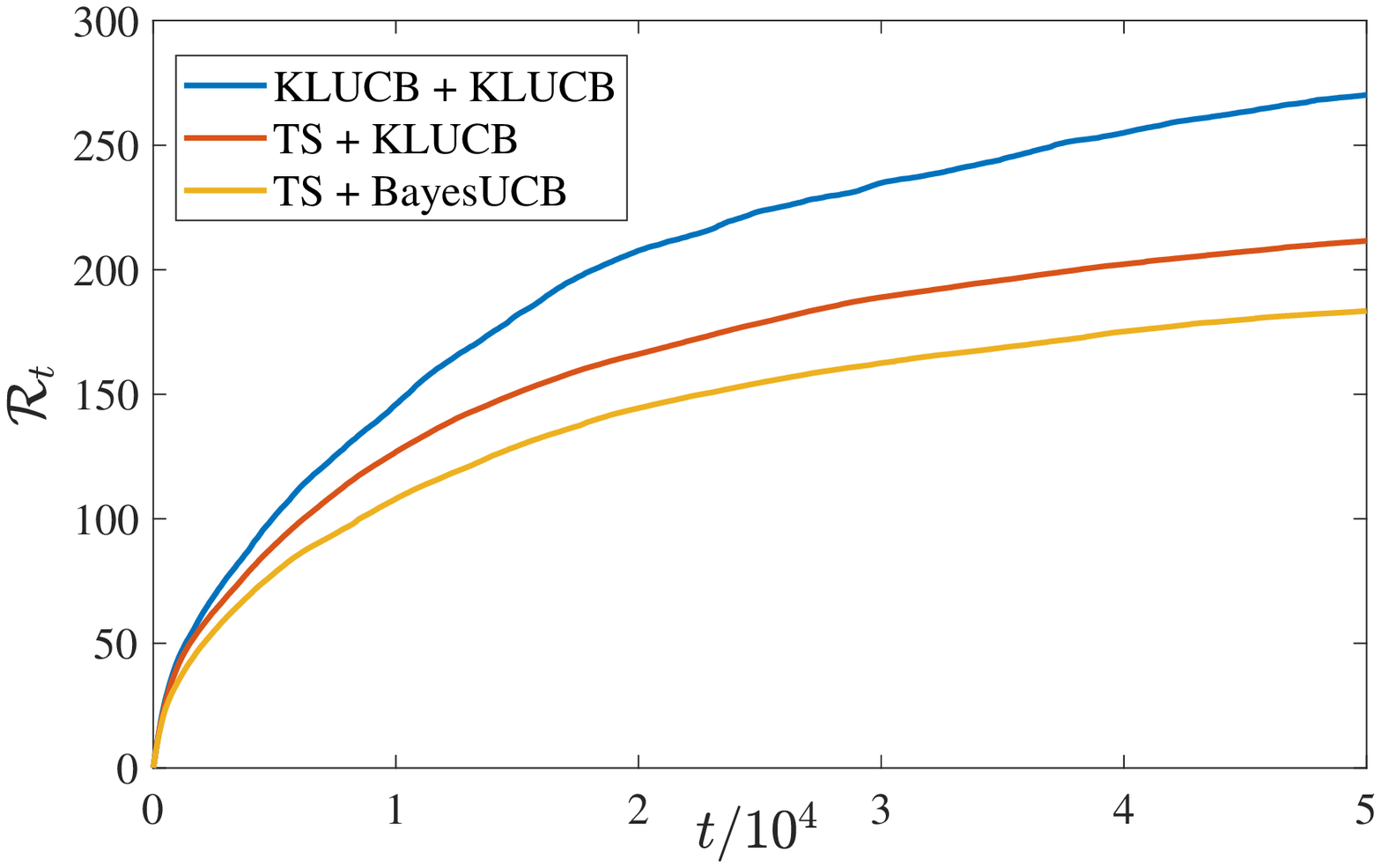} \includegraphics[width = 0.49\textwidth]{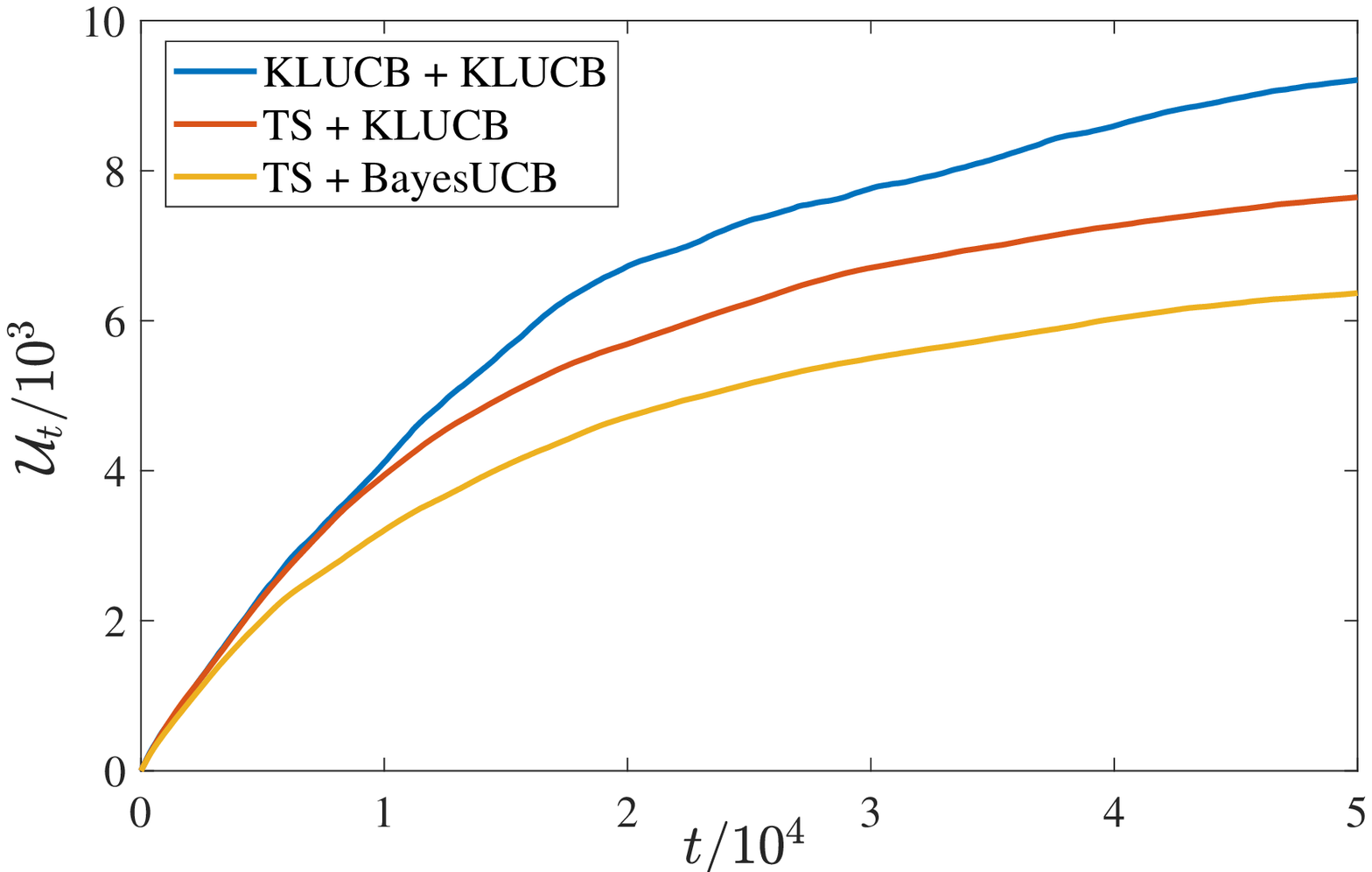}
    \includegraphics[width = 0.49\textwidth]{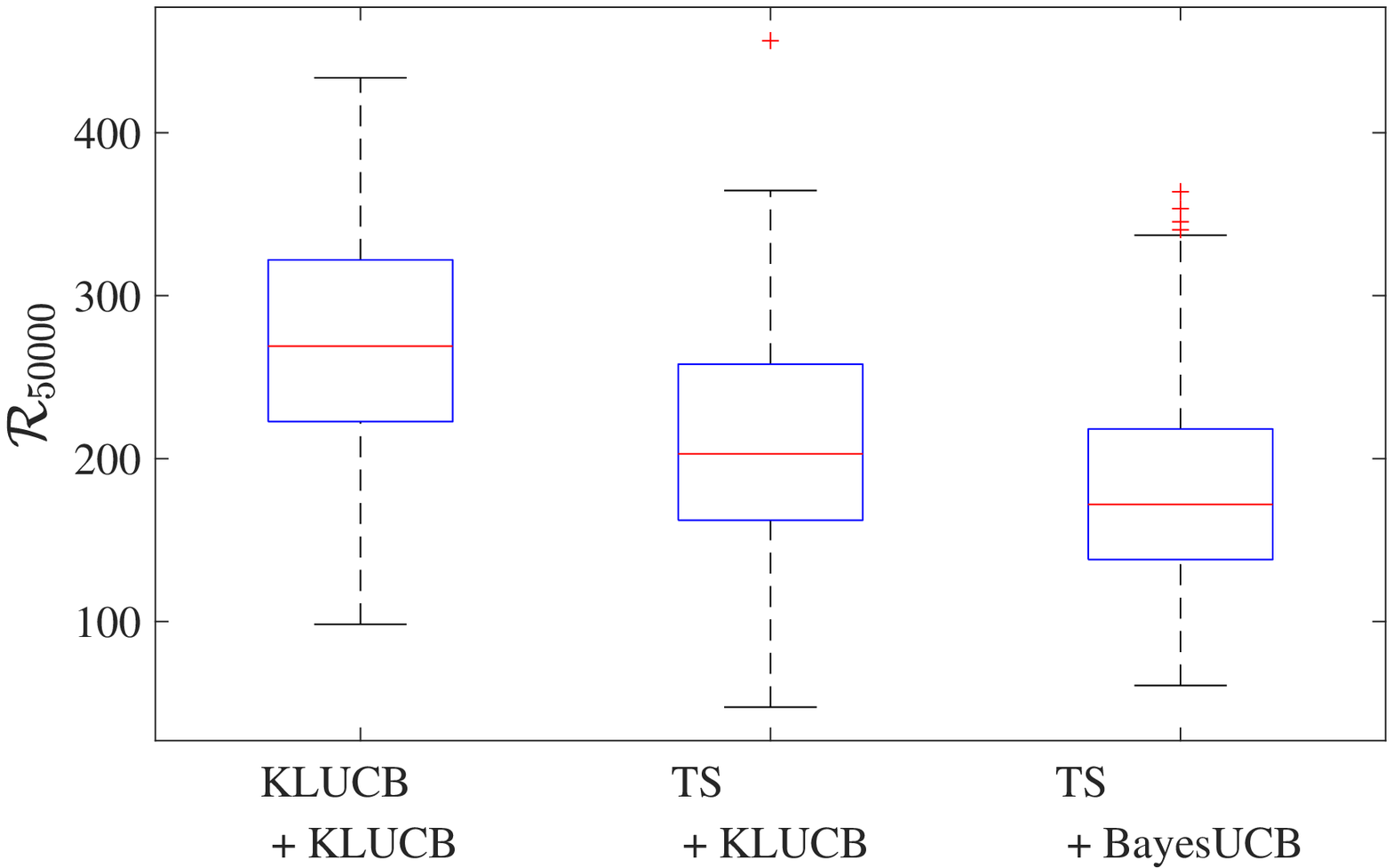}
    \includegraphics[width = 0.49\textwidth]{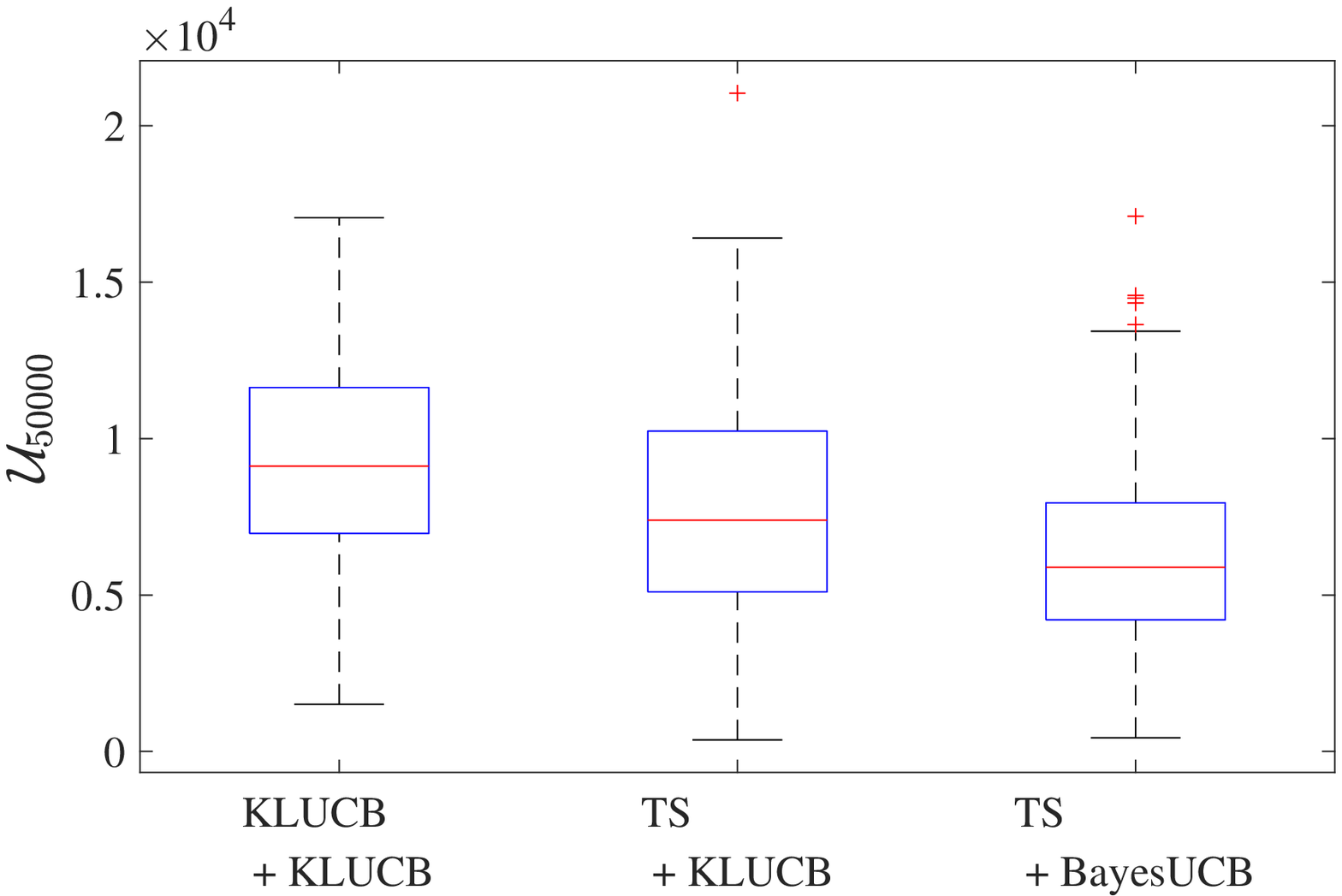}
    \caption{Top: Mean regret (left) and safety (right) violation as a function of $t$, averaged over 100 trials, for the Trial Drug data with $\alpha = 0.19$ Bottom: Box plot of the same at $T = 50K$. }
    \label{fig:alpha_0_19}
\end{figure}

\subsection{Comparing theoretically analysed \BayesUCB quantiles with the practically implemented ones}\label{appx:bayesUCB}

As observed in the main text, the simulation of \S\ref{sec:sims} all present Algorithm \ref{alg:TBU} run with the quantile schedule $\delta_t^k = 1/t$ - in actuality, we use the slightly more reasonable schedule of $\min(\alpha/2, t^{-1}),$ simply to ensure that for small $t$, all arms are declared as feasible. While this choice is consistent with the recommendation of \citet{kaufmann2012bayesUCB}, it differs from the schedule analysed theoretically in \S\ref{sec:TUB}, which instead suggsets $\delta_t^k = (\sqrt{8N_t^k} t\log^3 t)^{-1}).$ We present the behaviour of such a schedule below, although we modify it slightly to $\min(\alpha/2, (\sqrt{8 N_t^k} t)^{-1})$ - here we drop the $\log$ term as recommended by \citet{garivier2011kl}, and introduce the minimum to again ensure that for small $t$ all arms are declared to be feasible. The resulting behaviour is compared with the previously studied $1/t$ schedule in Figure \ref{fig:probing_bayesucb} on the simple data \( \mu = \nu = (0.4,0.5,0.6),  \alpha = 0.5.\) 

Observe that the theoretical schedule displays the favourable logarithmic growth, and so is consistent with Theorem \ref{thm:TS+BayesUCB}. Further, while it certainly suffers degradation relative to the $1/t$ schedule, this is limited. The reason for this degradation is largely because the theoretical lower indices $L_t^k$ are more optimistic, and allow the unsafe arm to be played for a larger number of times, as borne out in the plot of total safety violations.

\begin{figure}[htb]
    \centering
    \includegraphics[width = 0.49 \textwidth]{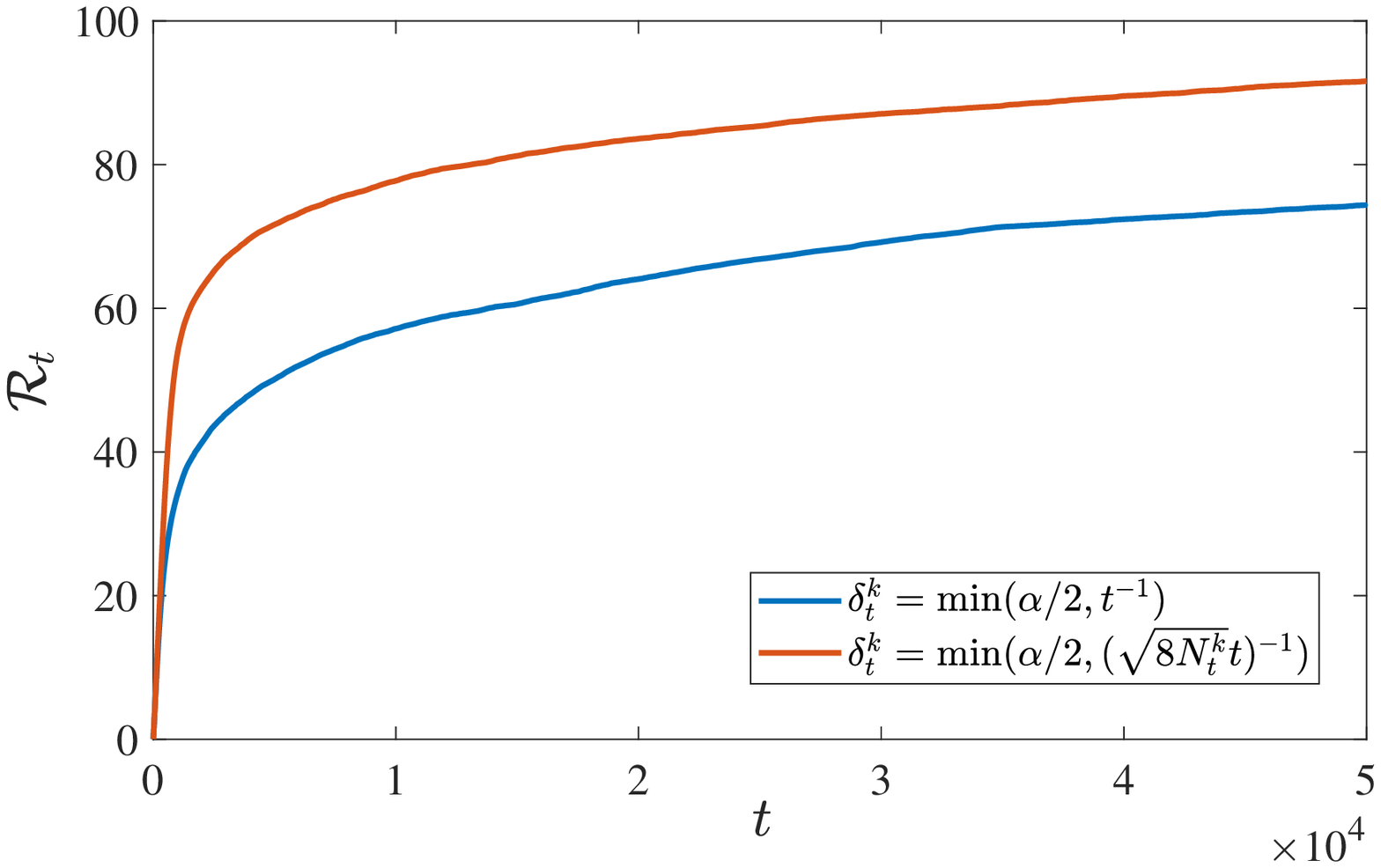}
    \includegraphics[width = 0.49\textwidth]{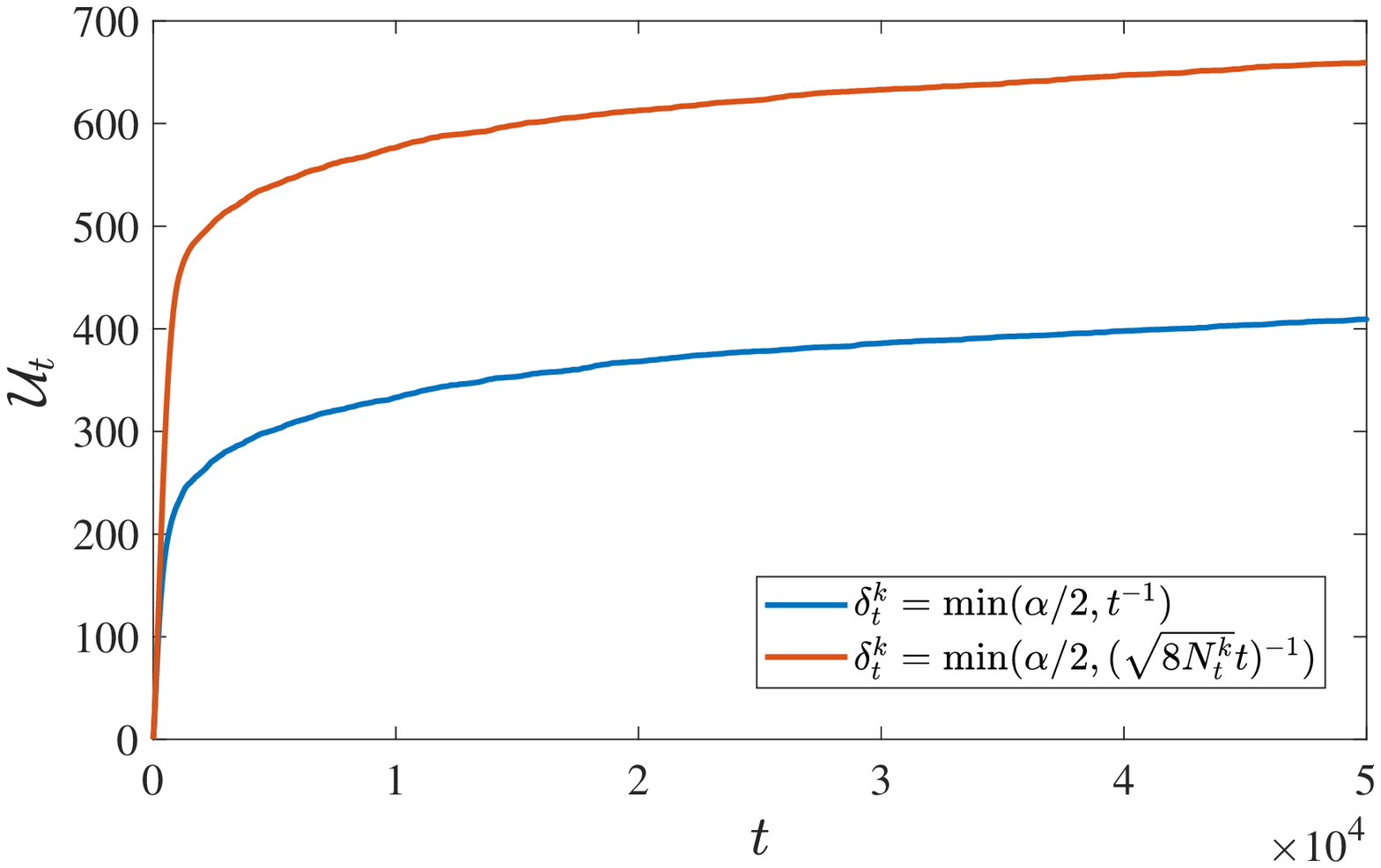}
    \caption{Regret (left) and total safety violations (right) of the theoretical and the $1/t$ schedule for Algorithm \ref{alg:TBU}. Averages over 500 trials are presented.}
    \label{fig:probing_bayesucb}
\end{figure}

Additionally, Figure \ref{fig:probing_bayesucb_boxplots} presents boxplots of the regret and net safety at $T = 50000$ for the two schedules. An interesting observation is that the schedule $1/t$ exhibits greater variability, with some (rare) but massive outliers that are not present for the theoretical schedule. Investigating this more closely requires determining high-probability bounds on these methods, which is a subject for future work.

\begin{figure}[H]
    \centering
    \includegraphics[width = 0.49\textwidth]{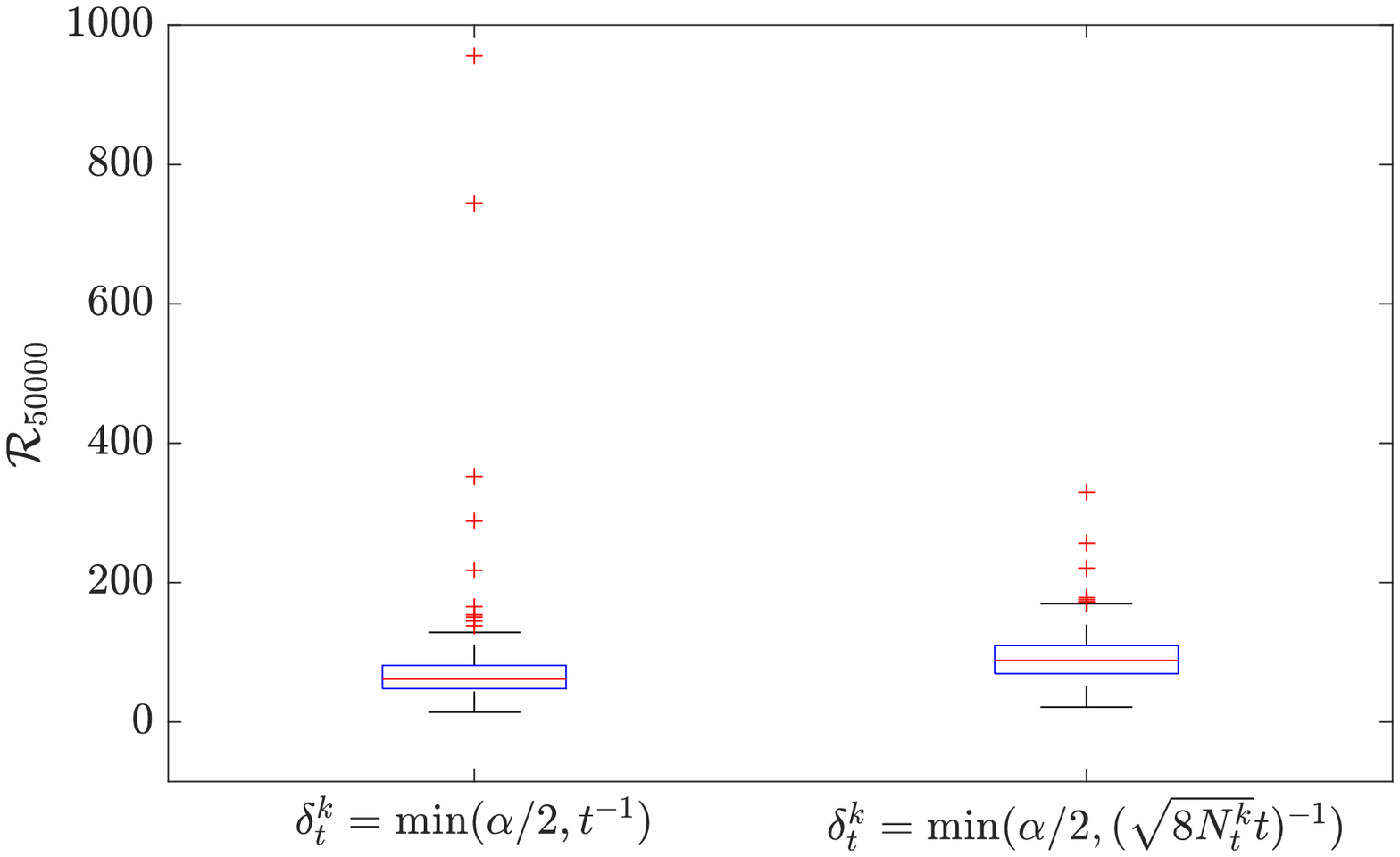}
    \includegraphics[width = 0.49\textwidth]{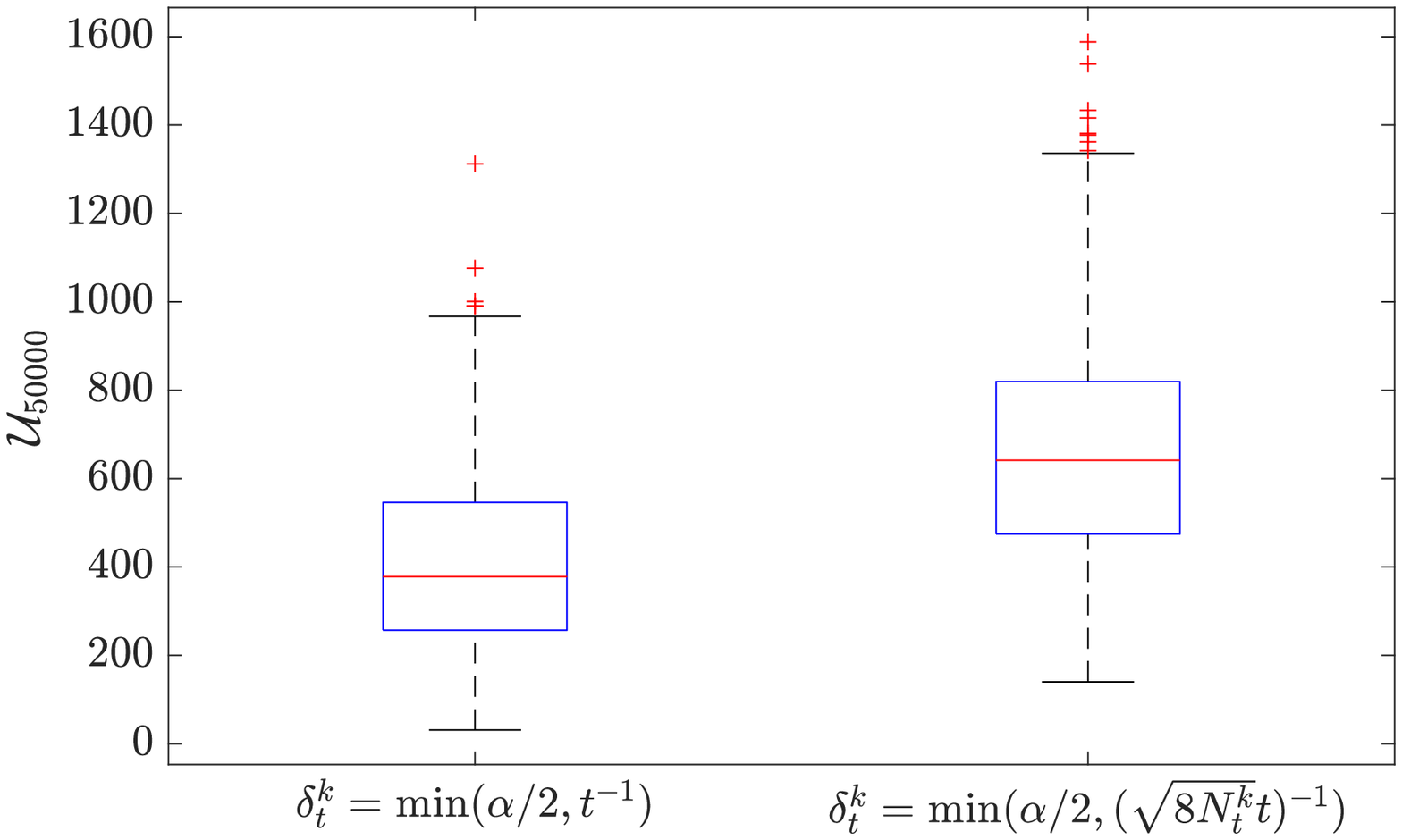}
    \caption{Boxplots across 500 trials of the regret (left) and safety violations (right) for the two schedules at $T = 50000$. One outlier for regret for the $1/t$ schedule at $\approx 2100$ has been omitted for the sake of clarity.}
    \label{fig:probing_bayesucb_boxplots}
\end{figure}

\subsection{The behaviour of a Na\"{i}ve Thompson Sampling Based Safety Index}\label{appx:exp_ts_with_no_slack}

As discussed in \S\ref{sec:TUB}, a na\"{i}ve way of constructing a safety index by just sampling $\theta_t^k \sim \mathrm{Beta}(S_t^k+1, N_t^k -S_t^k + 1)$ should be ineffective when the safety score $\nu^*$ is close to $\alpha.$ We first investigate this effect.

Concretely, the scheme is the same as Alg.\ref{alg:TBU}, except that instead of the \BayesUCB index, we construct a safety index by sampling as above, and then populate $\Perm_t = \{ k : \theta_t^k \le \alpha\}.$ We run this scheme with the data \begin{align*}
    \mu = (0.3, 0.5, 0.7),\\
    \nu = (0.3, 0.5, 0.7),
\end{align*}

and \emph{vary} $\alpha$ as $0.5 + i/50$ for $i \in [0:9]$. This corresponds to an increasing safety slack, while for $i \in [0:5],$ the safety gap of the unsafe arm $3$ remains large, but decaying. Note that this ostensibly should increase the large $t$ regret for a scheme with optimal dependence.

Figure \ref{fig:naive_ts} plots the resulting mean regrets over a horizon of length 10K for four of the 10 cases (chosen evenly to not clutter the plot too much). The data is averaged over 200 trials. Observe that for $i = 0,$ wherein the gap $(\alpha - \nu^*)$ is $0$, the regret grows linearly, while the dependence becomes sublinear as $i$ increases, and further improves, even though it should grow like $1/i$. 

\begin{figure}[htb]
    \centering
    \includegraphics[width = 0.49\textwidth]{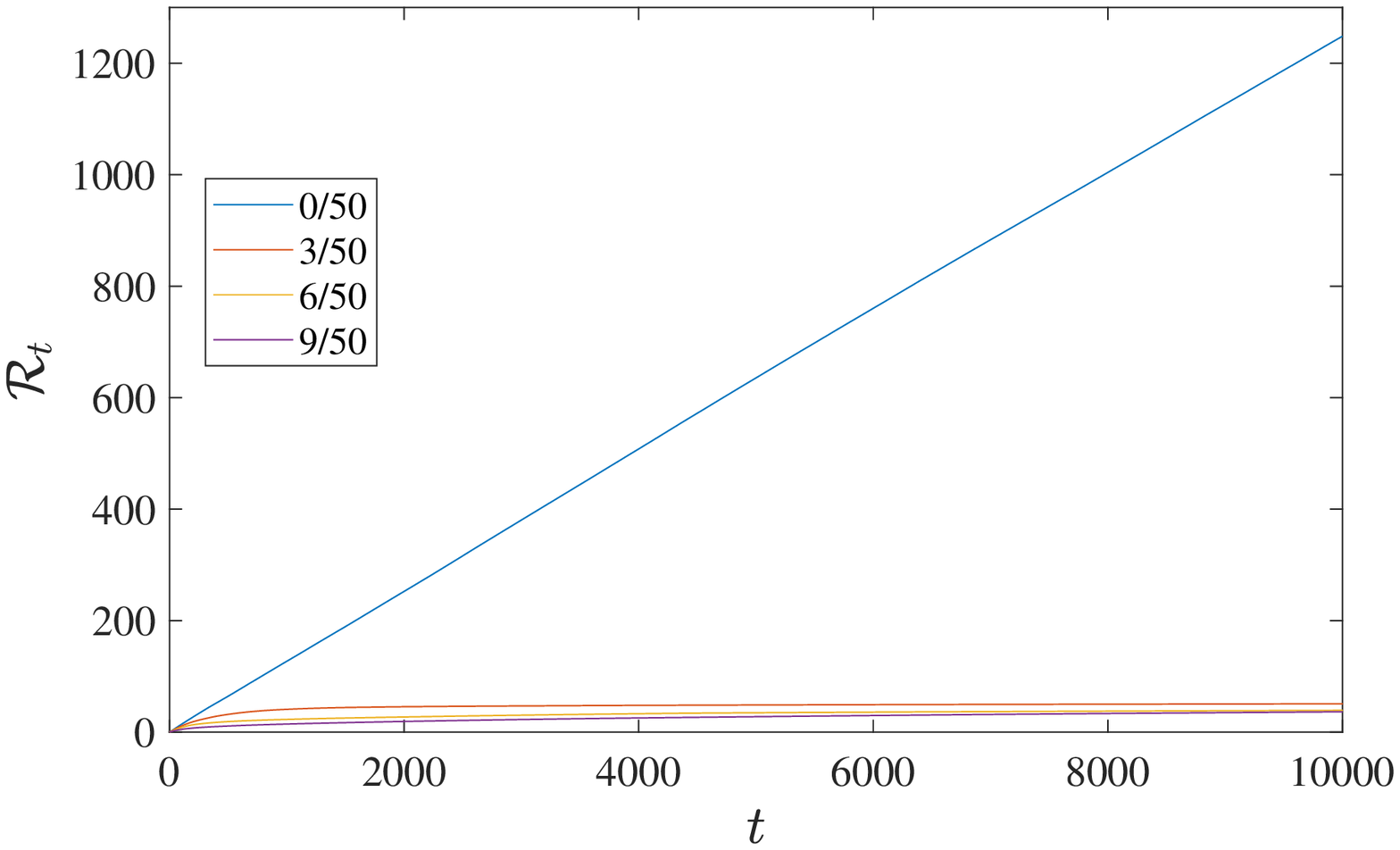}
    \includegraphics[width = 0.49\textwidth]{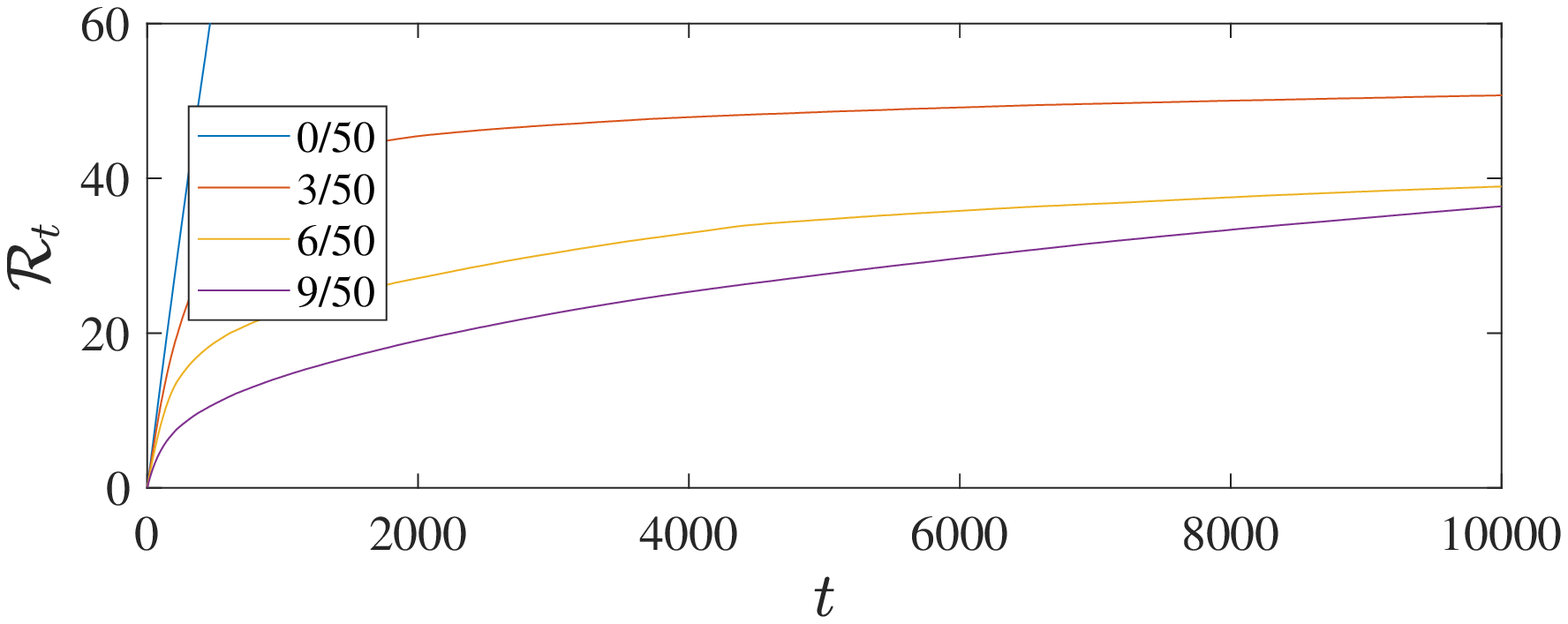}
    \caption{Regret of the scheme with a Na\"{i}ve \Thomp based safety index for various choices of $\alpha$. The legend marks $\alpha - \nu^*$. The right figure zooms in to the bottom of the left figure.}
    \label{fig:naive_ts}
\end{figure}

Further, we observe that the dependence on $\alpha - \nu^*$ scales roughly as inverse-quadratic. This is illustrated in Figure \ref{fig:dependence_on_alpha_gap}, which plots both the mean regret against $\alpha - \nu^*$, as well as the mean of $1/\sqrt(\mathcal{R}_t)$ against $\alpha-\nu^*$. The key observation is the nearly linear dependence in the second plot for small $\alpha - \nu^*$. This observation makes sense - the variance scale of a $\mathrm{Beta}(S+1, N - S+ 1)$ distribution is as $1/N,$ and so if the means $\hat{\nu}^*$ is close to the truth, then the chance of $\theta_t^k$ falling above $\alpha$ at time $t$ is roughly $1/t (\alpha - \nu^*)^2$, and so $k^* \not\in\Perm_t$ for about $\log(T)/(\alpha - \nu^*)^2 $ rounds. Of course, for large enough $\alpha - \nu^*,$ this term is dominated by the regret terms due to suboptimal arms, and the dependence is masked. This effect is further confounded in our simulation with the fact that the safety gap $\Gamma^3$ reduces as $\alpha$ is increased, which raises the regret. Nevertheless, the trend is evident, at least in the low $\alpha - \nu^*$ regime where the gap $\Gamma^3$ does not change as much, and remains much larger than $\alpha - \nu^*$.

\begin{figure}[htb]
    \centering
    \includegraphics[width = 0.49\textwidth]{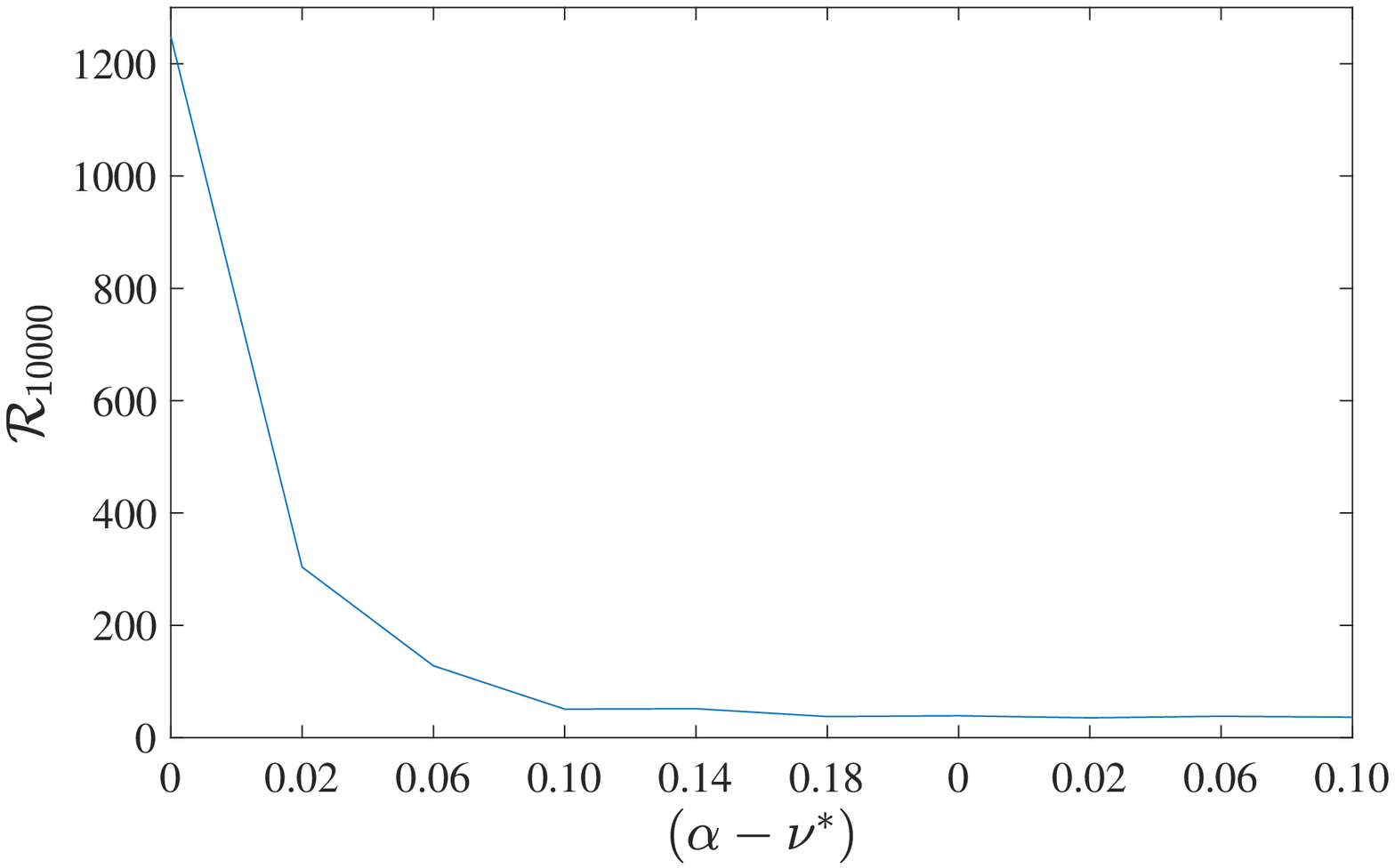}
    \includegraphics[width = 0.49\textwidth]{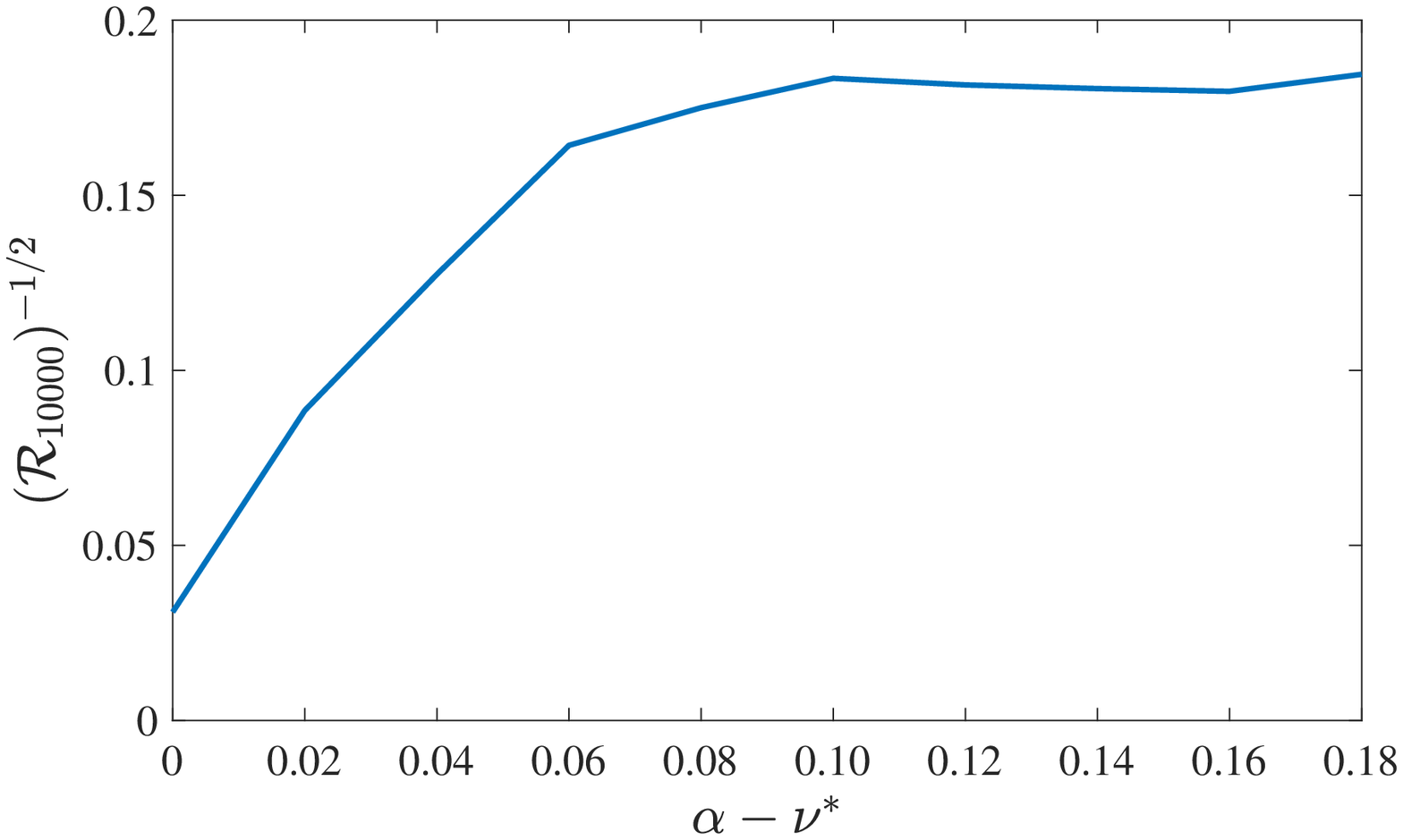}
    \caption{The mean of regret and mean of $1/\sqrt{\mathcal{R}_t}$ over 200 trials as $\alpha$ is varied, plotted against the gap of the optimal arm from the boundary, $\alpha - \nu^*.$}
    \label{fig:dependence_on_alpha_gap}
\end{figure}

Despite the ineffectiveness when $\alpha - \nu^*$ is small, a \Thomp based safety index is an attractive proposition, primarily due to wider concerns - the advantage of \Thomp for standard bandits is obtaining strong regret performance at a low computational cost, and this is specially important in cases such as combinatorial or continuously armed bandits. An alternative sampling based strategy would enable such an approach for safe bandits in such rich scenarios, and is of both practical and theoretical interest. Promisingly, when the gap is large, the effect on regret is indeed mild, showing that this is the only obstacle in the path of such a strategy. 

One natural approach to address this obstacle is to allow a slack in the safety criterion for \Thomp - we may sample $\theta_t^k$ according to the safety posterior, and then instantiate $\Pi_t = \{k: \theta_t^k \le \alpha + \varepsilon_t^k\},$ where $\varepsilon_t^k$ serves as a slack. This raises a design question of how to choose this slack. We empirically investigate the choice of slack $C \mathrm{Dev}_t^k  \sqrt{\log t},$ where $\mathrm{Dev}_t^k$ is the standard deviation of the safety posterior or arm $k$ at time $t$. This choice is natural, since this variance determines the scale of fluctuations of the score itself. Figure \ref{fig:naive_ts_safety_slack} shows the behaviour obtained as we set $C = 2^i$ for $i \in [-3:3]$ for the same data as before, but now with fixed $\alpha = 0.5.$ 

This plot, while very preliminary, shows an interesting effect in that values of $C \ge 1/2$ again result in large, linear regret. Recall that $C = 0,$ which corresponds to no slack, also gives linear regret. It is unclear how robust this effect is, but if true, this observation suggests that tuning this $C$ properly is a subtle problem, and the behaviour is quite sensitive to it, which raises an interesting challenge for further work. 

\begin{figure}[t]
    \centering
    \includegraphics[width = 0.49\textwidth]{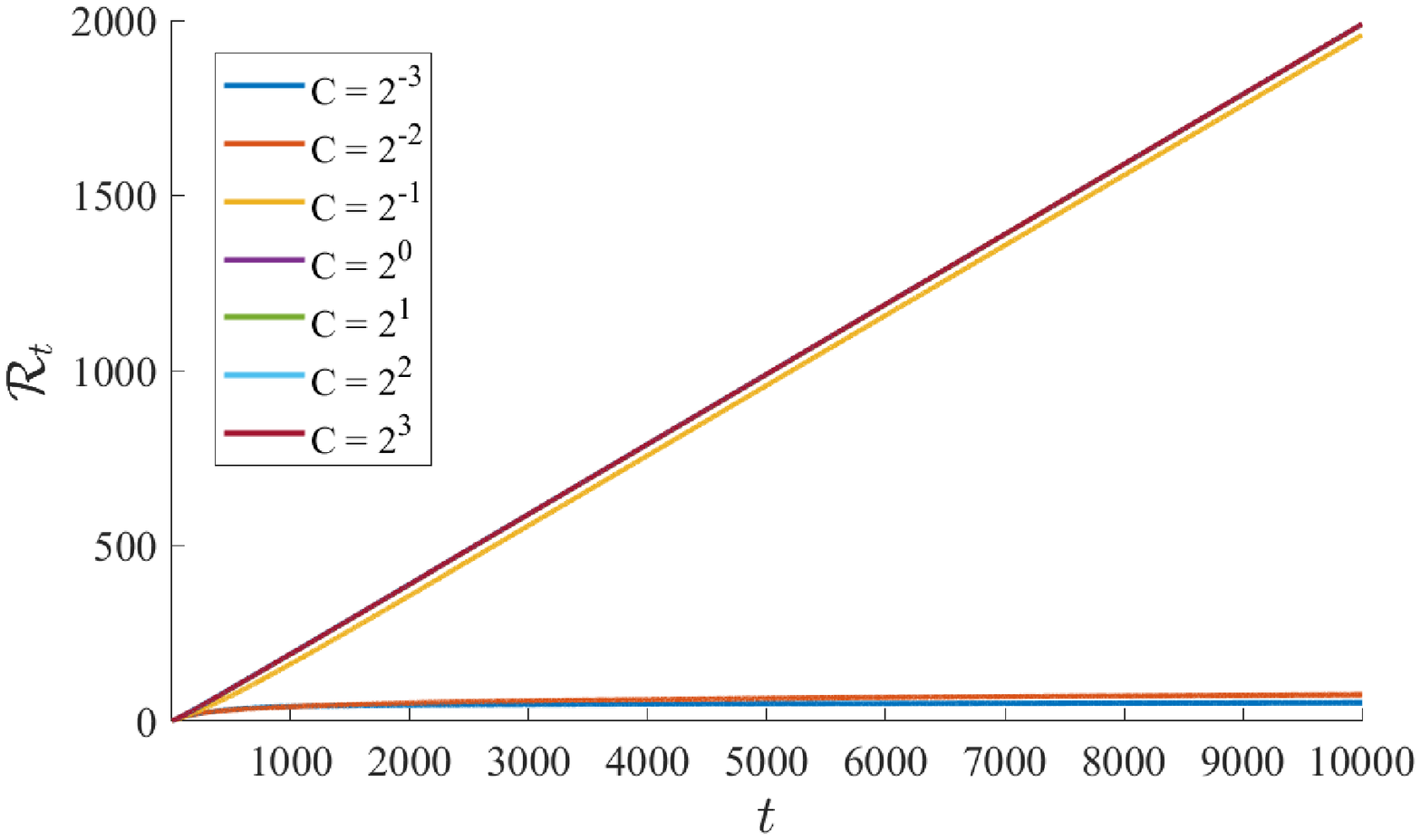}
    \includegraphics[width = 0.49\textwidth]{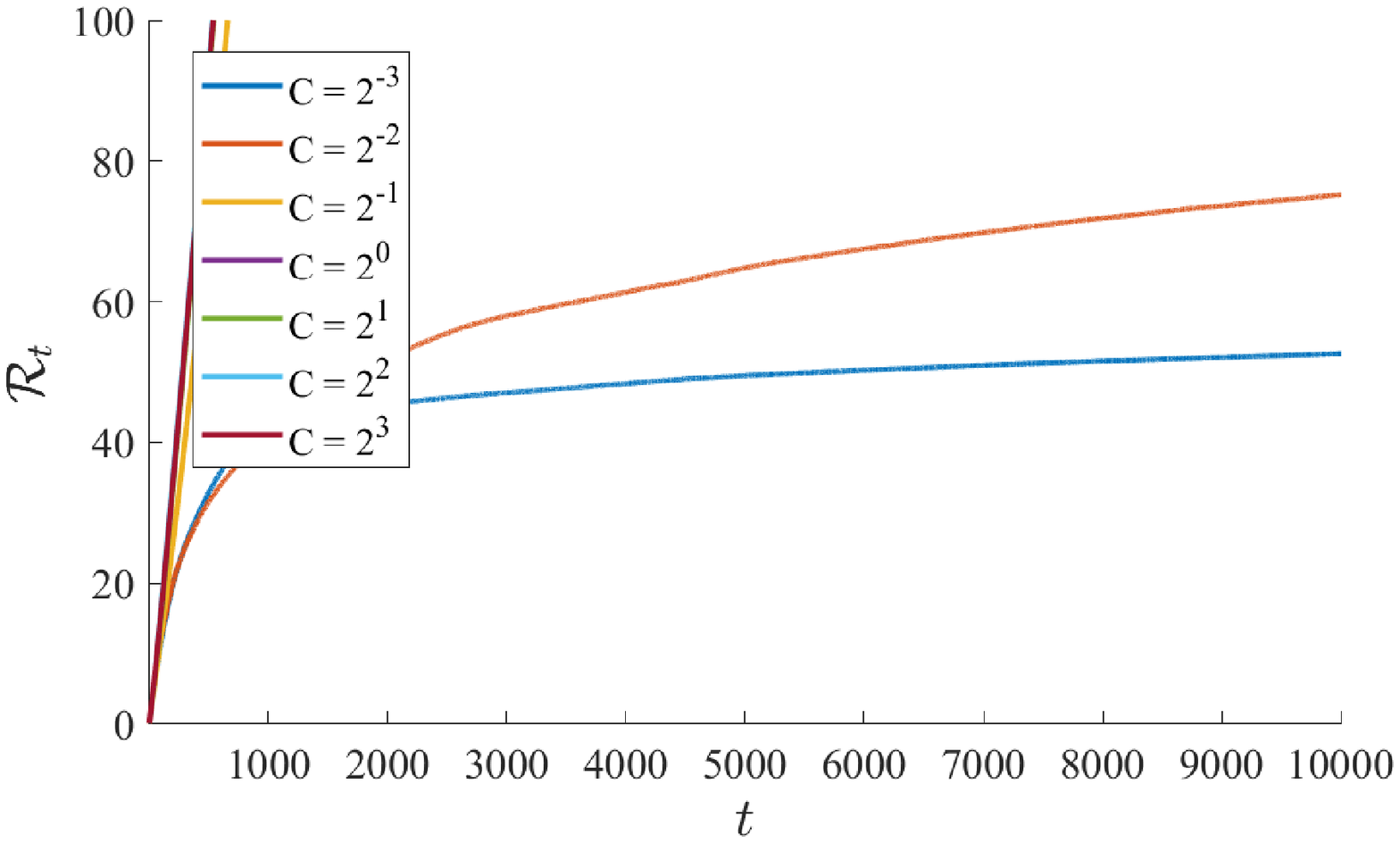}
    \caption{Regret performance as the slack factor $C$ is varied. Right zooms into the bottom half of the left plot. Merans over 200 trials are reported.}
    \label{fig:naive_ts_safety_slack}
\end{figure}

\end{document}